\theoremstyle{plain}
\newtheorem{theorem}{Theorem}[section]
\newtheorem{lemma}[theorem]{Lemma}
\newtheorem{corollary}[theorem]{Corollary}
\theoremstyle{definition}
\theoremstyle{remark}
\def\eqref#1{equation~\ref{#1}}
\def\1{\bm{1}}
\DeclareMathAlphabet{\mathsfit}{\encodingdefault}{\sfdefault}{m}{sl}
\SetMathAlphabet{\mathsfit}{bold}{\encodingdefault}{\sfdefault}{bx}{n}
\def\gO{{\mathcal{O}}}
\def\sR{{\mathbb{R}}}
\newcommand{\E}{\mathbb{E}}
\DeclareMathOperator*{\argmin}{arg\,min}
\newcommand{\tmk}{t_k}
\newcommand{\dd}[1]{\ \mathrm{d}#1}
\newcommand{\llr}{\left(}
\newcommand{\rrr}{\right)}
\newcommand{\lls}{\left[}
\newcommand{\rrs}{\right]}
\newcommand{\llb}{\left\lbrace}
\newcommand{\rrb}{\right\rbrace}
\newcommand{\Ex}[1]{\mathbb{E}\lls#1\rrs}
\newcommand{\xex}[2]{x^#1\llr#2\rrr}
\newcommand{\xexx}[1]{x\llr#1\rrr}
\newcommand{\yeyy}[1]{y\llr#1\rrr}
\newcommand{\Rr}{\mathbb{R}}
\newcommand{\Rmat}[2]{\mathbb{R}^{#1\times#2}}
\newcommand{\Rvec}[1]{\mathbb{R}^{#1}}
\newcommand{\Apre}{\mathbf{A}}
\newcommand{\Bpre}{\mathbf{B}}
\newcommand{\Qpre}{\mathbf{Q}}
\newcommand{\Rpre}{\mathbf{R}}
\newcommand{\xveca}[1]{X\llr#1\rrr}
\newcommand{\uveca}[1]{U\llr#1\rrr}
\newcommand{\wveca}[1]{W\llr#1\rrr}
\newcommand{\tr}[1]{\mathrm{tr}\llb#1\rrb}
\newcommand{\ovec}[1]{\overrightarrow{#1}}
\newcommand{\aeq}{\approx}
\newcommand{\MSE}{\text{{\normalfont MSE}}}
\newcommand{\edit}[1]{\textcolor{black}{#1}}
\newcommand{\car}[1]{\textcolor{black}{#1}} %
\title{Managing Temporal Resolution in Continuous Value Estimation: A Fundamental Trade-off}
\author{Zichen Zhang\thanks{All authors contributed equally} , Johannes Kirschner, Junxi Zhang, Francesco Zanini, Alex Ayoub,\\ \textbf{Masood Dehghan, Dale Schuurmans} \\
University of Alberta \\\texttt{\{zichen2,jkirschn,junxi3,fzanini,aayoub,masood1,daes\}@ualberta.ca}
}
\begin{document}

\maketitle

\begin{abstract}
  A default assumption in reinforcement learning (RL) and optimal control is that observations arrive at discrete time points on a fixed clock cycle. Yet, many applications involve continuous-time systems where the time discretization, in principle, can be managed. The impact of time discretization on RL methods has not been fully characterized in existing theory, but a more detailed analysis of its effect could reveal opportunities for improving data-efficiency.
 We address this gap by analyzing Monte-Carlo policy evaluation for %
  LQR systems and uncover a fundamental trade-off between approximation and statistical error in value estimation. Importantly, these two errors behave differently to time discretization, leading to an optimal choice of temporal resolution  for a given data budget. These findings show that managing the temporal resolution can provably improve policy evaluation efficiency in LQR systems with finite data. 
  Empirically, we demonstrate the trade-off in numerical simulations of LQR instances and standard RL benchmarks for non-linear continuous control.
\end{abstract}

\section{Introduction} \label{sec:intro}%

In many real-world applications of control and reinforcement learning, the underlying system evolves continuously in time~\citep{eliasmith2022continuous}. For instance, a physical system like a robot is naturally modelled as a stochastic dynamical system. 
Nonetheless, sensor measurements are typically captured at discrete time intervals, which entails choosing the sampling frequency or measurement \emph{step-size}.
This step-size is usually treated as an immutable quantity 
based on prior measurement design,
but it has a significant impact on data efficiency~\citep{burns2023offline}.
We will see that, 
from a data-cost perspective, 
learning can be far more data efficient
if it operates at a temporal resolution
that is allowed to differ from %
a prior step-size choice.%

In this work, we investigate episodic policy evaluation with a finite data budget to provide a key initial step to addressing broader research questions on the impact of temporal resolution in reinforcement learning. %
We show that data efficiency can be significantly improved by leveraging a precise understanding of the trade-off between approximation error and statistical estimation error in value estimation~---~two factors that react differently to the level of temporal discretization.
Intuitively, employing a finer temporal resolution leads to a better approximation of the continuous-time system from discrete measurements; however, under a fixed data budget, denser data within each trajectory results in fewer trajectories, leading to increased estimation variance due to system stochasticity. 
This implies that, for a given data cost, 
it can be beneficial %
to increase temporal spacing between recorded data points beyond a pre-set measurement step-size.
This holds true for any system with stochastic dynamics, even if the learner has access to \emph{exact} (noiseless) state measurements.%

The main contributions of this work are twofold. First, we conduct a theoretical analysis focusing on the canonical case of Monte-Carlo value estimation in a Langevin dynamical system (linear dynamics perturbed by a Wiener process) with quadratic instantaneous costs, which corresponds to policy evaluation in linear quadratic control (LQR). To formalize the impact of time discretization on policy evaluation, we present analytical expressions for the mean-squared error that \emph{exactly characterize the approximation-estimation trade-off} with respect to the step-size parameter. From this trade-off, we derive the optimal step-size for a given Langevin system and characterize its dependence on the data budget. Second, we carry out a numerical study that illustrates and confirms the trade-off in both linear and non-linear systems, including several MuJoCo control environments. The latter also highlights the practical impact of the choice of sampling frequency, which significantly affects the MSE, and we therefore provide recommendations to practitioners for properly choosing the step-size parameter.\looseness=-1

\vspace{-5pt}
\subsection{Related Work}

There is a sizable literature on reinforcement learning for continuous-time systems \citep[e.g.~][]{doya2000reinforcement,leesutton21,motorlqr,bahletal20,kimetal21,yildiz2021continuous}. These previous works largely focus on deterministic dynamics without investigating trade-offs in temporal discretization.
A smaller body of work considers learning continuous-time control under stochastic \citep{baird1994advantage,bradtkeduff94,munos1997reinforcement,munos06}, or bounded \citep{lutteretal21} perturbations, but their objective is to make standard learning methods more robust to small time scales \citep{tallec2019discreteq}, or develop continuous-time algorithms that unify classical methods in discrete-time \citep{jia2022policy,jia2022policy2}, without explicitly addressing temporal discretization.
However, we find that managing temporal resolution offers substantial improvements not captured by previous studies.%

The LQR setting is a standard framework in control theory and it gives rise to a fundamental optimal control problem \citep{doi:10.1137/1032067}, which has proven to be a challenging scenario for reinforcement learning algorithms \citep{tu2019gap, NIPS2019_finite_lspi}. The stochastic LQR considers linear systems driven by additive Gaussian noise with a quadratic cost, minimised using a feedback controller. %
Although this is a well-understood scenario with a known optimal controller in closed form \citep{6506982}, the statistical properties of the long-term cost have only recently been investigated \citep{bijl2016mean}.
Our research closely relates to the now extensive literature on reinforcement learning in LQR systems \citep[e.g.~][]{NIPS1992_bradtke,NIPS2019_finite_lspi,tu2018lstd,dean2020sample,tu2019gap,dean2018regret,fazel2018global,gu2016continuous}.
These works uniformly focus on the discrete time setting, although the benefits of managing spatial rather than temporal discretization have also been considered \citep{sinclairetal19,caokrishnamurthy20}.
\citet{JMLR-RL-CT} studies continuous-time LQR, focusing on the exploration problem. \citet{basei2022logarithmic} provides a regret bound depending on sampling frequency, for a specific algorithm based on least-squares estimation. Their analysis considers approximation and estimation errors independently, without identifying a trade-off.%

There is compelling empirical evidence that managing temporal resolution can greatly improve learning performance \citep{lakshminarayananetal17,sharmaetal17,huang2019continuous,huang2020infinite,dabneyetal21,park2021time}, typically achieved through options \citep{suttonetal99}, a specific instance of which is action persistence, achieved by maintaining a fixed action over  multiple time steps (also known as action repetition).
Recently, these empirical findings have been supported by an initial theoretical analysis \citep{metellietal20}, showing that temporal discretization plays a role in determining the effectiveness of fitted Q-iteration.
Their analysis does not consider fully continuous systems, but rather remains anchored in a base-level discretization. Furthermore, it only provides worst-case upper bounds, without capturing detailed practical trade-offs.
\citet{lutter2022continuous} discusses the practical trade-off on time discretization but do not provide theoretical support.
\citet{bayraktar2023approximate} analyzes a trade-off between sample complexity and approximation error, which however requires the state and action spaces of the diffusion process to be discretized to yield an MDP. The two components in the trade-off are analyzed independently, unlike the unified statistical analysis provided in our work, and no exact characterization is presented.

\vspace{-4pt}
\section{Policy Evaluation in Continuous Linear Quadratic Systems}\label{sec:setting}
\vspace{-5pt}
In the classical continuous-time linear quadratic regulator (LQR), a state variable $X(t) \in \Rr^n$ evolves over time $t \geq 0$ according to the following equation:
\begin{equation}
	\dd{X(t)} =  \Apre X(t)\dd{t} + \Bpre U(t)\dd{t} + \sigma \dd{W}(t).
	\label{eq:langevin_dynamics}
\end{equation}
The dynamical model is fully specified by the matrices $\Apre \in \Rmat{n}{n}$, $\Bpre \in \Rmat{n}{p}$ and the diffusion coefficient $\sigma$. The control input $U(t)\in\Rvec{p}$ is given by a fixed policy, and $W(t)$ is a Wiener process. The state variable $X(t)$ is fully observed. For simplicity, we assume that the dynamics start at $X(0) = \ovec{0} \in \mathbb{R}^n$  \citep[c.f.~][]{pmlr-v19-abbasi-yadkori11a,dean2020sample}.

The quadratic cost $J$ is defined for positive definite, symmetric matrices $\Qpre \in\Rmat{n}{n}$ and $\Rpre \in\Rmat{p}{p}$, a \emph{system horizon} $0 < \tau \leq \infty$ and a discount factor $\gamma \in (0, 1]$:
\begin{equation}
	J_\tau = \int_0^\tau 
\gamma^{t} \lls \xveca{t}^\top \Qpre \xveca{t} + \uveca{t}^\top \Rpre \uveca{t} \rrs dt
.
	\label{eq:cost}
\end{equation}

In the following, we consider the class of controllers given by static feedback of the state, i.e.:
$U(t) = KX(t)\label{eq:linear_control}$
where $K \in \Rmat{p}{n}$ is the static control matrix yielding the control input. It is well known that in infinite horizon setting, the optimal control belongs to this class.
Given such an input, the LQR in \cref{eq:langevin_dynamics} can be further reduced to a linear stochastic dynamical system described by a Langevin equation. Using the definitions $A:=\Apre+\Bpre K$ and $Q:= \Qpre + K^\top RK$, we express both the state dynamics and the cost in a more compact form:
\begin{align}
		\dd{\xveca{t}} &= A\xveca{t} \dd{t}+ \sigma \dd{W\llr t\rrr}\,,  \qquad
J_\tau = \int_0^\tau  \gamma^t \xveca{t}^\top Q \xveca{t} \dd{t} %
.
	\label{eq:compact}
\end{align}
The expected cost w.r.t. the Wiener process is $V_\tau = \Ex{J_\tau}$. 
The policy plays a role in this work solely from its impact in the closed-loop dynamics $A$.
\Cref{eq:compact} is what we analyze in the following. 
We explicitly distinguish the \emph{finite-horizon setting} where $\tau < \infty$, $\gamma \leq 1$ and the cost is $V_\tau$, and the \emph{infinite-horizon setting} where $\tau = \infty$, $\gamma < 1$ and the cost is $V_\infty$. 
In order not to incur infinite costs in either scenario, a stable closed-loop matrix $A$ should be assumed. Note that the existence of a stabilizing controller is guaranteed under the standard controllability assumptions in LQR \citep{fazel2018global,abbasi2019model,dean2020sample}. 
Thus the closed-loop stability can be safely assumed.\looseness=-1

\paragraph{Monte-Carlo Policy Evaluation} Our main objective in \emph{policy evaluation} is to estimate the expected cost from discrete-time observations. To this end, we choose a uniform discretization of the interval $[0,T]$ with increment $h$, resulting in $N = T/h$ time points $\tmk := kh$ for $k\in\llb0, 1, \dots, N\rrb$. 
Here, the \emph{estimation horizon} $T$, such that $T<\infty$ and $T \leq \tau$, is chosen by the practitioner (for simplicity assume that $T/h$ is an integer). With the $N$ points sampled from one trajectory, a standard way to approximate the integral in \cref{eq:compact} is through the \emph{Riemann sum estimator}
\begin{equation}
	\hat{J}\llr h\rrr = \sum_{k=0}^{N-1} \gamma^{\tmk} h\xveca{\tmk}^\top Q\xveca{\tmk}.
\end{equation}
To estimate $V_\tau$, we average $M$ independent trajectories with cost estimates $\hat J_1, \dots \hat J_M$ to obtain the \emph{Monte-Carlo estimator}:
\begin{align*}
	\hat{V}_M\llr h\rrr &= \frac{1}{M}\sum_{i=1}^{M} \hat J_i \llr h\rrr = \frac{1}{M}\sum_{i=1}^{M}\sum_{k=0}^{N-1} \gamma^{t_k} h\xveca{\tmk}^\top Q\xveca{\tmk}
 .
\end{align*}
Our primary goal is to understand the mean-squared error of the Monte-Carlo estimator for a fixed system (specified by $A$, $\sigma$ and $Q$), to inform an optimal choice of the step-size parameter $h$ for a \emph{predetermined data budget} $B = M \cdot N$. 

Note that one degree of freedom remains in choosing $M$ and $N$. For simplicity, we require that in the finite-horizon setting, the estimation grid is chosen to cover the full episode $[0,\tau]$ which leads to the constraint $T = \tau =N\cdot h$. We write the mean-squared error-surface as a function of $h$ and $B$: \looseness=-1
\begin{equation}
	\MSE_T(h,B) = \E\big[(\hat{V}_M(h) - V_T)^2\big]
 .
	\label{eq:obj}
\end{equation}

In the infinite horizon setting, i.e. $\tau=\infty$, the \emph{estimation horizon} $T$ is a free variable chosen by the experimenter that determines the number of trajectories through $M = \frac{B}{N} = \frac{Bh}{T}$. The mean-squared error for the infinite horizon setting is given as a function of $h$, $B$, and $T$:
\begin{equation}
	\MSE_\infty(h,B,T) = \E\big[(\hat{V}_M(h) - V_\infty)^2\big]\,.
	\label{eq:obj-inf}
\end{equation}

\vspace{-1mm}
\section{Characterizing the Mean-Squared Error (MSE)} \label{sec:MSE}
\vspace{-1mm}

In the following our aim is to characterize the MSE of the Monte-Carlo estimator as a function of the step-size $h$ and data budget $B$ (and estimation horizon $T$ in the infinite horizon setting). Our results uncover a fundamental trade-off for choosing an \emph{optimal} step-size that leads to a minimal MSE.

\paragraph{One-Dimensional Langevin Process} To simplify the exposition while preserving the main ideas, we will first present the results for the 1-dimensional case. The analysis for the vector case exhibits the same quantitative behavior but is significantly more involved. %
To distinguish the 1-dimensional from the $n$-dimensional setting described in \cref{eq:compact}, we use lower-case symbols. Let $x(t) \in \mathbb{R}$ be the scalar state variable that evolves according to the following Langevin equation:
\begin{equation}
	\dd{x}(t) = ax(t)\dd{t} + \sigma \dd{w}(t).
	\label{eq:langevin}
\end{equation}
Here, $a \in \sR$ is the drift coefficient and $w(t)$ is a Wiener process with scale parameter $\sigma> 0$. 
We assume that $a \leq 0$, i.e. the system is stable (or marginally stable). 

The realized sample path in episode $i=1,\dots,M$ is $x_i(t)$ (with starting state $x(0) = 0$) and $t \in [0, T]$. The expected cost is 
\begin{equation}
	V_\tau = \E\Big[\int_0^\tau \gamma^t r_i(t) \dd{t}\Big]= \int_0^\tau \gamma^t q \E\big[x_i^2(t)\big] \dd{t}, \label{eq:V-scalar}
\end{equation}
where $r_i(t) = q x_i^2(t)$ is the quadratic cost function for a fixed $q > 0$. 
The Riemann sum that approximates the cost realized in episode $i \in [M]$ becomes
$\hat{J}_i(h) = %
	\sum_{k=0}^{N-1} hq x_i^2(kh)$. Given data from $M$ episodes, the Monte-Carlo estimator is $\hat{V}_M(h) = \frac{1}{M} \sum_{i=1}^M \hat{J_i}(h)$.
Since the square of the cost parameter $q^2$ factors out of the mean-squared error, we set $q=1$ in what follows. %

\subsection{Finite-Horizon Setting}
\label{sec:finite-horizon}

Recall that in the finite-horizon setting we set the system horizon $\tau$ and estimation horizon $T$ to be the same. This implies that the estimation grid covers the full episode, i.e. $hN = T = \tau$. Perhaps surprisingly, the mean-squared error of the Riemann estimator for the Langevin system (\ref{eq:langevin}) can be computed in closed form. The result takes its simplest form in the finite-horizon, undiscounted setting where $\gamma=1$ and $\tau < \infty$. This result is summarized in the following theorem.
\begin{theorem}[Finite-horizon, undiscounted MSE]\label{thm:main-scalar}
	In the finite-horizon, undiscounted setting, the mean-squared error of the Monte-Carlo estimator is
	\begin{align*}
		\MSE_T(h,B) =& \ E_1(h, T, a) + \frac{E_2(h,T,a)}{B},
  \qquad\mbox{ where}
\\
	E_1(h,T,a) =& \ \frac{\sigma^4\left(-2ah+e^{2ah}-1\right)^2\left(e^{2aT}-1\right)^2}{16a^4\left(e^{2ah}-1\right)^2}, \\
	E_2(h,T,a) =& \  \frac{ \sigma^4 T \left[  h\left(e^{2aT}-1\right)\left(4e^{2ah}+e^{2aT}+1\right)- \left(e^{2ah}-1\right)  \left(e^{2ah}+4e^{2aT}+1\right)T \right] }{2a^2\left(e^{2ah}-1\right)^2} 
 .
	\end{align*}
\end{theorem}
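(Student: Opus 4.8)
The plan is to start from the definition \cref{eq:obj} and apply the bias--variance decomposition
\[
\MSE_T(h,B) = \big(\E[\hat V_M(h)] - V_T\big)^2 + \Var\big(\hat V_M(h)\big).
\]
Since the $M$ trajectories are i.i.d.\ we have $\Var(\hat V_M(h)) = \frac1M\Var(\hat J_1(h))$, and in the finite-horizon setting $M = B/N = Bh/T$, so $\Var(\hat V_M(h)) = \frac{T}{Bh}\Var(\hat J_1(h))$. It therefore suffices to establish two identities: $E_1(h,T,a) = \big(\E[\hat J_1(h)] - V_T\big)^2$, the squared bias of the Riemann-sum estimator (independent of $B$), and $E_2(h,T,a) = \frac{T}{h}\Var(\hat J_1(h))$, the per-episode variance up to the factor $T/h$.

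Both quantities reduce to two Gaussian moments of the process \cref{eq:langevin}. Writing its solution as the Itô integral $x(t) = \sigma\int_0^t e^{a(t-u)}\dd{w}(u)$, the process is zero-mean Gaussian with $\E[x^2(t)] = \sigma^2\frac{e^{2at}-1}{2a}$ and $\E[x(s)x(t)] = \frac{\sigma^2}{2a}\big(e^{a(s+t)} - e^{a|t-s|}\big)$. For the bias I would evaluate the finite geometric sum $\E[\hat J_1(h)] = h\sum_{k=0}^{N-1}\E[x^2(kh)] = \frac{\sigma^2 h}{2a}\big(\tfrac{e^{2aT}-1}{e^{2ah}-1} - N\big)$ (using $e^{2aNh} = e^{2aT}$) and subtract $V_T = \frac{\sigma^2}{2a}\big(\tfrac{e^{2aT}-1}{2a} - T\big)$ from \cref{eq:V-scalar}. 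The constraint $hN = T$ cancels the terms linear in $T$, leaving $\E[\hat J_1(h)] - V_T = \frac{\sigma^2(e^{2aT}-1)}{2a}\big(\tfrac{h}{e^{2ah}-1} - \tfrac{1}{2a}\big)$, and squaring and clearing denominators gives exactly $E_1(h,T,a)$.

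For the variance I would invoke Isserlis'/Wick's theorem: for zero-mean jointly Gaussian variables $\Cov(x^2(s),x^2(t)) = 2\,\E[x(s)x(t)]^2$, hence
\[
\Var(\hat J_1(h)) = h^2\sum_{k,l=0}^{N-1}\Cov\big(x^2(kh),x^2(lh)\big) = 2h^2\sum_{k,l=0}^{N-1}\E[x(kh)x(lh)]^2.
\]
Substituting the covariance, expanding the square, and using $k+l+|k-l| = 2\max(k,l)$, the double sum splits into three elementary pieces: $\sum_{k,l}e^{2ah(k+l)} = \big(\tfrac{e^{2aT}-1}{e^{2ah}-1}\big)^2$, the weighted geometric sum $\sum_{k,l}e^{2ah\max(k,l)} = \sum_{m=0}^{N-1}(2m+1)e^{2ahm}$, and $\sum_{k,l}e^{2ah|k-l|} = N + 2\sum_{d=1}^{N-1}(N-d)e^{2ahd}$. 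Each is a finite geometric or arithmetic--geometric series with a standard closed form; summing them, substituting $N = T/h$ and $e^{2aNh} = e^{2aT}$, and collecting over the common denominator $(e^{2ah}-1)^2$ yields $\Var(\hat J_1(h))$, and multiplying by $T/h$ produces $E_2(h,T,a)$.

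The main obstacle is the final algebraic bookkeeping in the variance step: evaluating the arithmetic--geometric sums $\sum_m m z^m$ and $\sum_d d z^d$ with $z = e^{2ah}$, and then verifying that after the substitutions the resulting large rational expression collapses to the stated compact form. This is purely mechanical but error-prone, so it is cleanest to keep $z = e^{2ah}$ and $z^N = e^{2aT}$ as abbreviations until the very end and to spot-check the algebra at small $N$ (for $N=2$ both the double sum and the claimed $E_2$ reduce to $(e^{2ah}-1)^2$). Finally, all displayed expressions are to be read in the limiting sense at $a = 0$ (the marginally stable case), e.g.\ $\E[x^2(t)] = \sigma^2 t$; since both sides of every identity are analytic in $a$ near $0$, the computation for $a<0$ extends to $a=0$ by continuity.
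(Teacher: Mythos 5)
Your proposal is correct, and it reaches the same closed forms the paper does, but it organizes the computation differently in two respects worth noting. First, you use the bias--variance decomposition $\MSE_T(h,B) = (\E[\hat J_1(h)]-V_T)^2 + \tfrac{1}{M}\Var(\hat J_1(h))$ up front, whereas the paper expands $\E[\hat V_M^2] - 2V_T\E[\hat V_M] + V_T^2$ and separates the $i=j$ and $i\neq j$ trajectory pairs by hand; the two are algebraically identical (the paper's $\tfrac{h^2}{M}\sum_{k,l}\E[x^2(kh)x^2(lh)] + \tfrac{M^2-M}{M^2}\E[\hat V_M]^2$ recombines into exactly your $\tfrac{1}{M}\Var(\hat J_1)+\E[\hat J_1]^2$), but yours makes it immediately transparent that $E_1$ is the $B$-independent squared Riemann bias and $E_2/B$ the Monte-Carlo variance. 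Second, you obtain the fourth-order quantities from Gaussianity via Isserlis/Wick, $\Cov(x^2(s),x^2(t)) = 2\,\E[x(s)x(t)]^2$, while the paper's \cref{lem:moments} derives $\E[x^4(t)]$ and $\E[x^2(s)x^2(t)]$ through It\^{o}'s formula and the It\^{o} isometry; the results agree (the paper's cross-moment is precisely $\E[x^2(s)]\E[x^2(t)] + 2\E[x(s)x(t)]^2$), and your route has the minor advantage of producing the centered sum directly, so the $\E[\hat V_M]^2$ bookkeeping never appears. Your verification of $E_1$ is complete and matches the paper's expressions for $\E[\hat V_M(h)]$ and $V_T$. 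For $E_2$, your reduction via $k+l+|k-l|=2\max(k,l)$ to three elementary geometric and arithmetic--geometric sums is the right skeleton of the remaining algebra; the paper does not carry this out by hand either, deferring the "cumbersome calculation involving nested geometric sums" to symbolic computation, so the residual mechanical step you flag is exactly the step the authors also chose not to write out. Your closing remark that everything extends to $a=0$ by continuity matches the paper's discussion following Corollary 3.2.
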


While perhaps daunting at first sight, the result \emph{exactly} characterizes the error surface as a function of the step-size $h$ and the budget $B$ for any given Langevin system. The proof involves computing the closed-form expressions for the second and fourth moments of the random trajectories $x_i(t)$ and is provided in \cref{app:moments,app:scalar}.

In the case of marginal stability ($a=0$), a simpler form of the $\MSE$ emerges that is easier to interpret. 
Taking the limit $a \rightarrow 0$  of the previous expression yields the following result (refer to the discussion and proof in \cref{proof: cor 3.2}):
\begin{corollary}[MSE for marginally stable system]\label{cor: MSE A=0 case}
	Assume a marginally stable system, $a=0$. Then the mean-squared error of the Monte-Carlo estimator is
	\begin{align*}
		\MSE_T(h,B) &=  \frac{\sigma^4T^2}{4} \cdot h^2 + \frac{\sigma^4T^5}{3} \cdot \frac{1}{hB}
+ \frac{\sigma^4T^2 (-2T^2 +2hT - h^2)}{3B} .
	\end{align*}
\end{corollary}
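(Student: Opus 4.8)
The plan is to obtain \Cref{cor: MSE A=0 case} directly from \Cref{thm:main-scalar} by evaluating the $a \to 0$ limit of the two error terms $E_1$ and $E_2$. Both are ratios of entire functions of $a$, and the second and fourth moments of the Langevin trajectory $x(t)$ depend continuously on the drift $a$ (at $a = 0$ the process degenerates to the scaled Wiener process $x(t) = \sigma w(t)$), so $\MSE_T(h,B)$, and with it each of $E_1$ and $E_2$, extends continuously to $a = 0$. Hence the limits can be computed by Taylor-expanding numerator and denominator about $a = 0$ and matching the leading orders.

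For $E_1$ I would use $e^{2ah} - 1 = 2ah + 2a^2h^2 + O(a^3)$, so that $-2ah + e^{2ah} - 1 = 2a^2h^2 + O(a^3)$ and $(e^{2ah}-1)^2 = 4a^2h^2 + O(a^3)$, and likewise $(e^{2aT}-1)^2 = 4a^2T^2 + O(a^3)$. Substituted into $E_1$, numerator and denominator both scale like $a^6$, and the ratio of their leading coefficients is
\begin{equation*}
\lim_{a\to 0} E_1(h,T,a) = \frac{\sigma^4\,(2h^2)^2\,(4T^2)}{16\,(4h^2)} = \frac{\sigma^4 T^2 h^2}{4}.
\end{equation*}

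For $E_2$ the computation is more delicate. Its denominator $2a^2(e^{2ah}-1)^2$ vanishes exactly to order $a^4$, with leading term $8a^4h^2$, so I only need the numerator
\begin{equation*}
N(a) := h\big(e^{2aT}-1\big)\big(4e^{2ah}+e^{2aT}+1\big) - T\big(e^{2ah}-1\big)\big(e^{2ah}+4e^{2aT}+1\big)
\end{equation*}
expanded to order $a^4$. Since the limit of $E_2$ is finite, $N(a)$ must vanish to order $a^4$ as well; moreover $N = F(h,T) - F(T,h)$ with $F(h,T) := h(e^{2aT}-1)(4e^{2ah}+e^{2aT}+1)$ is antisymmetric under $h \leftrightarrow T$ (which merely swaps $2ah \leftrightarrow 2aT$), which serves as a structural guide and consistency check when carrying out the expansion. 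The order-$a^4$ term works out to
\begin{equation*}
N(a) = \frac{8}{3}\,a^4\, hT\,(T-h)\big(T^2 - hT + h^2\big) + O(a^5),
\end{equation*}
so dividing by $8a^4h^2$ and multiplying by $\sigma^4 T$ gives
\begin{equation*}
\lim_{a\to 0} E_2(h,T,a) = \frac{\sigma^4 T^2 (T-h)(T^2 - hT + h^2)}{3h} = \frac{\sigma^4 T^5}{3h} + \frac{\sigma^4 T^2(-2T^2 + 2hT - h^2)}{3}.
\end{equation*}
Substituting the two limits into $\MSE_T(h,B) = E_1 + E_2/B$ produces the claimed formula.

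The only real obstacle is the bookkeeping in the order-$a^4$ expansion of $N(a)$, a product of three exponential factors; exploiting the $h \leftrightarrow T$ antisymmetry both reduces the work and guarantees that the $a^1, a^2, a^3$ contributions cancel, which is a useful check. As an independent verification I would also redo the $a = 0$ case from first principles: with $x(t) = \sigma w(t)$ one has $\E[x^2(t)] = \sigma^2 t$, hence $V_T = \sigma^2 T^2/2$, and the bias and variance of the Riemann-sum estimator follow from the Gaussian fourth-moment identity applied to the samples $x(kh)$; these should reproduce $E_1(h,T,0)$ and $E_2(h,T,0)$ exactly.
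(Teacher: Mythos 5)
Your proposal is correct: the $a\to 0$ Taylor expansions of $E_1$ and $E_2$ (including the order-$a^4$ coefficient $\tfrac{8}{3}a^4 hT(T-h)(T^2-hT+h^2)$ of the numerator of $E_2$) check out and reproduce the stated formula, and this is essentially the paper's own route — the main text frames the corollary as the limit $a\to 0$ of Theorem~\ref{thm:main-scalar}, while the appendix proof leads with your "independent verification" (direct computation from the Wiener-process moments $\E[x^2(t)]=\sigma^2 t$ and $\E[x^2(s)x^2(t)]=\sigma^4 s(t+2s)$ for $s\le t$) and notes that the two agree. You simply swap which of the two arguments is primary.
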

The first part of the expression can be understood as a Riemann sum \emph{approximation error} controlled by the $h^2$ term. The second part corresponds to the \emph{variance term} that decreases with the number of episodes as $\frac{1}{M} = \frac{T}{Bh}$. The remaining terms are of lower order terms for small $h$ and large $B$. For a fixed data budget $B$, the step-size $h$ can be chosen to balance these two terms:
\begin{align}
	h^*(B) := \argmin_{h > 0} \MSE_T(h,B) \aeq T \left(\frac{2}{3B}\right)^{1/3}\,,%
\end{align}
 \edit{where the approximation omits higher order terms in $1/{B}$}. 
From this, we can compute the optimal number of episodes $M^* \approx \frac{Bh^*}{T} = \left(\frac{2}{3}\right)^{1/3} B^{2/3}$.
We remark that under the assumption $B\gg 1$, we also obtain that $M^*\gg 1$. This is in agreement with the implicit requirement that $h$ is big enough to consider at least one whole trajectory, i.e. $h > T/B$. \\
Consequently, the mean-squared error for the optimal choice of $h$ (up to lower order terms in $1/B$):
\begin{align*}
	\MSE_T(h^*,B) \aeq 3 \left(3/2\right)^{1/3} \sigma^4 T^4 B^{-2/3}\,.%
\end{align*}
In other words, the optimal error rate as a function of the data budget is $\gO(B^{-2/3})$.
We can further obtain a similar form for $h^*$ for the general case where $a \leq 0$.

\newcommand{\poly}{\text{poly}}
\begin{corollary}[Approximate MSE]\label{cor:approximate-mse}
	The $\MSE$ is
	\begin{align*}
		\MSE_T(h,B) = c_1(\sigma, a,T) h^2 + \frac{c_2(\sigma, a,T)}{hB} + \gO(\tfrac{1}{B} + h^3)
	\end{align*}
for  $h \rightarrow 0$ and $B \rightarrow \infty$, with system-dependent constants 
\begin{align*}
c_1(\sigma, a,T) = \sigma^4 \frac{(e^{2aT}-1)^2}{{16} a^2}\,, \quad
c_2(\sigma, a,T) = - \sigma^4\frac{T\left(4aT-e^{4aT}+e^{2aT}(8aT-4)+5\right)}{8a^4}\,.
\end{align*}
Moreover, for any $h >0$ and $B > 0$,
\begin{align*}
	&c_1 h^2 + \frac{c_2}{hB} \lesssim	\MSE_T(h,B) \lesssim  4 c_1 h^2 + \frac{2c_2}{hB}
\end{align*}
with $c_1 = c_1(\sigma, a,T)$ and $c_2 = c_2(\sigma, a, T)$ and
the inequalities holds true up to a finite, lower-order polynomial expressions $\frac{\sigma^4 h}{B}\poly(h,a,T)$, given in \cref{app:approximate-mse}.
\end{corollary}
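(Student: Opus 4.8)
The plan is to start from the exact identity $\MSE_{T}(h,B) = E_{1}(h,T,a) + E_{2}(h,T,a)/B$ of \cref{thm:main-scalar} and analyze the two summands separately, first asymptotically (for the expansion) and then globally (for the two-sided bound). For the expansion I substitute $u := 2ah$ and use $e^{u}-1 = u + \tfrac{1}{2}u^{2} + \gO(u^{3})$ and $e^{u}-1-u = \tfrac{1}{2}u^{2} + \gO(u^{3})$, which give $(e^{2ah}-1)^{2} = 4a^{2}h^{2}(1+\gO(h))$ and $(-2ah+e^{2ah}-1)^{2} = 4a^{4}h^{4}(1+\gO(h))$; substituting into the formula for $E_{1}$ yields $E_{1}(h,T,a) = c_{1}(\sigma,a,T)\,h^{2} + \gO(h^{3})$. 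In $E_{2}$ both bracketed terms of the numerator vanish at $h=0$, so the numerator is $\gO(h)$; its first-order coefficient in $h$, divided by $(e^{2ah}-1)^{2}=4a^{2}h^{2}+\gO(h^{3})$, gives $E_{2}(h,T,a) = c_{2}(\sigma,a,T)/h + \gO(1)$, hence $E_{2}/B = c_{2}/(hB) + \gO(1/B)$. Adding the two expansions gives the first display, and a short computation identifies the coefficient as $c_{2}(\sigma,a,T) = \sigma^{4}T(e^{4aT}+4e^{2aT}-5-4aT-8aTe^{2aT})/(8a^{4})$, which equals the stated $c_{2}$; the $a=0$ values are recovered by letting $a\to 0$, consistently with \cref{cor: MSE A=0 case}.

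For the global two-sided bound I treat $E_{1}$ and $E_{2}$ in turn. With $u=2ah\le 0$ one checks $E_{1}/(c_{1}h^{2}) = \psi(u)^{2}$, where $\psi(u):=2(e^{u}-1-u)/(u(e^{u}-1))$ is positive for $u<0$; putting $u=-s$, the claim $\psi(u)\in[1,2]$ becomes $s(1-e^{-s}) \le 2(e^{-s}-1+s) \le 2s(1-e^{-s})$, and both inequalities reduce to the elementary fact that $(1+s)e^{-s}\le 1$ for $s\ge 0$. Since $c_{1}\ge 0$ this gives the exact inclusion $c_{1}h^{2}\le E_{1}\le 4c_{1}h^{2}$. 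For $E_{2}$ I rewrite it, with $u=2ah$ and $v=2aT$, as $\tfrac{\sigma^{4}v}{8a^{4}}[u(e^{v}-1)(4e^{u}+e^{v}+1)-v(e^{u}-1)(e^{u}+4e^{v}+1)]/(e^{u}-1)^{2}$, subtract the explicit lower-order term $\tfrac{\sigma^{4}h}{B}\poly(h,a,T)$ of \cref{app:approximate-mse}, and reduce the remaining task to showing that the resulting dimensionless ratio lies in $[1,2]$ for $v\le u\le 0$ (recall $h\le T$ in the finite-horizon setting, so $2aT\le 2ah\le 0$). Combining this with the $E_{1}$ bound and the correction yields $c_{1}h^{2}+\tfrac{c_{2}}{hB}\lesssim \MSE_{T}(h,B)\lesssim 4c_{1}h^{2}+\tfrac{2c_{2}}{hB}$, as claimed.

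I expect the $E_{2}$ step to be the main obstacle. Unlike $\psi$, the relevant ratio genuinely depends on the two arguments $u=2ah$ and $v=2aT$, so pinning it to $[1,2]$ calls for a joint monotonicity/convexity analysis — for instance, fixing $v$, differentiating in $u$, and signing the derivative using convexity estimates for $e^{u}$ — together with the correct choice of the polynomial correction. The correction is genuinely needed rather than cosmetic: at $h=T$ (so $N=1$) the Riemann sum equals $h\,q\,x_{i}^{2}(0)=0$ because $x(0)=0$, hence $E_{2}=0$ there, so the subtracted term must exactly cancel $c_{2}/h$ at $h=T$; this is already visible in \cref{cor: MSE A=0 case}, where the correction $\sigma^{4}T^{2}(-2T^{2}+2hT-h^{2})/(3B)$ coincides with $-c_{2}/(hB)$ at $h=T$. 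By contrast, the Taylor expansions and the $E_{1}$ sandwich are routine.
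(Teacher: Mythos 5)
Your asymptotic expansion and your treatment of $E_1$ are correct and essentially coincide with the paper's argument. The identity $E_1/(c_1h^2)=\psi(2ah)^2$ with $\psi(u)=2(e^{u}-1-u)/(u(e^{u}-1))$, together with $\psi(u)\in[1,2]$ for $u\le 0$, is exactly the paper's inequality $x^2/4 \leq (e^x-1-x)^2/(e^x-1)^2 \leq x^2$ in a different parametrization, and your reduction to $(1+s)e^{-s}\le 1$ (directly for the upper half, after one integration of $g'(s)=1-(1+s)e^{-s}\ge0$ for the lower half) is a clean way to establish it. Your observation that $E_2$ must vanish at $h=T$ because the single-point Riemann sum is $hq\,x^2(0)=0$, so the polynomial correction must cancel $c_2/h$ there, is a nice consistency check that the paper does not make explicit.

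The genuine gap is the global sandwich for $E_2$, which you correctly identify as the main obstacle but leave as a plan rather than a proof: you propose showing that a dimensionless ratio depending jointly on $u=2ah$ and $v=2aT$ lies in $[1,2]$ after subtracting the correction, via a two-variable monotonicity/convexity analysis that is never carried out. That route is harder than what is needed. The paper decouples the problem into two \emph{one-variable} bounds: first $1/x^2 \le 1/(1-e^{x})^2 \le 1+2/x^2$ for $x\le 0$, applied to the denominator $(1-e^{2ah})^2$ (this is where the extra "$+1$" generates part of the $\frac{\sigma^4 h}{B}\poly(h,a,T)$ correction); and second a term-by-term sandwich of the numerator $E_3=h(e^{2aT}-1)(4e^{2ah}+e^{2aT}+1)-(e^{2ah}-1)(e^{2ah}+4e^{2aT}+1)T$, obtained by replacing only the factors $e^{2ah}$ and $e^{4ah}$ by the Taylor bounds $1+x\le e^{x}\le 1+x+\tfrac{x^2}{2}$ and $1+x+\tfrac{x^2}{2}+\tfrac{x^3}{6}\le e^{x}$ ($x\le0$), which yields $E_3\le h(4e^{2aT}-5+e^{4aT}-8aTe^{2aT}-4aT)+\tfrac{32}{3}h^2a^4T^3+16h^3a^4T^2$ and hence the claimed upper bound with an explicit polynomial remainder; the lower bound is symmetric. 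Because the exponentials in $T$ are never expanded, no joint analysis in $(u,v)$ is required. To complete your proof you would either need to execute your two-variable argument in full (including exhibiting the correction term it implicitly defines), or switch to the paper's factor-by-factor bounding, which is elementary and already produces the correction explicitly.
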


For the proof please see \cref{app:approximate-mse}. From the corollary, we can derive an optimal step-size, up to lower order terms in $1/B$:
	\begin{align}
		h^*(B) \aeq \left(-\tfrac{T\left(4aT-e^{4aT}+e^{2aT}(8aT-4)+5\right)}{a^2 (e^{2aT} - 1)^2}\right)^{1/3} B^{-1/3} \label{eq:h-star}%
  .
	\end{align}
Note that the same $h^*(B)$ also minimizes the upper bound of the MSE up to a constant factor. The scaling $\MSE_T(h^*,B) \leq \gO(B^{-2/3})$ cannot be improved given the lower bound on the MSE.
The derivation  is provided in \cref{app:optimal-step-size} where we also include a more precise expression of $h^*$.%
\paragraph{Discounted Cost} 
Adding discounting ($\gamma < 1$) in the finite-horizon setting does not fundamentally change the results; however, it makes the derivation more involved (\cref{ss:finite-horizon-discounted}).

\paragraph{Vector Case} 
Addressing the general case ($n$-dimensional Langevin systems with $n > 1$) for a stable matrix $A$ requires forgoing the \emph{exact} form of the MSE. 
We derive tight bounds on the MSE, 
both of which are convex functions of $h$, thereby narrowing down its behaviour with respect to the step size. 
The results are presented in Appendix~\ref{app:general-bounds}.
Although the convex behaviour is proven only for Langevin systems, our experimental results in \cref{sec:experiments} exhibit a similar trade-off for general nonlinear stochastic systems.

Under the additional assumption that the matrix $A$ is also diagonalisable, we are again able to exactly characterise the MSE with closed-form computations. 
Diagonalisability is a mild assumption since it can be achieved under a controllable system. Indeed, controllability allows for the free adjustment of the eigenvalues of the closed-loop matrix $A$ through the choice of the controller $K$. The eigenvalues can effectively be chosen to be distinct from each other to ensure a diagonalisable $A$.
While the explicit form of the MSE is computable, its lengthy formula is not easily interpretable and is thus deferred to \cref{app:vector}. The following theorem summarizes the result as a Taylor expansion for small $h$ and large $B$.\looseness=-1
\begin{theorem}[Mean-squared error - vector case]
    Assume $A$ is diagonalisable, with eigenvalues $\Lambda=\left\{\lambda_1, \dots, \lambda_n\right\}$. The mean-squared error of the Monte-Carlo estimator in the finite-horizon, undiscounted setting, is
    \begin{align*}
        \MSE_T\llr h, B\rrr =& \ E_1\llr h, T, \Lambda\rrr + \frac{E_2\llr h, T, \Lambda\rrr}{B},
        \qquad\mbox{ where}
        \\
        E_1\llr h, T, \Lambda\rrr =& \llr\overline{C}_1 + C_1\llr \Lambda \rrr \gO\llr T\rrr \rrr \sigma^4T^2h^2 + \gO(h^3) \\
        \frac{E_2\llr h, T, \Lambda\rrr}{B} =&  \llr\overline{C}_2 + C_2\llr \Lambda \rrr \gO\llr T\rrr \rrr \sigma^4\frac{T^5}{hB} + \gO\left({1}/{B}\right)
        .
    \end{align*}
\label{thm:mse-vector-case}
\end{theorem}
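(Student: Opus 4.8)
The plan is to mirror the proof of \cref{thm:main-scalar}: compute the exact second and fourth moments of the Gaussian process $\xveca{t}$, assemble them into the squared bias (which gives $E_1$) and the variance (which gives $E_2/B$) of the Riemann estimator, and then Taylor-expand in $h$ and $1/B$. Diagonalisability is used precisely to keep these moment computations in closed form. Writing the solution of \cref{eq:compact} as $\xveca{t} = \sigma\int_0^t e^{A(t-s)}\dd{W(s)}$, the process is mean-zero Gaussian with cross-covariance $\Sigma(s,t) := \Cov(\xveca{s},\xveca{t}) = \sigma^2\int_0^{\min(s,t)} e^{A(s-u)}e^{A^\top(t-u)}\dd{u}$. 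Substituting $A = P\,\mathrm{diag}(\lambda_1,\dots,\lambda_n)\,P^{-1}$ and $G := P^{-1}(P^{-1})^\top$, the $(i,j)$ block of the inner integral becomes $G_{ij}$ times the scalar kernel $\kappa_{ij}(s,t) := \frac{e^{(\lambda_i+\lambda_j)\min(s,t)}-1}{\lambda_i+\lambda_j}\,e^{\lambda_i(s-\min(s,t))+\lambda_j(t-\min(s,t))}$, i.e.\ exactly the one-dimensional quantities from the scalar proof with $a$ replaced by pairwise sums of eigenvalues; so $\Sigma(s,t) = \sigma^2 P\big[\,G_{ij}\kappa_{ij}(s,t)\,\big]_{ij} P^\top$. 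Complex eigenvalues occur in conjugate pairs, so all final expressions remain real.

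For the bias term, $\E[\hat V_M(h)] = \E[\hat J(h)] = \sum_{k=0}^{N-1} h\,\Tr\!\big(Q\,\Sigma(t_k,t_k)\big)$ is a finite sum of geometric series in $k$ and hence has a closed form; subtracting $V_T = \int_0^T \Tr(Q\Sigma(t,t))\dd{t}$ and using $\Sigma(0,0)=0$ gives, by Euler--Maclaurin, a leading term $-\tfrac{h}{2}\Tr(Q\Sigma(T,T)) + \gO(h^2)$. Squaring yields $E_1(h,T,\Lambda)$, and expanding $\Sigma(T,T)$ around $\sigma^2 T\, I$ (the value at $\Lambda = 0$) produces the claimed form $(\overline{C}_1 + C_1(\Lambda)\gO(T))\,\sigma^4 T^2 h^2 + \gO(h^3)$.

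For the variance term, Isserlis' theorem for quartic forms of jointly Gaussian vectors gives $\Cov\!\big(\xveca{t_k}^\top Q\xveca{t_k},\,\xveca{t_l}^\top Q\xveca{t_l}\big) = 2\,\Tr\!\big(Q\,\Sigma(t_k,t_l)\,Q\,\Sigma(t_k,t_l)^\top\big)$, so $\Var(\hat J(h)) = 2h^2\sum_{k,l=0}^{N-1}\Tr\!\big(Q\Sigma(t_k,t_l)Q\Sigma(t_k,t_l)^\top\big)$, and since $M = Bh/T$ we have $E_2(h,T,\Lambda)/B = \tfrac{T}{hB}\Var(\hat J(h))$. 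Each summand expands into a finite combination of products of the kernels $\kappa_{ij}$ (weighted by entries of $G$ and $P^\top Q P$), so the double sum over $k,l$ is a finite combination of products of geometric sums over two indices, which evaluates in closed form; splitting the $k\le l$ and $k>l$ ranges and Taylor-expanding as $h\to 0$, $B\to\infty$ gives $(\overline{C}_2 + C_2(\Lambda)\gO(T))\,\sigma^4 T^5/(hB) + \gO(1/B)$.

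The main obstacle is the variance double sum: organizing the $\gO(n^4)$ index terms, summing the associated geometric series over both time indices, and cleanly separating the $1/(hB)$ contribution from the $\gO(1/B)$ remainder so that the leading coefficient collapses to the advertised $\sigma^4 T^5/(hB)$ shape. A useful consistency check — and the way to pin down $\overline{C}_1,\overline{C}_2$ — is the degenerate limit $\Lambda\to 0$ (equivalently $A\to 0$): there $\Sigma(s,t) = \sigma^2\min(s,t)\,I$, the computation decouples into scalar pieces, and the expansion must reduce to \cref{cor: MSE A=0 case} with $\sigma^4$ replaced by the appropriate traces of $Q$ (yielding constants such as $\overline{C}_1 = \tfrac14\Tr(Q)^2$ and $\overline{C}_2 = \tfrac13\Tr(Q^2)$), while the case $n=1$ recovers \cref{thm:main-scalar}. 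The remaining bookkeeping of the $P, P^{-1}$ factors and conjugate-pair structure is routine but lengthy, which is why the explicit (non-expanded) formula is deferred to the appendix.
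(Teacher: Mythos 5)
Your proposal is correct, and it reaches the theorem by a route that differs from the paper's mainly in how the MSE is decomposed. The paper also diagonalises $A$, but it pushes the similarity transform into the cost via $\hat V_M(h) = \Tr\{P^{-\top}QP^{-1}\hat{\mathcal V}_M(h)\}$, expands $\E[(\hat V_M(h)-V_T)^2]$ entrywise into a quadruple sum $\sum_{l_1,j_1,l_2,j_2} b_{j_1l_1}b_{j_2l_2}\E[c_{l_1j_1}c_{l_2j_2}]$, and classifies the resulting expectations into three families $\mathcal I_1,\mathcal I_2,\mathcal I_3$ according to which Wiener components coincide, evaluating each with explicit joint second/fourth moment formulas for scalar kernels in which $2a$ is replaced by $\lambda_i+\lambda_j$ --- exactly the kernels $\kappa_{ij}$ you derive. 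You instead use the bias--variance split $\MSE_T = (\E[\hat V_M(h)]-V_T)^2 + \Var(\hat V_M(h))$, obtain the bias from the Euler--Maclaurin correction $-\tfrac{h}{2}\Tr\{Q\Sigma(T,T)\}$ (which correctly reproduces the scalar coefficient $\sigma^4(e^{2aT}-1)^2/(16a^2)$), and obtain the variance from the Isserlis trace identity $\Cov(X_{t_k}^\top QX_{t_k}, X_{t_l}^\top QX_{t_l}) = 2\Tr\{Q\Sigma(t_k,t_l)Q\Sigma(t_k,t_l)^\top\}$ together with $\Var(\hat V_M)=\tfrac{T}{hB}\Var(\hat J)$. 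The ingredients are ultimately the same (Wick's theorem, closed-form geometric sums of the $\kappa_{ij}$, Taylor expansion in $h$ and $1/B$), but your packaging makes the origin of the $h^2$ and $1/(hB)$ terms transparent and avoids the case analysis over Wiener-component indices; it is in fact the route the paper itself takes for the looser general-stable-matrix bounds in \cref{app:general-bounds}. Your consistency checks also land: in the $\Lambda\to 0$ limit the constants $\overline C_1 = \tfrac14\Tr\{Q\}^2$ and $\overline C_2 = \tfrac13\Tr\{Q^2\}$ reduce to \cref{cor: MSE A=0 case} for $n=1$, $q=1$. The only caveat is the one you already flag: the closed-form evaluation of the variance double sum and the clean separation of the $\gO(1/B)$ remainder is heavy bookkeeping, which the paper ultimately delegates to symbolic computation.
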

The proof, including the exact derivation of the constants $\overline{C}_1$, $C_1\llr \Lambda \rrr$, $\overline{C}_2$, $C_2\llr \Lambda \rrr$, can be found in \cref{app:proof-vector finite horizon}.
Note that the terms composing the $\MSE$ closely resemble those obtained in the scalar analysis. In fact, when comparing them with the expressions in~\cref{eq: updated E1} and~\cref{eq: updated E2} (in \cref{app:optimal-step-size}), the expression has the same order for $h, B$ and $T$. The only difference is that in the vector case, cumbersome eigenvalue-dependent constants are involved, whereas in the scalar case, the result can more easily be expressed in terms of the system parameter $a$.

Since the optimal choice for $h$ is determined by balancing the trade-off between the two terms above, $E_1$ for the approximation error and $E_2$ for the variance, its expression is analogous to the scalar case, as shown by the following corollary.
\begin{corollary}[Optimal step size - vector case]
    Under the assumption that $B\gg 1$, the optimal step-size for the vector case is given by
    \begin{align*}
        h^*\llr B \rrr = \llr\tfrac{\overline{C}_1 + C_1\llr \Lambda \rrr \gO\llr T\rrr }{\overline{C}_2\llr \Lambda \rrr + C_2\llr \Lambda \rrr \gO\llr T\rrr}\rrr^{1/3}T B^{-1/3} + o\big( B^{-1/3}\big)
        .
    \end{align*}
\label{cor:optimal-step-size-vector}
\end{corollary}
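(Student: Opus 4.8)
The plan is to reduce minimizing the full $\MSE_T(h,B)$ to minimizing its two dominant terms, exactly as in the scalar case, and then to show that the discarded lower-order terms shift the minimizer by only $o(B^{-1/3})$. First I would invoke \cref{thm:mse-vector-case} to write, with $B$-independent constants,
\begin{align*}
\MSE_T(h,B) &= c_1\, h^2 + \frac{c_2}{hB} + R(h,B), \\
c_1 &= \bigl(\overline{C}_1 + C_1(\Lambda)\,\gO(T)\bigr)\sigma^4 T^2,
\qquad
c_2 = \bigl(\overline{C}_2 + C_2(\Lambda)\,\gO(T)\bigr)\sigma^4 T^5,
\end{align*}
where the remainder has the form $R(h,B) = \gO(h^3) + \tfrac{\sigma^4 h}{B}\poly(h,\Lambda,T)$ familiar from \cref{cor:approximate-mse}, the precise polynomial being read off from the exact expansion in \cref{app:proof-vector finite horizon}. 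The leading part $g(h) := c_1 h^2 + c_2/(hB)$ is minimized by solving $g'(h) = 2c_1 h - c_2/(h^2 B) = 0$, whose unique positive root is
\begin{align*}
h_0(B) = \Bigl(\frac{c_2}{2 c_1 B}\Bigr)^{1/3}
= \Bigl(\frac{\overline{C}_2 + C_2(\Lambda)\,\gO(T)}{2\bigl(\overline{C}_1 + C_1(\Lambda)\,\gO(T)\bigr)}\Bigr)^{1/3} T\, B^{-1/3},
\end{align*}
the factor $T$ coming from $T^5/T^2$ under the cube root; since $g''>0$, $h_0$ is the global minimizer of $g$, and inserting the constants computed in \cref{app:proof-vector finite horizon} produces the displayed expression. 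As a consistency check, for $n=1$ one has $c_1 = \sigma^4 T^2/4$ and $c_2 = \sigma^4 T^5/3$ from \cref{cor: MSE A=0 case}, recovering $h^*(B) \aeq T\,(2/(3B))^{1/3}$.

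Second I would promote $h_0$ to an approximate minimizer of $\MSE_T$ by a perturbation argument. On the admissible window $h \in [T/B, T]$ the exact MSE is smooth, with $\partial_h \MSE_T(h,B) = g'(h) + \partial_h R(h,B)$ and $\partial_h R(h,B) = \gO(h^2) + \gO(1/B)$, hence of order $\gO(B^{-2/3}) = o(B^{-1/3})$ for $h = \Theta(B^{-1/3})$. Because $g''(h_0) = 6 c_1 = \Theta(1)$, a Taylor expansion gives $g'\bigl(h_0(1\pm\epsilon)\bigr) = \pm\,\Theta(\epsilon)\,c_1 h_0 = \pm\,\Theta\bigl(\epsilon B^{-1/3}\bigr)$ for fixed $\epsilon\in(0,1)$, which dominates $\partial_h R$ once $B$ is large; so $\partial_h\MSE_T(\cdot,B)$ is negative at $h_0(1-\epsilon)$ and positive at $h_0(1+\epsilon)$, a minimizer $h^*(B)$ lies in $\bigl(h_0(1-\epsilon),\,h_0(1+\epsilon)\bigr)$, and strict convexity of $\MSE_T$ near $h_0$ (the remainder's curvature being $o(1)$ there) makes it the unique one. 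Letting $\epsilon\to 0$ yields $h^*(B) = h_0(B)(1+o(1)) = h_0(B) + o(B^{-1/3})$, the claim. Since $h^*(B) = \Theta(B^{-1/3}) \gg T/B$ once $B \gg 1$, the estimation grid automatically contains at least one full trajectory, consistently with the remark after \cref{cor: MSE A=0 case}.

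The crux is this second step: controlling the lower-order terms. Pointwise smallness of $R$ is not enough; one needs a bound on $\partial_h R$ uniform over the $\Theta(B^{-1/3})$-neighbourhood of $h_0$, together with the rate $\Theta(\epsilon B^{-1/3})$ at which $g'$ separates from zero there, so that the leading curvature overwhelms the perturbation. The explicit $\tfrac{\sigma^4 h}{B}\poly(h,\Lambda,T)$ form of the remainder --- together with the two-sided bracketing of the MSE used for the scalar case in \cref{cor:approximate-mse}, carried over to the diagonalisable vector setting in \cref{app:proof-vector finite horizon} --- is exactly what makes this argument quantitative; the eigenvalue-dependent constants $C_1(\Lambda), C_2(\Lambda)$ enter only as inputs to these polynomial bounds.
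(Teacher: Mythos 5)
Your proposal is correct and follows essentially the same route as the paper, which records no separate proof of this corollary beyond balancing the two leading terms $E_1$ and $E_2$ of \cref{thm:mse-vector-case} "analogously to the scalar case"; your perturbation argument controlling the $o(B^{-1/3})$ remainder via $\partial_h R = \gO(B^{-2/3})$ against $g''(h_0)=\Theta(1)$ is more careful than anything the paper supplies. One caveat: your (correct) computation gives $h_0=(c_2/(2c_1B))^{1/3}$, i.e.\ $\bigl(\tfrac{\overline{C}_2 + C_2(\Lambda)\gO(T)}{2(\overline{C}_1 + C_1(\Lambda)\gO(T))}\bigr)^{1/3} T B^{-1/3}$, which is the form consistent with your scalar sanity check $h^*\aeq T(2/(3B))^{1/3}$, whereas the corollary as displayed has this ratio inverted and omits the factor $1/2$ --- so your closing assertion that inserting the constants "produces the displayed expression" glosses over what is evidently a typo in the statement rather than an error in your derivation.
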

\vspace{-10pt}
The constants in Corollary~\ref{cor:optimal-step-size-vector} are the same as in Theorem~\ref{thm:mse-vector-case}.

\subsection{Infinite-Horizon Setting} \label{ss:infinite-horizon}

The main characteristic of the finite-horizon setting is the trade-off between approximation and estimation error. Recall that in the infinite-horizon setting ($\tau = \infty$),  the estimation horizon $T<\infty$ becomes a free variable that is chosen by the experimenter to define the measurement range $[0,T]$. Consequently the mean-squared error of the Monte-Carlo estimator suffers an additional \emph{truncation error} from using a finite Riemann sum with $N = T/h$ terms as an approximation to the infinite integral that defines the cost $V_\infty$.
More precisely, we decompose the expected cost $V_\infty= V_T + V_{T,\infty}$, where $V_T=\int_0^T \gamma^t \E[x^2(t)]dt$ as before, and
\begin{align}
	V_{T,\infty}&=\int_T^{\infty} \gamma^t \Ex{x^2(t)}\dd{t}
   =\frac{\sigma^2 \gamma^T}{2a}\llr \frac{1}{\log{(\gamma)}}-\frac{e^{2aT}}{\log{(\gamma)}+2a}\rrr\,.\label{eq:Vinf}
\end{align}
It is a direct calculation based on \cref{lem:moments} in Appendix. Thus the mean-squared error becomes
\begin{align}
\MSE_\infty(h,B,T) = \E\big[(\hat V_M(h) - V)^2\big]  =\MSE_T(h,B)-2\Ex{\hat V_M(h) - V_T}V_{T,\infty}+V_{T,\infty}^2 
,
\label{eq:mse-infinite}
\end{align}
where $\MSE_T(h,B) = \E\big[(\hat V_M(h) - V_T)^2\big]$ is the mean-squared error of discounted finite-horizon setting. Note that the term $V_{T,\infty}^2$ is neither controlled by a small step-size $h$ nor by a large data budget $B$, hence results in the truncation error from finite estimation. Fortunately, geometric discounting ensures that $V_{T,\infty}^2 = \gO(\gamma^{2T})$, which is not unexpected given that the term constitutes the tail of the geometric integral. In particular, setting $T = c  \cdot {\log(B)}/{\log(1/\gamma)}$ for large enough $c > 1$  ensures  the truncation error is below the estimation variance.
We summarize the result in the next theorem.

\begin{theorem}[Infinite-horizon, discounted MSE]
	In the infinite-horizon, discounted setting, the mean-squared error of the Monte-Carlo estimator is
	\begin{align}
		\MSE_\infty(h,B, T) &= \sigma^4\, T\, C(a,\gamma)\cdot \frac{1}{hB} + \frac{\sigma^4}{144} \cdot h^4 + \gO(h^5) + \gO(B^{-1}) %
  ,
	\label{eq:mse-discounted-infinite-horizon}
	\end{align}
    where we let $C(a, \gamma) = \frac{1}{\log(\gamma)(a+\log(\gamma))(2a+\log(\gamma))^2}$ and assume that $\gamma^T = o(h^4)$.
\label{thm:mse-discounted-infinite-horizon}
\end{theorem}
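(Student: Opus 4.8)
The plan is to use the bias--variance decomposition
\[
\MSE_\infty(h,B,T) = \Var\big(\hat V_M(h)\big) + \big(\E[\hat V_M(h)] - V_\infty\big)^2 ,
\]
which is exactly \cref{eq:mse-infinite} regrouped, and to show that the variance term produces the $\tfrac{1}{hB}$ contribution with coefficient $\sigma^4 T\,C(a,\gamma)$, the squared bias produces the $\tfrac{\sigma^4}{144}h^4$ contribution, and everything else is absorbed into $\gO(h^5)+\gO(B^{-1})$ once we use the hypothesis $\gamma^T = o(h^4)$. Throughout, write $g(t) := \gamma^t\,\Ex{x^2(t)} = \gamma^t\tfrac{\sigma^2}{2a}(e^{2at}-1)$, so that $\E[\hat V_M(h)] = \E[\hat J_1(h)] = \sum_{k=0}^{N-1} h\,g(kh)$, while $V_T = \int_0^T g(t)\dd{t}$ and $V_\infty = \int_0^\infty g(t)\dd{t}$; since $\gamma < 1$ and $a \le 0$ force $\gamma^t e^{2at}\le 1$, the function $g$ and all of its derivatives are bounded and integrable on $[0,\infty)$, uniformly in $T$.

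For the bias, applying the Euler--Maclaurin formula to the left Riemann sum gives
\[
\E[\hat V_M(h)] - V_T = \tfrac{h}{2}\big(g(0)-g(T)\big) + \tfrac{h^2}{12}\big(g'(T)-g'(0)\big) + \gO(h^4),
\]
where the $\gO(h^4)$ remainder is controlled by $\sup_{[0,T]}|g^{(3)}|$ and $\int_0^T|g^{(5)}|$ (finite uniformly in $T$), and where the absence of an $h^3$ term is precisely because Euler--Maclaurin has no odd-order correction. The two facts needed are $g(0)=0$ and $g'(0)=\sigma^2$ (the latter equals $\tfrac{d}{dt}\Ex{x^2(t)}$ at $t=0$ and is independent of $a$, which is why the eventual $h^4$ coefficient carries no $a$- or $\gamma$-dependence). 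Using $g(T),g'(T) = \gO(\gamma^T)$ together with $V_{T,\infty} = \gO(\gamma^T)$ from \cref{eq:Vinf}, we obtain $\E[\hat V_M(h)] - V_\infty = -\tfrac{\sigma^2}{12}h^2 + \gO(h^4) + \gO(\gamma^T)$. Squaring and invoking $\gamma^T = o(h^4)$ to absorb the cross term $h^2\gamma^T = o(h^6)$, the squared bias equals $\tfrac{\sigma^4}{144}h^4 + \gO(h^6)$, which is in particular the $\tfrac{\sigma^4}{144}h^4 + \gO(h^5)$ claimed.

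For the variance, since $\hat V_M(h)$ averages $M = Bh/T$ i.i.d.\ per-trajectory estimates, $\Var(\hat V_M(h)) = \tfrac{T}{Bh}\Var(\hat J_1(h))$. First, replacing the Riemann sum by its integral costs only $\Var(\hat J_1(h)) = \Var(J_1) + \gO(h)$, where $J_1 = \int_0^T \gamma^t x^2(t)\dd{t}$; this is a Riemann discretization error of a double integral with a kernel that is integrable on $[0,\infty)^2$ uniformly in $T$ (and is made explicit by the finite-horizon analysis of \cref{sec:finite-horizon} applied with discounting). Next, by Isserlis' theorem $\Cov\big(x^2(s),x^2(t)\big) = 2\,\Ex{x(s)x(t)}^2$ with $\Ex{x(s)x(t)} = \tfrac{\sigma^2}{2a}\big(e^{a(s+t)}-e^{a|t-s|}\big)$, so $\Var(J_1) = 2\int_0^T\!\!\int_0^T \gamma^{s+t}\,\Ex{x(s)x(t)}^2\,\dd{s}\dd{t}$. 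Sending $T\to\infty$ changes this by $\gO(\gamma^T)$, and the resulting improper double integral is elementary: expanding $\big(e^{a(s+t)}-e^{a|t-s|}\big)^2$ and integrating the exponentials reduces it to a handful of integrals of the form $\int_0^\infty\!\!\int_0^s e^{\alpha s+\beta t}\dd{t}\dd{s}$, whose partial-fraction combination collapses, after the numerators cancel down to $2a^2$, to $\sigma^4\,C(a,\gamma)$ with $C(a,\gamma) = \tfrac{1}{\log(\gamma)\,(a+\log\gamma)\,(2a+\log\gamma)^2}$. Hence $\Var(\hat V_M(h)) = \sigma^4\,T\,C(a,\gamma)\tfrac{1}{hB} + \gO(B^{-1}) + \gO\!\big(\tfrac{T\gamma^T}{hB}\big)$, and the last term is $o(h^3 B^{-1})$. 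Adding the bias and variance contributions yields \cref{eq:mse-discounted-infinite-horizon}.

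The main obstacle is the uniformity bookkeeping in $T$: because $\gamma^T = o(h^4)$ forces $T\to\infty$ as $h\to 0$, one must verify that the Euler--Maclaurin remainder, the $\gO(h)$ discretization error in $\Var(\hat J_1(h))$, and the $\gO(\gamma^T)$ gaps between finite- and infinite-horizon quantities all have constants that remain bounded as $T$ grows --- which holds because $\gamma<1$ and $a\le 0$ make every integrand in sight decay geometrically on $[0,\infty)$. The heaviest computation is the evaluation of the improper double integral for $\lim_{T\to\infty}\Var(J_1)$; the collapse of its partial-fraction numerators to $2a^2$ is exactly what produces the clean constant $\sigma^4 C(a,\gamma)$.
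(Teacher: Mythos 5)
Your proof is correct, and it takes a genuinely different route from the paper's. The paper proves this theorem by plugging the discounted finite-horizon result (\cref{lem:finite-horizon-discounted}, itself obtained from explicit second- and fourth-moment calculations and symbolically evaluated nested geometric sums) into the decomposition $\MSE_\infty = \MSE_T - 2\,\Ex{\hat V_M(h)-V_T}V_{T,\infty} + V_{T,\infty}^2$ of \cref{eq:mse-infinite}, and then computing the cross term directly. You instead regroup into bias plus variance and treat each piece with a purpose-built tool: Euler--Maclaurin for the bias, which makes it transparent that the $h^2$ and $h^3$ corrections are boundary terms at $t=T$ of size $\gO(\gamma^T)$ (hence absorbed under $\gamma^T=o(h^4)$) and that the surviving $h^4$ coefficient is $(g'(0)/12)^2 = \sigma^4/144$, independent of $a$ and $\gamma$; and Isserlis plus a limiting improper double integral for the variance, whose partial fractions do collapse to $\sigma^4/\big(\log(\gamma)(a+\log\gamma)(2a+\log\gamma)^2\big)$ as you claim (I checked: the numerator reduces to $2a^2$ and cancels the $4a^2$ from $\E[x(s)x(t)]^2$). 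Your route is more self-contained and more explanatory, and avoids the symbolic computation the paper leans on; the paper's route buys reuse of the finite-horizon machinery and keeps the truncation term $V_{T,\infty}$ explicit, which it needs for the follow-up discussion of the regime where $\gamma^T$ is treated as a constant. Two small bookkeeping notes: your Euler--Maclaurin remainder bound should be phrased via $g^{(3)}(T)-g^{(3)}(0)$ and $\int_0^T|g^{(4)}|$ (one more Bernoulli term) to certify $\gO(h^4)$ rather than $\gO(h^3)$, and your $\gO(T/B)$ from the variance-discretization error is folded into $\gO(B^{-1})$ only up to the logarithmically growing factor $T$ --- but the paper's own statement adopts the same convention, so neither is a gap.
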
 
The proof is provided in \cref{app:proof-mse-discounted-inf-horizon}. It follows that the optimal choice for the step-size is $h^*(B, T) \aeq \left(36 \,T\, C(a,\gamma)/B\right)^{1/5}$. The minimal mean-squared error is $\MSE_\infty(h^*, T, B) \leq \gO\big( (T\, C(a,\gamma)/B)^{4/5} + \gamma^{2T}\big)$. %
 Lastly, we remark that if $\gamma^T$ is treated as a constant, the cross term $\E\big[\hat V_M(h) - V_T\big]V_{T,\infty}$ in \cref{eq:mse-infinite} introduces a dependence of order $\gO(h\gamma^{2T})$ to the mean-squared error. In this case, the overall trade-off becomes $\MSE_\infty(h,B,T) \approx \gO\big(1/(hB) + \gamma^{2T}(1+h)\big)$, and the optimal step-size is $h^* \approx B^{-1/2}$.

\paragraph{Vector Case}
Similar to the finite-horizon setting, we establish tight bounds for the MSE in the general case involving a stable matrix $A$. The detailed results are presented in Appendix~\ref{app:general-bounds}.
As before, the MSE for the vector case can be computed in closed-form assuming that $A$ is both diagonalisable and stable. The result reflects the same behaviour as in the scalar case. Conveniently, the $\MSE$ in Theorem~\ref{thm:mse-discounted-infinite-horizon} has been expressed with sharp terms in $h$ and $B$, while confining the dependence on the system parameter $a$ within the constant $C$, and the impact of higher-order terms in $T$ within $V_{T,\infty}$. This allows us to state the vector case result in a similar form, where the constant will now depend on the eigenvalues of the matrix $A$, as well as the discount factor $\gamma$. These are provided in full detail in \cref{app:proof-discounted-infinite-vector}.
\begin{corollary}
    For $A$ diagonalisable, with eigenvalues given by $\Lambda$, the mean-squared error of the Monte-Carlo estimator in the infinite-horizon, discounted setting is
    \begin{align*}
        \MSE_\infty\llr h, B, T \rrr =C_3 \sigma^4\frac{T}{hB} + \frac{\sigma^4}{144}h^4  + \gO\llr h^5 +\tfrac{1}{B}\rrr
    \end{align*}
    with a constant $C_3= C_3\llr \Lambda, \gamma\rrr$ and under the assumption that $\gamma^T= o\llr h^4 \rrr$.
\label{cor:mse-discounted-infinite-horizon-vector}
\end{corollary}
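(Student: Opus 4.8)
The plan is to reduce the vector claim to the scalar Theorem~\ref{thm:mse-discounted-infinite-horizon} by diagonalizing $A$ and tracking how each bilinear moment term decomposes along eigendirections. First I would write $A = P\Lambda P^{-1}$ with $\Lambda = \mathrm{diag}(\lambda_1,\dots,\lambda_n)$, and change coordinates to $y(t) = P^{-1}x(t)$, so that $\dd{y(t)} = \Lambda y(t)\dd{t} + \sigma P^{-1}\dd{W(t)}$. The cost integrand $x^\top Q x$ becomes $y^\top \tilde Q y$ with $\tilde Q = P^\top Q P$, and the noise covariance becomes $\tilde\Sigma = \sigma^2 P^{-1}P^{-\top}$. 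Because the dynamics are now decoupled (each coordinate $y_j$ is a scalar Langevin process with drift $\lambda_j$, driven by correlated noise), the second moments $\E[y_j(s)y_\ell(t)]$ and the fourth moments $\E[y_j(s)y_k(s)y_\ell(t)y_m(t)]$ admit closed forms analogous to \cref{lem:moments} — the second moments are sums of terms $e^{\lambda_j s + \lambda_\ell t}$ type expressions, and the fourth moments factor via Isserlis/Wick into products of second moments since $y(t)$ is Gaussian. This step is essentially bookkeeping and I would relegate the explicit formulas to the appendix.

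Next I would substitute these moment expressions into $\MSE_\infty(h,B,T) = \MSE_T(h,B) - 2\E[\hat V_M(h)-V_T]V_{T,\infty} + V_{T,\infty}^2$ from \cref{eq:mse-infinite}, now interpreting $V_T$, $V_{T,\infty}$ and $\hat V_M$ in the vector sense. The key observation is that every contribution is a finite linear combination (with coefficients depending on $\tilde Q$ and $\tilde\Sigma$, i.e.\ on $P$ and $Q$) of terms of exactly the same analytic shape as in the scalar proof, but with $a$ replaced by pairs/quadruples of eigenvalues $\lambda_j$ (or their real parts, which are negative by stability). Carrying out the same Taylor expansion in $h\to 0$ that produced Theorem~\ref{thm:mse-discounted-infinite-horizon}, the leading Riemann-approximation error is still governed by the $\sigma^4 h^4/144$ term — this constant is dimension-independent because it comes from the Euler–Maclaurin expansion of the Riemann sum, whose leading coefficient does not see the drift — while the variance term collects as $C_3(\Lambda,\gamma)\,\sigma^4 T/(hB)$ with $C_3$ a (finite, since all $\mathrm{Re}\,\lambda_j < 0$ and $\gamma < 1$) combination of the scalar constants $C(\lambda_j,\gamma)$-type quantities weighted by the entries of $\tilde Q$ and $\tilde\Sigma$. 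The truncation term $V_{T,\infty}^2$ and the cross term are $\gO(\gamma^{2T})$ and $\gO(h\gamma^{2T})$ respectively, hence absorbed into $\gO(h^5 + 1/B)$ under the hypothesis $\gamma^T = o(h^4)$, exactly as in the scalar case.

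The main obstacle I anticipate is controlling the fourth-moment (variance) term cleanly in the presence of \emph{correlated} noise across eigendirections and, more delicately, the possibility of \emph{complex} eigenvalues occurring in conjugate pairs: one must check that the imaginary parts cancel so that $C_3$ is real and positive, and that no spurious growth in $T$ appears. I would handle this by grouping conjugate eigenvalue pairs and noting that $\tilde Q$, $\tilde\Sigma$ are Hermitian in the complexified coordinates, so that the MSE — a real nonnegative quantity by construction — forces all imaginary contributions to vanish termwise; stability ($\mathrm{Re}\,\lambda_j<0$) then guarantees every geometric sum over $k\in\{0,\dots,N-1\}$ converges and contributes an $\gO(1/h)$ factor rather than a $T$-dependent blow-up, so that the $T$-dependence is genuinely linear as claimed. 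The remaining work — verifying that the $h^2$ and $h^3$ coefficients of $E_1$ vanish in the discounted infinite-horizon regime, leaving $h^4$ as the leading approximation order, and that the cross term does not contaminate the $h^4$ coefficient — mirrors the scalar computation in \cref{app:proof-mse-discounted-inf-horizon} verbatim and I would defer it to \cref{app:proof-discounted-infinite-vector}.
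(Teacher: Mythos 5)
Your proposal follows essentially the same route as the paper's proof in \cref{app:proof-discounted-infinite-vector}: diagonalize $A$, reduce every moment contribution to a finite linear combination of scalar-type expressions indexed by eigenvalue pairs and quadruples (the paper organizes this via the trace identity and the integrals $\mathcal{I}_1,\mathcal{I}_2,\mathcal{I}_3$ separating same versus distinct Wiener components, which is the same bookkeeping as your Wick factorization with correlated $\tilde\Sigma$), then re-run the scalar infinite-horizon expansion so that the dimension-independent $\sigma^4 h^4/144$ term survives, the variance collects into $C_3(\Lambda,\gamma)\,\sigma^4 T/(hB)$, and the truncation and cross terms are absorbed under $\gamma^T = o(h^4)$. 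Your added attention to complex conjugate eigenvalue pairs and to the correlation of the noise across eigendirections is a sensible refinement of a point the paper passes over silently, but it does not constitute a different argument.
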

The terms in Corollary~\ref{cor:mse-discounted-infinite-horizon-vector} correspond to estimation error, approximation error and truncation error, mirroring  the scalar scenario. The optimal step-size exhibits the same dependencies on $T$ and $B$ as in the scalar case, albeit with a different constant dependent on the eigenvalues. 

\section{From Linear to Non-Linear Systems: A Numerical Study}\label{sec:experiments}

The trade-off identified in our analysis suggests that there exists an optimal choice for temporal resolution in policy evaluation. Our next goal is to verify the trade-off in several simulated dynamical systems. While our analysis assumes a linear transition and quadratic cost, we empirically demonstrate that such a trade-off also exists in nonlinear systems. For our experimental setup, we choose simple linear quadratic systems mirroring the setting of \cref{sec:setting}, as well as several standard benchmarks from Gym~\citep{brockman2016openai} and  MuJoCo~\citep{todorov2012mujoco}. Our findings confirm the theoretical results and highlight the importance of choosing an appropriate step-size for policy evaluation.\looseness=-1

\subsection{Linear Quadratic Systems}
\begin{figure*}[bht]
  \centering
  \subfigure[$n=1$, finite horizon, vary $B$]{
  \includegraphics[width=0.35\linewidth]{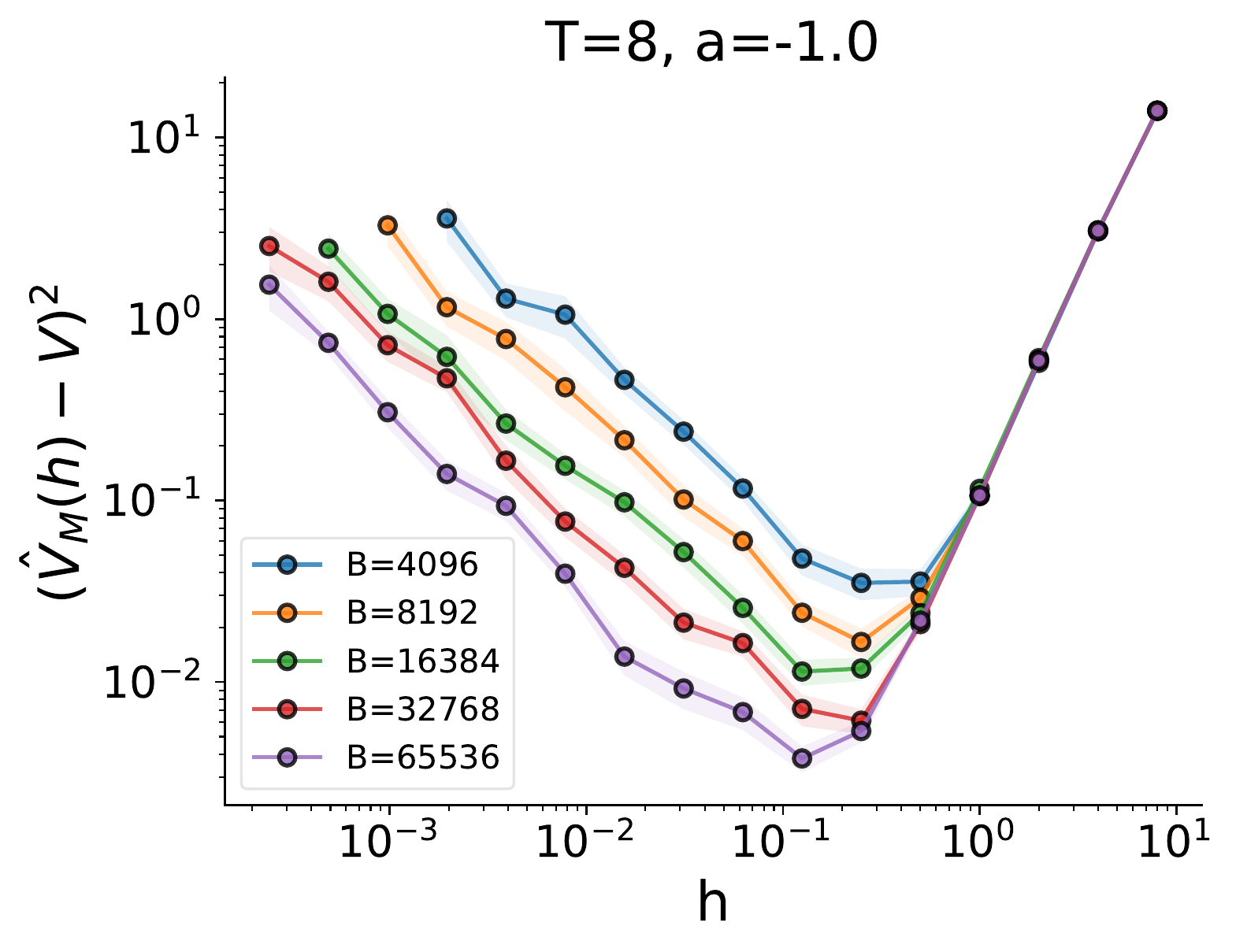}\label{fig:exp-n1}}%
  \subfigure[$n=1$, finite horizon, vary $a$]{
  \includegraphics[width=0.35\linewidth]{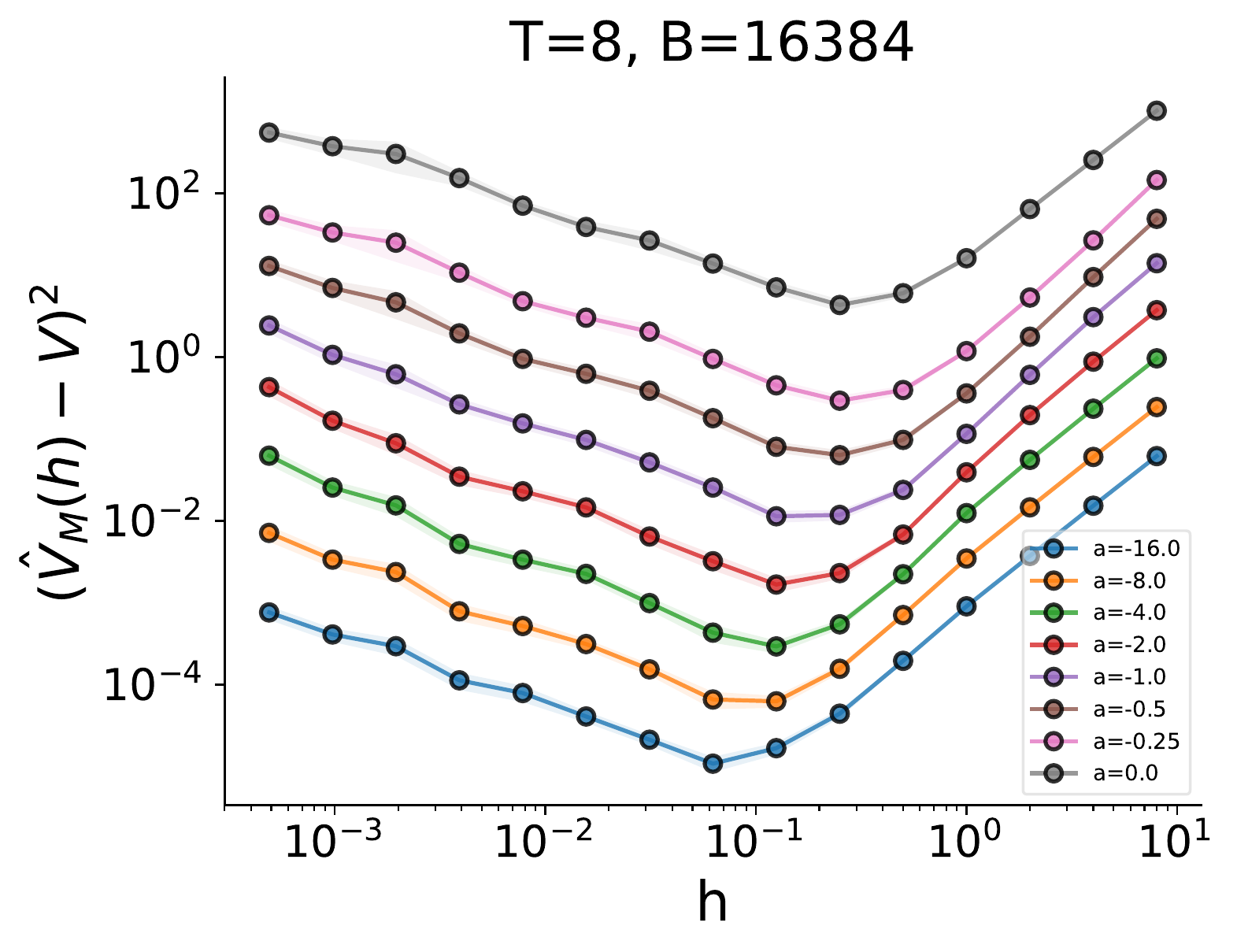}\label{fig:exp-n1-vary-a}}\\
  \subfigure[$n=3$, finite horizon]{ \includegraphics[width=0.35\linewidth]{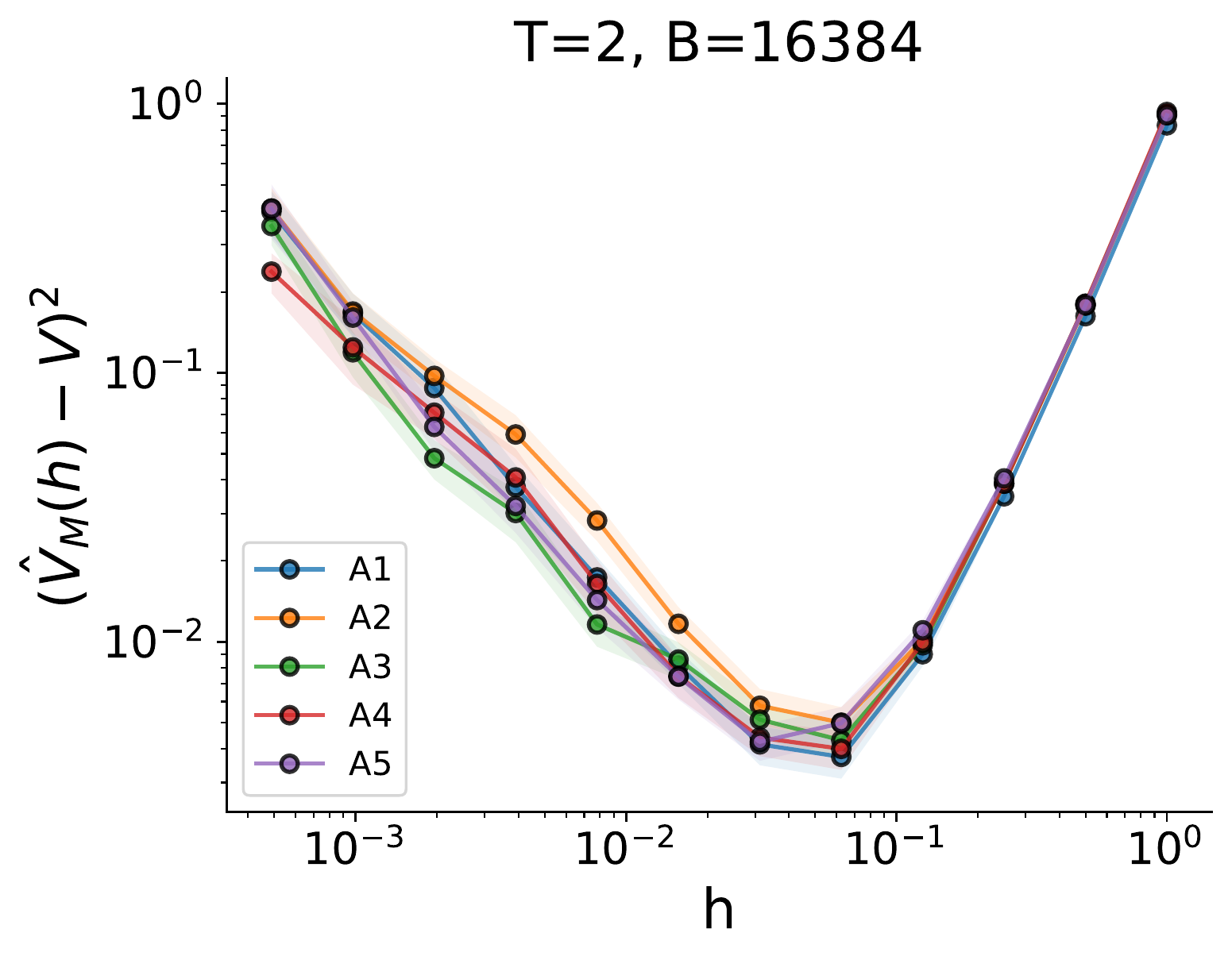}\label{fig:exp-n3a}}
 \subfigure[$n=3$, infinite horizon]{
  \includegraphics[width=0.35\linewidth]{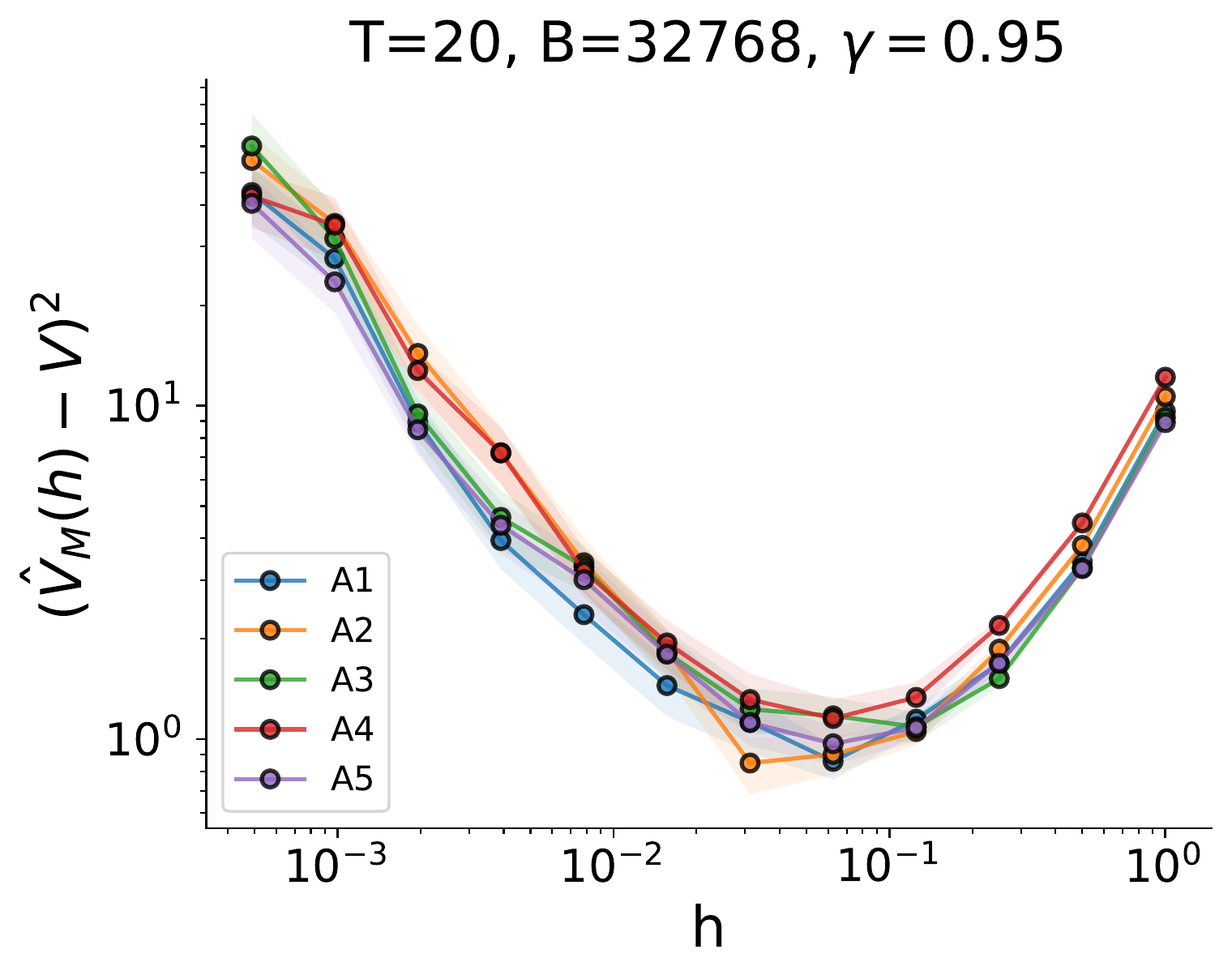}\label{fig:exp-n3b}}
  \caption{Mean-squared error trade-off in linear quadratic systems of different dimension $n$. The first two plots show the dependence of the optimal step-size on the data budget $B$ and drift coefficient $a$, respectively.
  A\{1,2,3,4,5\} in the last two plots are random matrices and the two sets are not equal.
  }\label{fig:exp}
  \vspace{-5pt}
\end{figure*}

We first run numerical experiments on the Langevin dynamical systems to examine the behaviour of the trade-off identified in our analysis. The results are shown in Fig.~\ref{fig:exp}.
For these experiments, we fix the noise $\sigma^2=1$ and the cost $Q=I$. The lines in the plot represent the sample mean $(\hat{V}_M(h) - V)^2$ and the shading represent the standard error of our sample means, computed over $50$ independent runs.
The plots in Fig.~\ref{fig:exp} exhibit a clear, U-shaped trade-off, as predicted by our theoretical results. 

Fig.~\ref{fig:exp-n1} shows the MSE in a one-dimensional system with $T=8$ and $a=-1$. 
The ground truth $V$ is calculated analytically by using Eq.~\ref{eqn:1dV_T} in the Appendix.
The figure illustrates how the error changes as we vary the data budget, $B=\{2^{12},2^{13},2^{14},2^{15},2^{16}\}$, and also illustrates the improvement that can be obtained by increasing the budget.
As we increase $B$, both the error and the optimal step size $h^*$, decrease. This result strongly aligns with the analysis shown in Theorem~\ref{thm:main-scalar} and Corollary~\ref{cor:approximate-mse}.
The same analysis is performed with respect to the parameter $a$, while fixing the data budget, in Fig.~\ref{fig:exp-n1-vary-a}. By increasing the absolute value of the drift coefficient, the diffusion has a smaller impact, thus trajectories have less variability. This leads to a smaller $h^*$ for the optimal point of the trade-off. 

Fig.~\ref{fig:exp-n3a} and \ref{fig:exp-n3b} present the experimental results for both undiscounted finite horizon and discounted infinite horizon multi-dimensional systems. For the finite-horizon setting, $V$ is computed by numerically solving the Riccati Differential Equation; while in infinite-horizon, it is calculated through Lyapunov equation using a standard solver. In our multi-dimensional experiments, we set the dimension $n=3$.
We fix all parameters and run our experiments for 5 randomly sampled $3 \times 3$ dense, stable matrices in each setting. More details on the matrix structure can be found in \cref{app:random_matrices}.
Results in both plots suggest that the impact of the eigenvalues of $A$ on $h^*$ is mild and that the eigenvalue-dependent constant terms in Corollary~\ref{cor:optimal-step-size-vector} only marginally affect the optimal step-size $h^*$, similar to the trend observed in the scalar case with parameter $a$. %
In the infinite horizon system, the horizon needs to be large enough to manage truncation error while simultaneously being small enough to collect multiple trajectories. We choose $\gamma$ large enough such that a good estimate of $V$ can be obtained, and set $T = 1/(1-\gamma)$, which is referred to as the effective horizon in the RL literature.

\vspace{-3mm}
\subsection{Nonlinear Systems}
\vspace{-1mm}
\label{sec:nonlinear-experiments}
\car{Many nonlinear behaviors can be approximated by a high-dimensional linear system, which would be bounded by our theoretical results on 
the general case of n-dimensional systems, hinting that similar trade-offs could characterize nonlinear systems as well.}
We empirically show that the trade-off identified in linear quadratic systems carries over to nonlinear systems, with more complex cost functions.
We demonstrate it in several simulated nonlinear systems from OpenAI Gym~\citep{brockman2016openai}, including Pendulum, BipedalWalker and six MuJoCo~\citep{todorov2012mujoco} environments: InvertedDoublePendulum, Pusher, Swimmer, Hopper, HalfCheetah and Ant. We note that the original environments all have a fixed temporal discretization $\delta t$, pre-chosen by the designer. 
To measure the effect of $h$, we first modify 
all environments to run at a small discretization $\delta t=0.001$ as the proxy to the underlying continuous-time systems.
We train a nonlinear policy parameterized by a neural network for each system, by the algorithm DAU \citep{tallec2019discreteq}.
This policy is used to gather episode data from the continuous-time system proxy at intervals of $\delta t=0.001$ which are then down-sampled for different $h$ based on the ratio of $h$ and $\delta t$.
The policy is stable in the sense that it produces reasonable behavior (e.g., pendulum stays mostly upright; Ant walking forward; etc.) and not cause early termination of episodes (e.g., BipedalWalker does not fall), in the continuous-time system proxy. 
The results of the MSE of Monte-Carlo policy evaluation are shown in Fig.~\ref{fig:MSE-nonlinear}.
Similar to the linear systems case, we vary the data budget $B$ and see how the MSE changes with the step-size $h$.
The MSE shows a clear minimum for choosing the optimal step-size $h^*$, which generally decreases as the data budget increases.
We slightly abuse notations by using $V, \hat{V}$ to refer to the true and estimated sum of rewards instead of the cost.
The true value of $V$ is approximated by averaging the sum of rewards observed at $\delta t=0.001$ from $150k$ episodes.
These environments fall under the finite horizon undiscounted setting.
The system (and estimation) horizon $T$ of our experiments is chosen to be the physical time of $1k$ steps under the default $\delta t$ in the original environments (with the exception of 200 steps for Pendulum and 500 steps for BipedalWalker).
Please refer to Appendix~\ref{app:experiment} for more implementation details, including the setup of $B$, $T$, $\delta t$, $h$, training, and the compute resources.
These systems are stochastic in the starting state, while having deterministic dynamics.
Despite the different settings from our analysis, 
a clear trade-off is evident in all systems.
\car{This suggests that our findings may have broader applicability than the specific conditions under which our theoretical analysis was established.}\looseness=-1

\begin{figure*}[bt]
    \centering
    \setlength{\tabcolsep}{1pt}
    	\begin{tabular}{@{}cccc}
		
          {\includegraphics[width=0.24\textwidth]{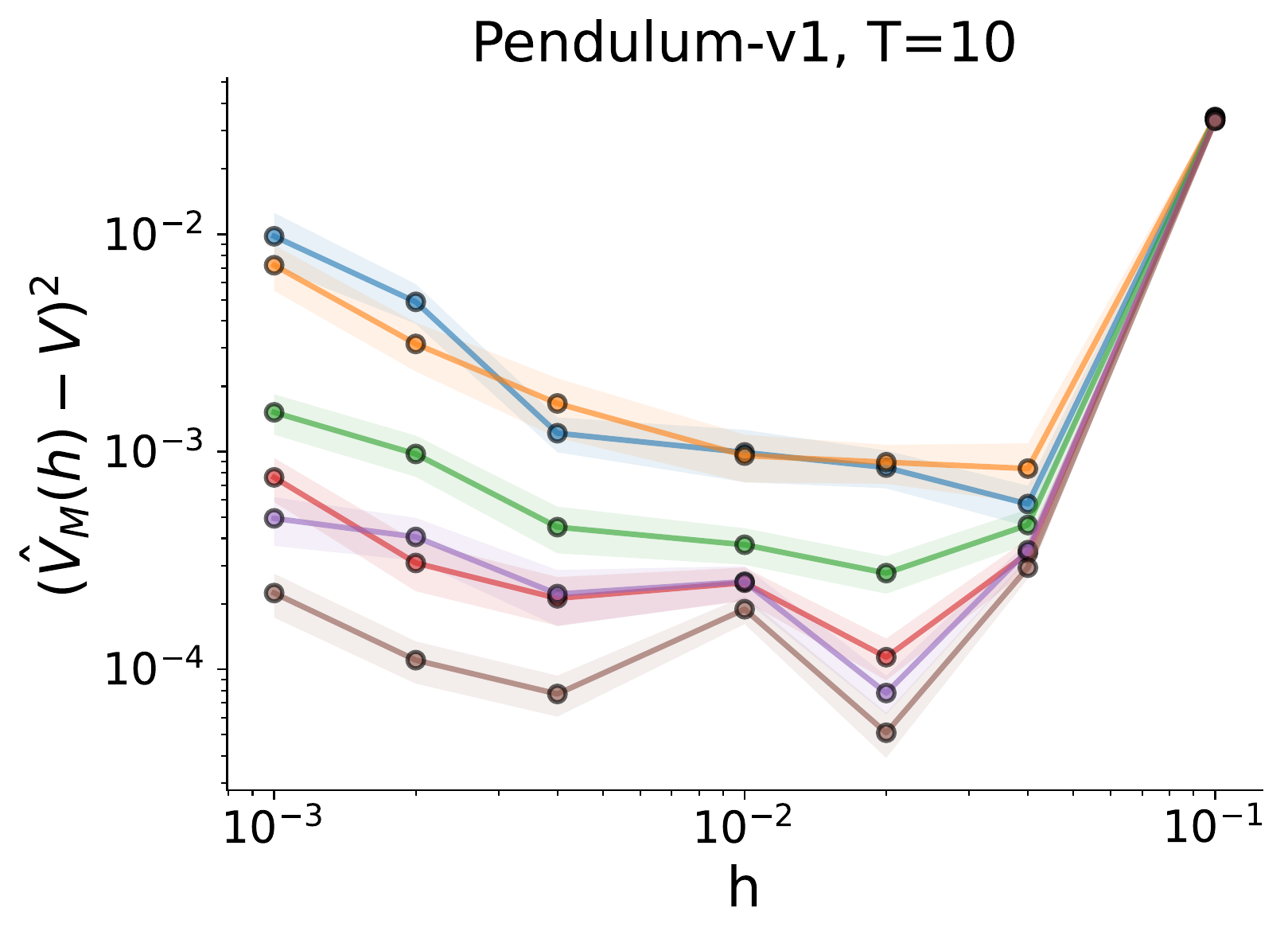}}
		 &{\includegraphics[width=0.24\textwidth]{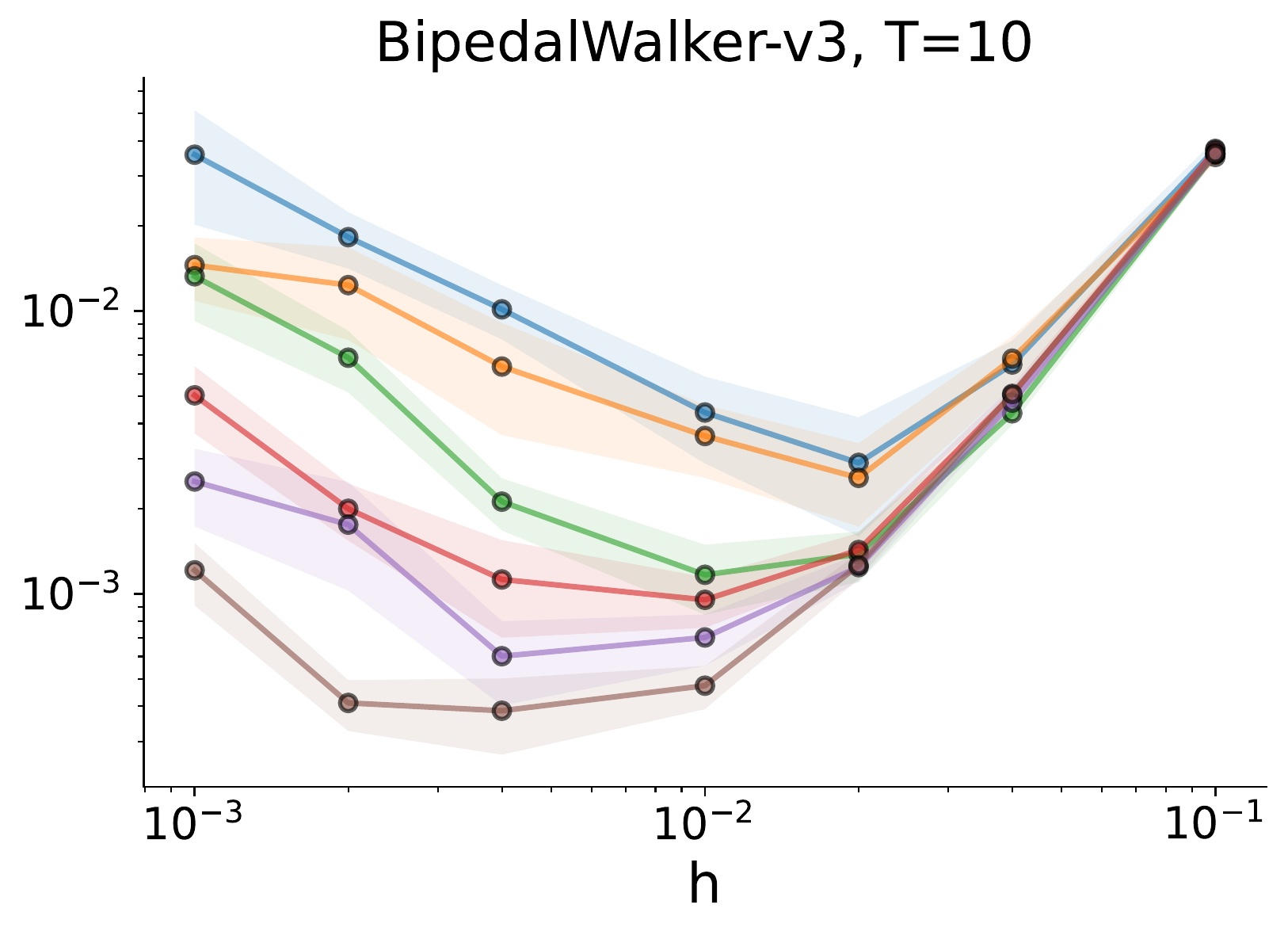}}
		 &{\includegraphics[width=0.24\textwidth]{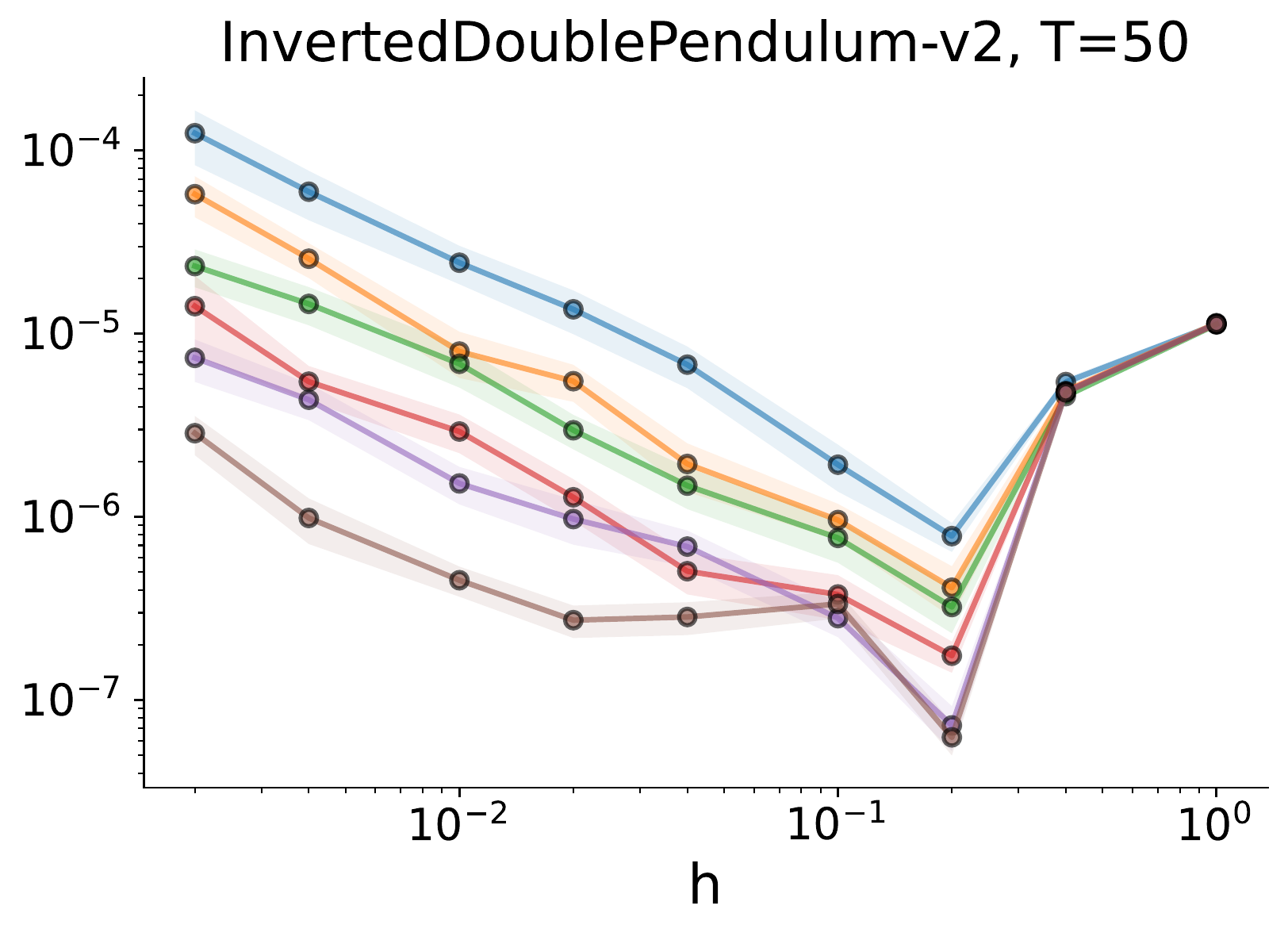}}
		 &{\includegraphics[width=0.24\textwidth]{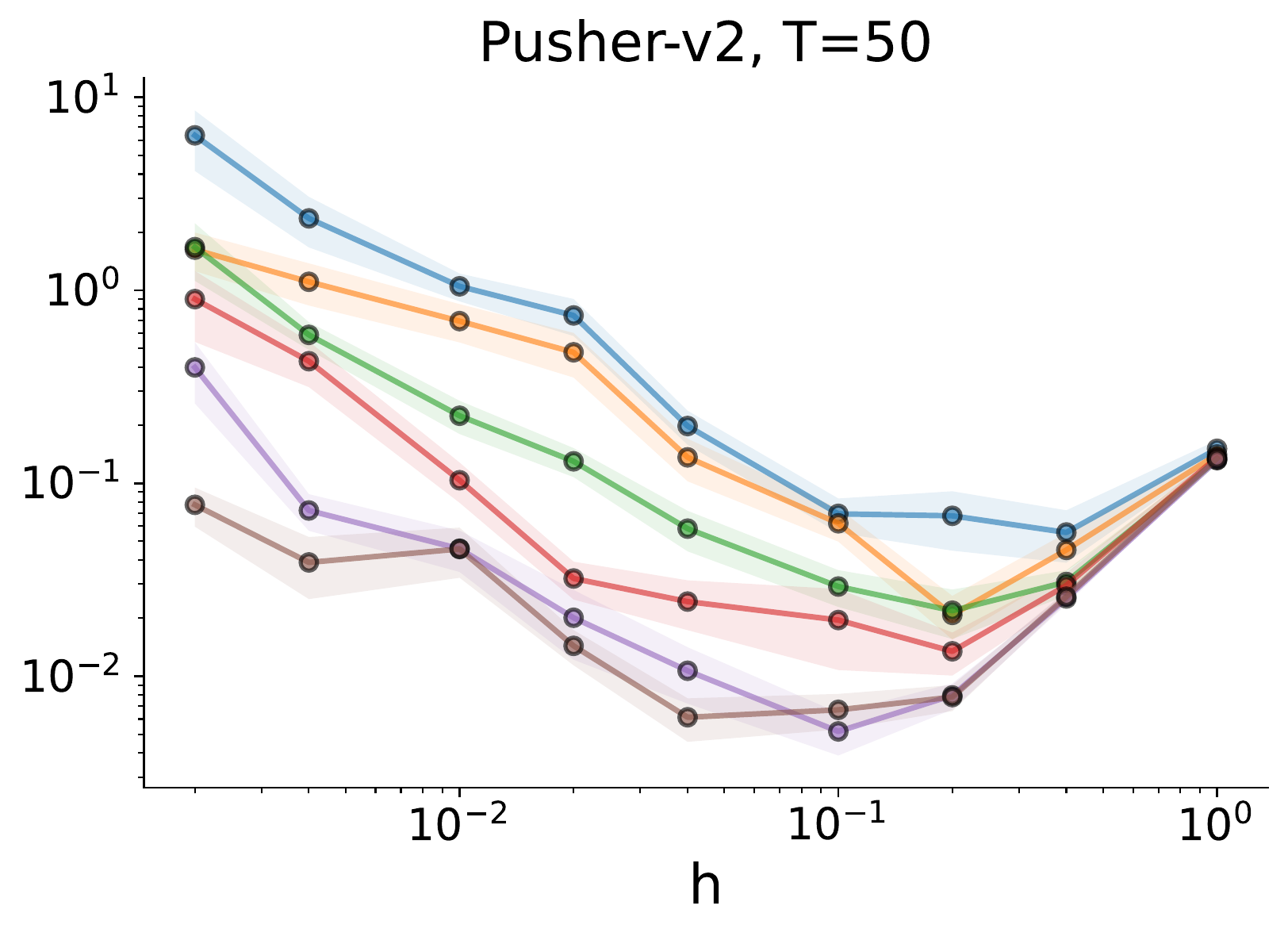}}\\
		 {\includegraphics[width=0.24\textwidth]{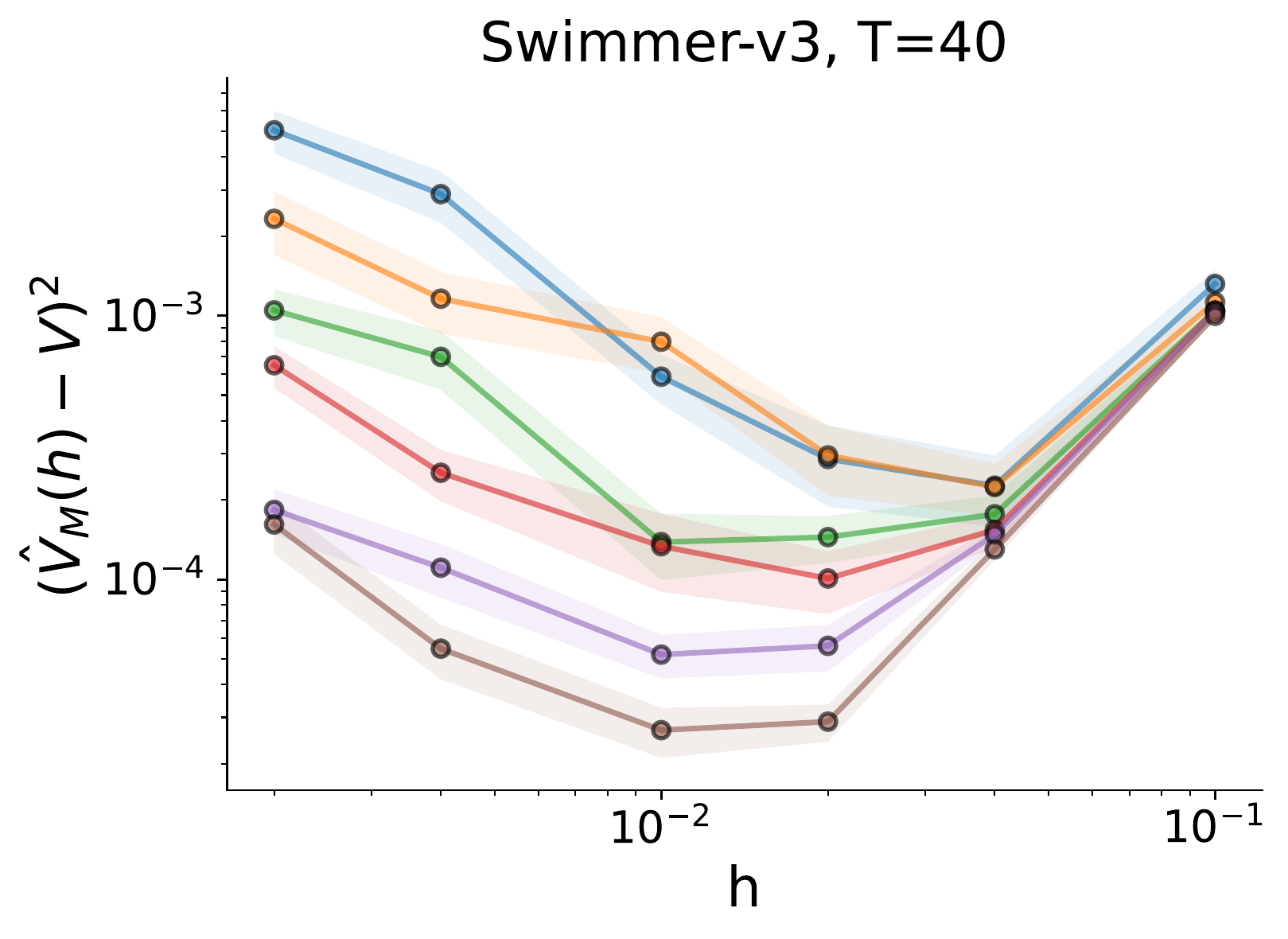}}
		 &{\includegraphics[width=0.24\textwidth]{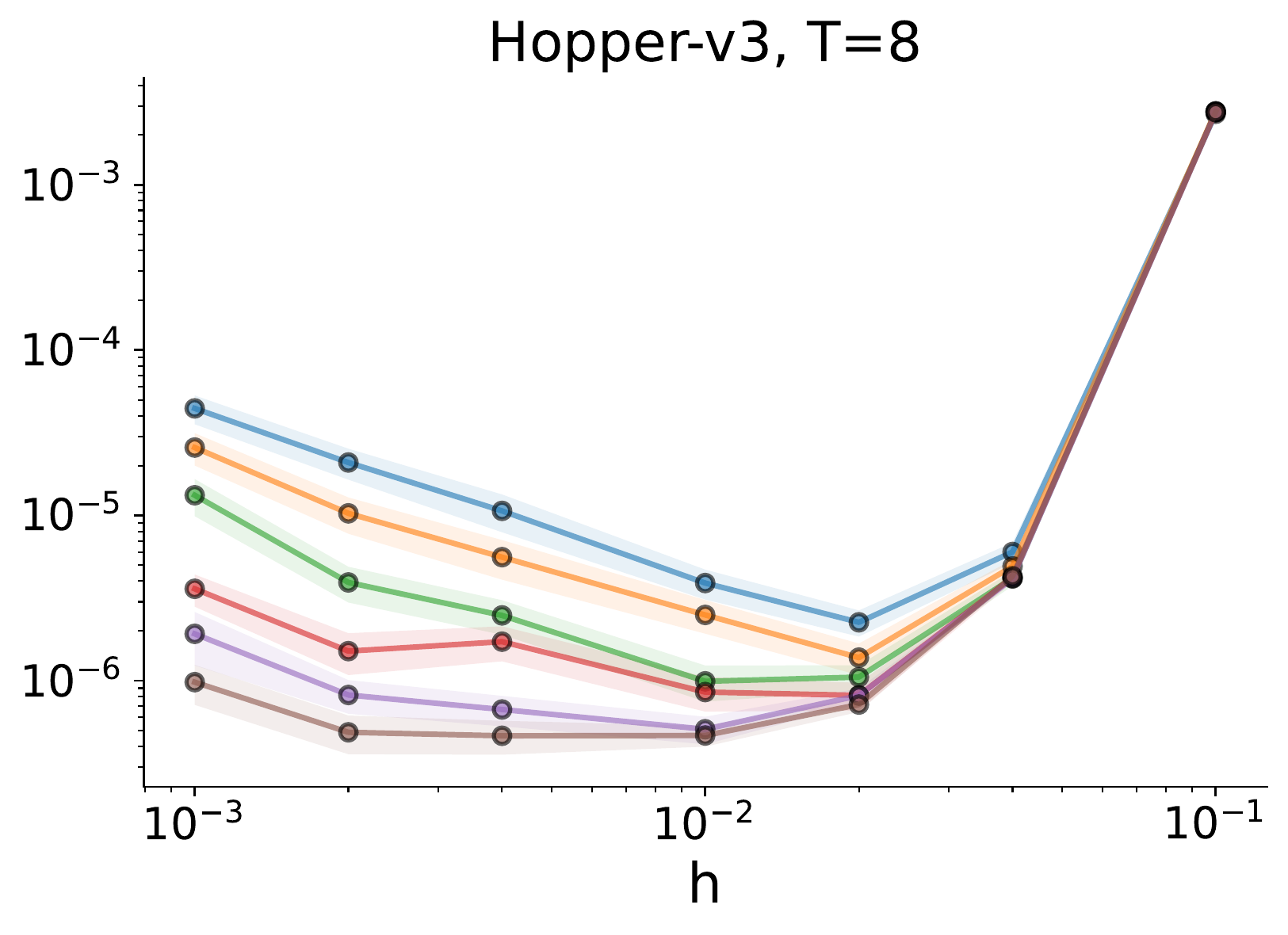}} 
		 &{\includegraphics[width=0.24\textwidth]{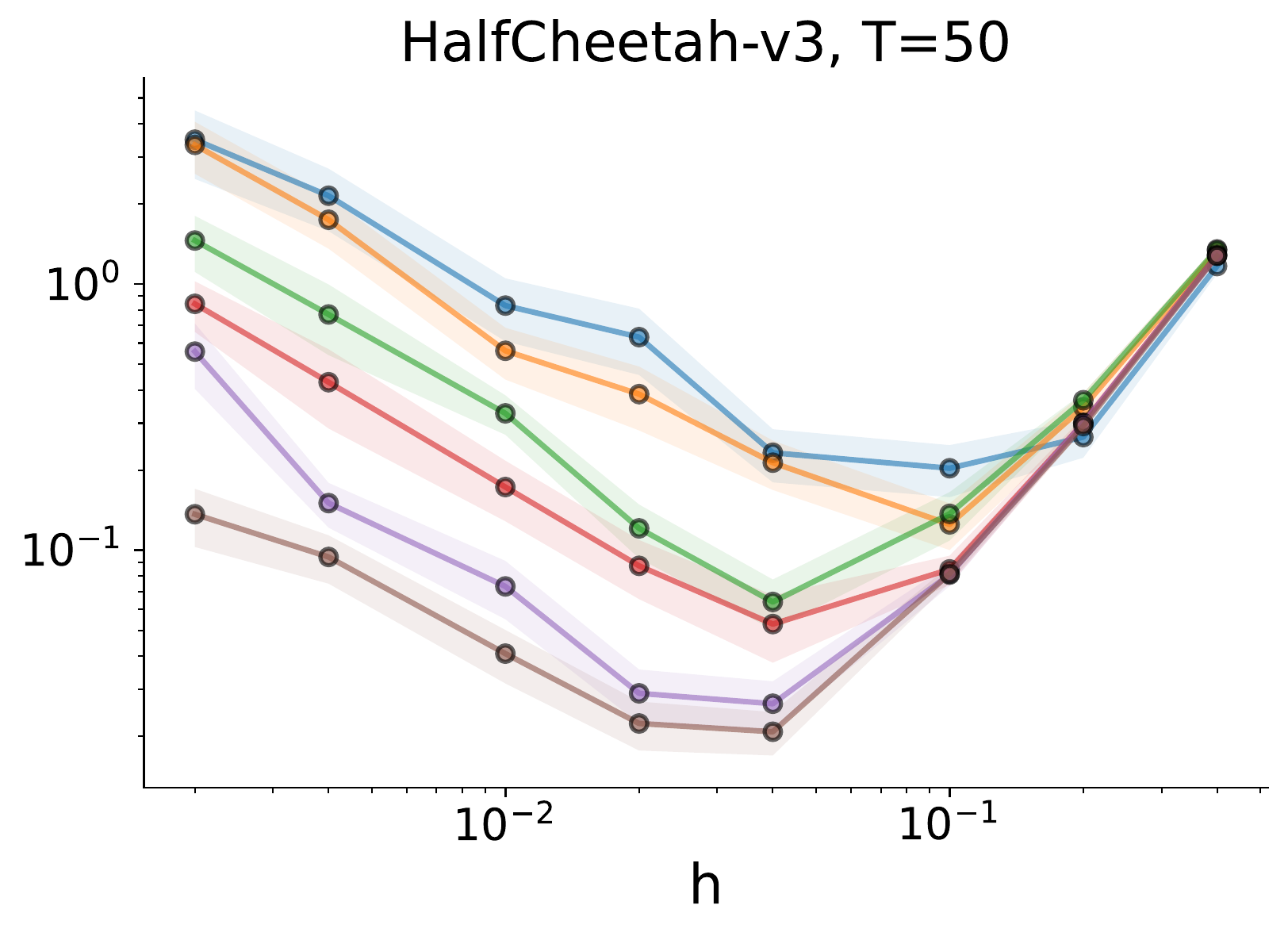}} 
		 &{\includegraphics[width=0.24\textwidth]{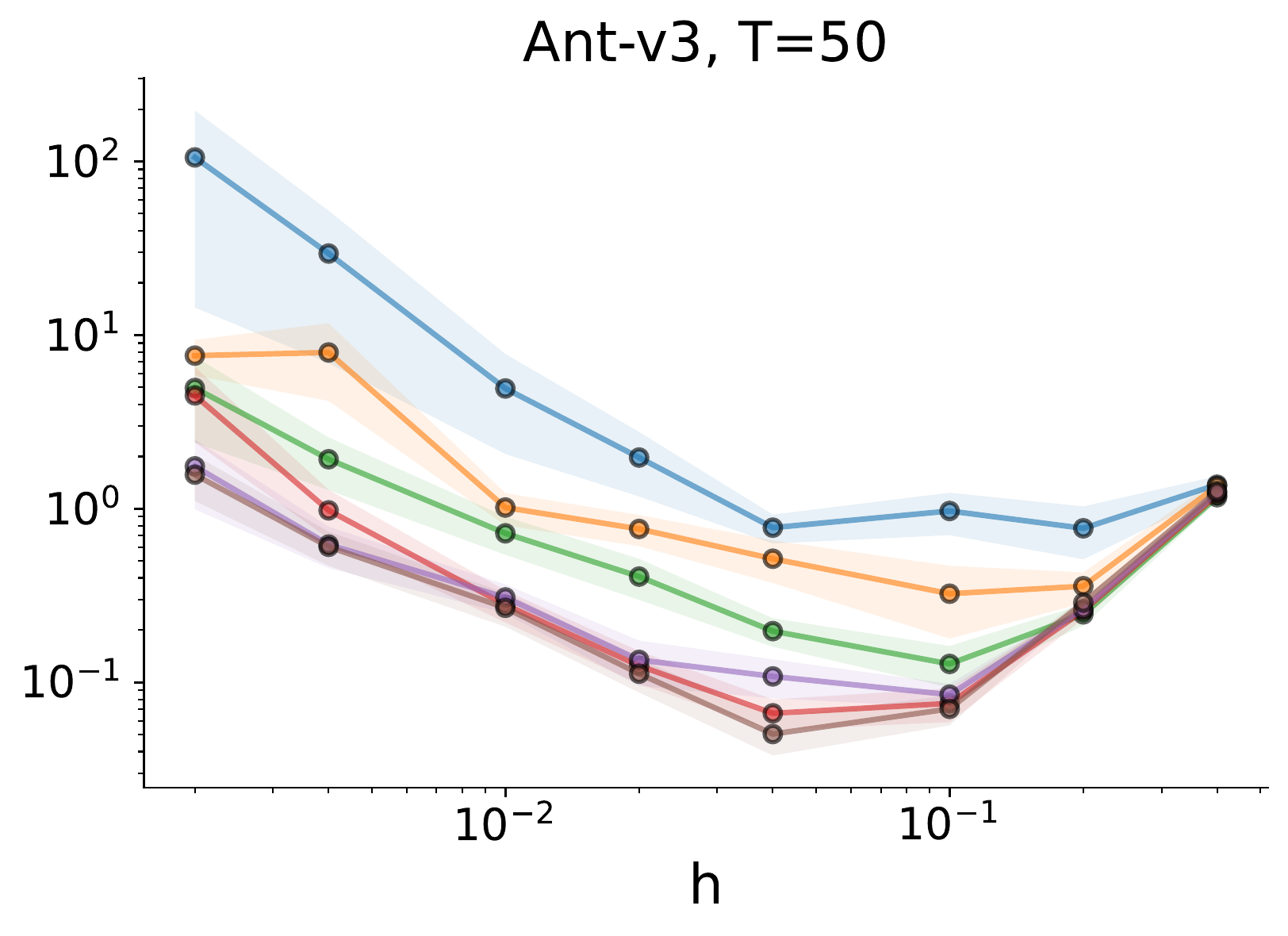}}\\
	\end{tabular}\vspace{-5pt}
		\includegraphics[width=\textwidth]{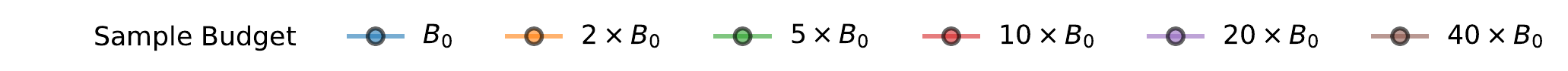}
		\vspace{-10pt}
    \caption{MSE of Monte-Carlo policy evaluation in nonlinear systems. The line and shaded region denote the sample mean and its standard error of $(\hat{V}_M(h) - V)^2$, from 30 random runs. $T$ is the horizon in physical time (seconds). $B_0$ denotes the environment-dependent base sample budget, chosen such that it gives a full episode for the smallest $h$ (see \cref{app:experiment}). 
    The optimal step-size generally decreases as the data budget increases (with `InvertedDoublePendulum-v2' being the only exception).\looseness=-1}
    \label{fig:MSE-nonlinear}
    \vspace{-5pt}
\end{figure*}
\vspace{-5pt}

\subsection{Guidelines for Setting Step-Size $h$}
The precise characterization of the MSE in Section~\ref{sec:MSE} can be exploited to set the step-size close to the optimal value without any prior knowledge of the system, provided experiments on a smaller data budget are performed beforehand. Although the optimal step-size $h^*$ clearly depends on all quantities characterizing the dynamics and the policy, the technical analysis of the MSE accurately quantifies how $h^*$ scales with respect to the data budget $B$. Specifically, $h^*\llr B\rrr \approx c_FB^{-1/3}$ for finite horizon and $h^*\llr B\rrr \approx 
 c_IB^{-1/5}$ for infinite horizon, 
where $c_F$ and $c_I$ hide the dependencies on the system parameters, exposing only the order in $B$. 
This allows us to extrapolate the optimal step-size for the given data budget, using the constant estimated with a smaller one. By evaluating through numerical experiments
the performances of different step-sizes on the reduced data budget, it is possible to identify an approximate $h^*$, and subsequently determine $c_F$ or $c_I$, which then gives the whole range of optimal values of $h$ with respect to $B$ through the aforementioned relation. 
Note that this approach does not require prior knowledge of the dynamics, yet it provides a systematic way for setting the step size $h$ for any given scenario.\looseness=-1

\begin{wrapfigure}{r}{0.55\textwidth}
\centering
\vspace{-15pt}
\includegraphics[width=0.55\textwidth]{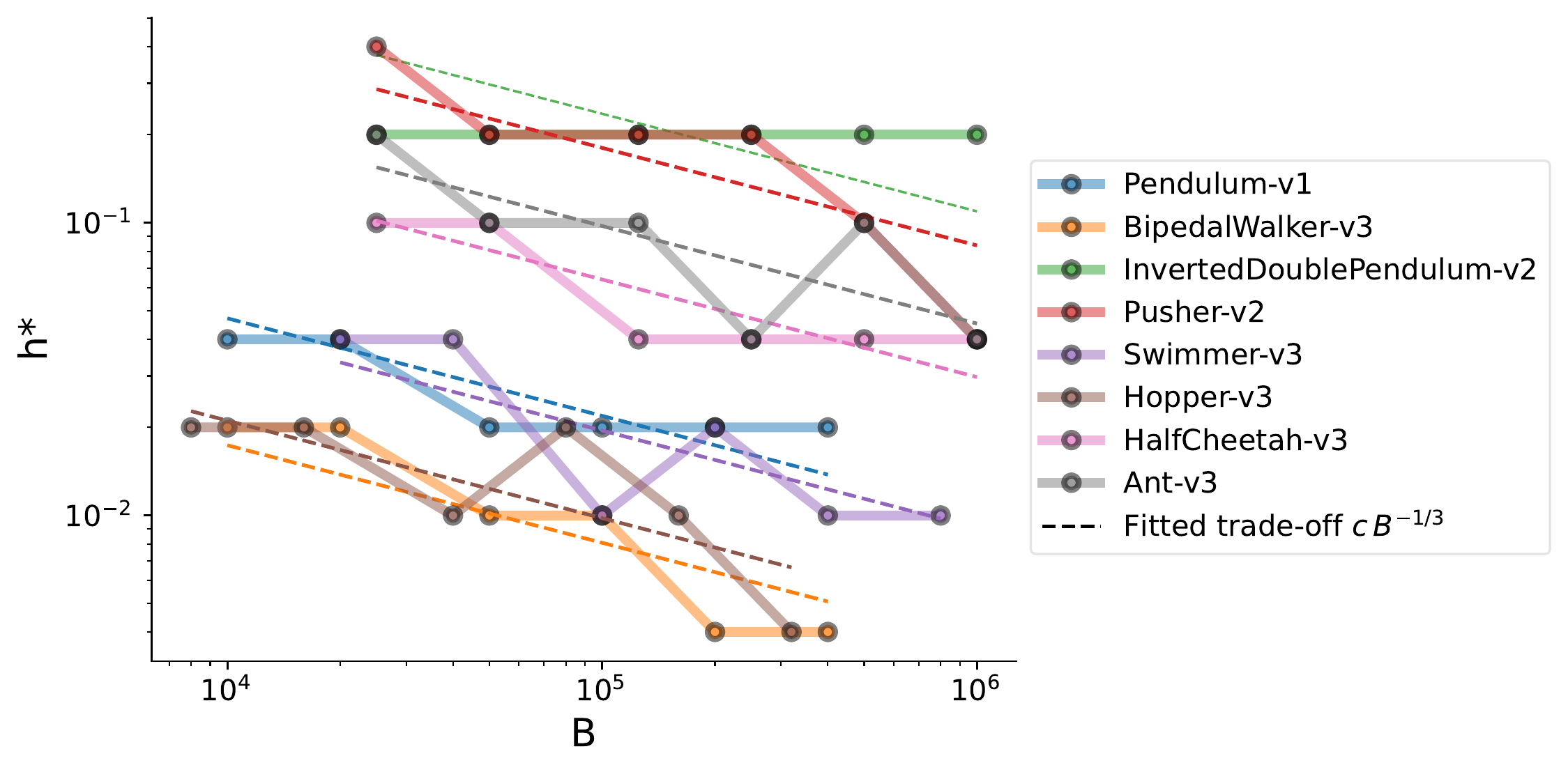}
\vspace{-17pt}
\caption{Empirical $h^*$ in nonlinear experiments (solid) \car{compared} with analysis in \cref{cor:optimal-step-size-vector} (dashed): $h^* = c_FB^{-1/3}$, %
$c_F$ is estimated from data by least squares.}
\vspace{-10pt}
\label{fig:hstar-B-nonlinear}
\end{wrapfigure}

\cref{fig:hstar-B-nonlinear} plots the empirical $h^*$ over $B$ for all nonlinear environments, and fitted lines based on the relation \car{from the analysis} $h^* = c_FB^{-1/3}$, where the constant $c_F$ varies with the environment.
The plot shows that the scaling of $h^*$ w.r.t. $B$ predicted by our analysis is \car{overall} a good approximation of the trend observed in the experiments with nonlinear systems,  \car{except for negative cases like the InvertedDoublePendulum}. This suggests that setting the step-size according to the analysis \car{can yield} a value close to optimality.
\looseness=-1

\section{Conclusions}

We provide a precise characterization of the approximation, estimation and truncation errors incurred by Monte-Carlo policy evaluation in continuous-time linear stochastic dynamical systems with quadratic cost. This analysis reveals a fundamental bias-variance trade-off, modulated by the level of temporal discretization $h$.
Moreover, we confirm in numerical simulations that the analysis accurately captures the trade-off in a precise, quantitative manner. We also demonstrate that the trade-off carries over to non-linear environments such as the popular MuJoCo physics simulation. These findings show that managing the temporal discretization level $h$ can greatly improve the quality of Monte-Carlo policy evaluation under a fixed data budget $B$. These results have direct implications for practice, as it remains common to adopt a pre-set step-size regardless of the data resources anticipated, which we have seen is a highly sub-optimal approach for a given budget.\looseness=-1

The present work serves as a first step toward understanding the effects of temporal resolution on RL. 
There are several limitations that we would like to address in future work. 
Our analysis is restricted to Monte-Carlo estimation, while there are more advanced techniques such as temporal difference learning and direct system identification, which may exhibit different behaviours.
Also, our focus is policy evaluation. It is worth studying policy optimization to understand the full control setting, which might require relaxing the stability assumption on the closed-loop system since the controllers being iterated might not always be stable.
\car{Additionally, it would be interesting to explore non-uniform discretization schemes, such as through an adaptive sampling scheme.}
From the system's perspective, we have currently analyzed stochastic linear quadratic systems with additive Gaussian noise and noiseless observations.
It remains to be determined if the exact characterization is still attainable with other types of noise, in the partially observable case, or with noisy observations.
\car{Finally, extending the analysis to non-linear systems would be valuable.}

\vspace{-5pt}
\begin{ack}
The authors would like to thank Csaba Szepesvari for the helpful discussions. Zichen Zhang gratefully acknowledges the financial support by an NSERC CGSD scholarship and an Alberta Innovates PhD scholarship during this project. He is thankful for the compute resources generously provided by Digital Research Alliance of Canada (and formerly Compute Canada), which is sponsored through the accounts of Martin Jagersand and Dale Schuurmans.
Dale Schuurmans gratefully acknowledges funding from the Canada CIFAR AI Chairs Program, Amii and NSERC. Johannes Kirschner gratefully acknowledges funding from the SNSF Early Postdoc.Mobility fellowship P2EZP2\_199781.
\end{ack}

\bibliographystyle{abbrvnat}
\bibliography{lqr,add}

\newpage
\appendix
\LARGE \textbf{Appendix} \normalsize
\section{Moment Calculations}\label{app:moments}

Recall that the solution of the SDE in \cref{eq:langevin}, with $x\llr0\rrr=0$, takes the following form:
\begin{equation}
	\xexx{t} = \sigma \int_0^t e^{a\llr t-s \rrr} \dd{w\llr s\rrr}.
	\label{eq:solutionSDE}
\end{equation}

An integral part of finding the mean-squared error of the Monte-Carlo estimator is the computation of the moments $\Ex{x(t)^2}, \Ex{x(t)^4}$ and $\Ex{x(s)^2x(t)^2}$ when $s\leq t$.
\begin{lemma} \label{lem:moments}
	Let $x(t)$ be the solution of \cref{eq:langevin}. The second moment of the state variable is
	\begin{align}
		\E\big[x^2(t)\big] &= \frac{\sigma^2}{2a} \left(e^{2at} - 1\right)\,. \label{second moments}
	\end{align}
	For the fourth moment, we get:
	\begin{align}
		\E\big[\xexx{t}^4\big] = \frac{3\sigma^4}{4a^2}\llr e^{2at} -1\rrr^2
	\end{align}
	Assuming that $s \leq t$, we further get:
	\begin{align}
		\E\big[x^2(s)x^2(t)\big] &= \frac{\sigma^4}{4a^2} (e^{2as}  - 1)e^{2at} \left\{(e^{-2a s} - e^{-2at}) + 3(1 - e^{-2a s})\right\}\,. \label{fourth moments}
	\end{align}
\end{lemma}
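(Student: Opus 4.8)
The plan is to compute the three moments directly from the explicit solution \eqref{eq:solutionSDE} of the linear SDE, exploiting the Gaussianity of the process $x(t)$. Since $x(0)=0$ and the SDE is linear with additive noise, $x(t) = \sigma\int_0^t e^{a(t-s)}\dd{w(s)}$ is a zero-mean Gaussian random variable for each fixed $t$, and the joint law of $(x(s),x(t))$ is bivariate Gaussian. Thus everything reduces to computing covariances and then applying Isserlis' theorem (Wick's formula) for moments of jointly Gaussian variables.

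First I would establish the covariance function. By the It\^o isometry, for $s \le t$,
\begin{align*}
\E\big[x(s)x(t)\big] = \sigma^2 \int_0^s e^{a(s-u)}e^{a(t-u)}\dd{u} = \sigma^2 e^{a(s+t)}\int_0^s e^{-2au}\dd{u} = \frac{\sigma^2}{2a}e^{a(s+t)}\big(1 - e^{-2as}\big).
\end{align*}
Setting $s=t$ immediately gives $\E[x^2(t)] = \frac{\sigma^2}{2a}(e^{2at}-1)$, which is \eqref{second moments}. For the fourth moment of a single Gaussian, Isserlis gives $\E[x(t)^4] = 3\big(\E[x^2(t)]\big)^2 = \frac{3\sigma^4}{4a^2}(e^{2at}-1)^2$, establishing the second claim.

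For the cross moment $\E[x^2(s)x^2(t)]$ with $s \le t$, I would apply Isserlis' theorem to the four jointly Gaussian variables $x(s),x(s),x(t),x(t)$: writing $\alpha := \E[x^2(s)]$, $\beta := \E[x^2(t)]$, $\rho := \E[x(s)x(t)]$, the formula yields $\E[x^2(s)x^2(t)] = \alpha\beta + 2\rho^2$. Substituting the expressions above, $\alpha\beta = \frac{\sigma^4}{4a^2}(e^{2as}-1)(e^{2at}-1)$ and $2\rho^2 = \frac{\sigma^4}{2a^2}e^{2a(s+t)}(1-e^{-2as})^2$. The remaining task is purely algebraic: factor out $\frac{\sigma^4}{4a^2}(e^{2as}-1)e^{2at}$ and verify that the bracketed remainder matches $(e^{-2as} - e^{-2at}) + 3(1 - e^{-2as})$ as stated. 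One checks that $(e^{2as}-1)(e^{2at}-1) = (e^{2as}-1)e^{2at}(1 - e^{-2at})$ contributes the $(e^{-2as}-e^{-2at})$-type piece after regrouping, while $2e^{2a(s+t)}(1-e^{-2as})^2 = (e^{2as}-1)e^{2at}\cdot 2(1-e^{-2as})$ (using $e^{2a(s+t)}(1-e^{-2as}) = e^{2at}(e^{2as}-1)$) contributes $2(1-e^{-2as})$, and combining $1 - e^{-2at}$ appropriately rewritten plus $2(1-e^{-2as})$ gives the claimed $3(1-e^{-2as}) + (e^{-2as}-e^{-2at})$.

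I do not expect a genuine obstacle here; the only mildly delicate point is bookkeeping the exponentials so that the final expression is cast in exactly the asymmetric form used later (collecting $(e^{2as}-1)e^{2at}$ as the common prefactor rather than a symmetric one), since that particular grouping is presumably the convenient one for the subsequent Riemann-sum error computation. If one wanted to avoid invoking Isserlis as a black box, one could alternatively derive $\E[x^2(s)x^2(t)] = \alpha\beta + 2\rho^2$ by writing $x(t) = \rho/\alpha \cdot x(s) + z$ with $z$ independent of $x(s)$ and Gaussian, expanding, and using $\E[x(s)^4] = 3\alpha^2$; this is elementary. Either route makes the lemma a short computation.
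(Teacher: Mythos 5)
Your proposal is correct, and it reaches all three identities by a route that differs from the paper's in its mechanics. The paper obtains the fourth moment by applying It\^o's formula to $f(y)=y^4$ for $y(t)=\int_0^t e^{-au}\dd{w(u)}$ and integrating the resulting drift term, and it obtains the cross moment by splitting $\int_0^t = \int_0^s + \int_s^t$ and exploiting independence and zero mean of the increment integral, which kills the cross term and factorizes the rest. You instead observe once and for all that $x(t)$ is a centered Gaussian process, compute the covariance $\E[x(s)x(t)] = \tfrac{\sigma^2}{2a}e^{a(t-s)}(e^{2as}-1)$ for $s\le t$ via the It\^o isometry, and then read off $\E[x^4(t)]=3(\E[x^2(t)])^2$ and $\E[x^2(s)x^2(t)]=\alpha\beta+2\rho^2$ from Isserlis' theorem; I verified that $\alpha\beta+2\rho^2 = \tfrac{\sigma^4}{4a^2}(e^{2as}-1)e^{2at}\bigl[(1-e^{-2at})+2(1-e^{-2as})\bigr]$, which equals the stated bracket $3-2e^{-2as}-e^{-2at}$, so the algebra closes. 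Your approach buys a single unified computation (only the covariance is ever needed) and avoids It\^o's formula entirely; it is also the natural one to generalize, and indeed the paper itself switches to Isserlis' theorem in the vector-case analysis of Appendix C. The paper's approach buys a self-contained derivation from the martingale structure without invoking Wick's formula as a black box, which your regression-decomposition remark ($x(t)=(\rho/\alpha)x(s)+z$ with $z\perp x(s)$) would also supply if one wanted to keep the argument elementary. No gap either way.
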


\textit{Proof.}

\paragraph{(1)} We start with the second moment $\Ex{x(t)^2}$.
\begin{align*}
	\Ex{x(t)^2} &= \sigma^2 e^{2at} \Ex{\left(\int_0^t e^{-as} dw(s)\right)^2}
		= \sigma^2 e^{2at} \int_0^t e^{-2as} ds
		= \frac{\sigma^2}{2a} (e^{2at} - 1)
\end{align*}
The calculation makes use of the It\^{o} isometry, which can be stated as:
\begin{equation}
	\Ex{\llr \int_0^t z(s) \dd{w\llr s\rrr} \rrr^2} = \Ex{\int_0^t z(s)^2 \dd{s}},
\end{equation}
for any stochastic process $z(\cdot)$ adapted to the filtration induced by the Wiener process $w\llr\cdot\rrr$.

\paragraph{(2)} Next we compute $\Ex{x(t)^4}$ through It\^{o}'s formula. Define $\yeyy{t} := \int_0^t e^{-au}\dd{w\llr u\rrr}$, so that $\dd{\yeyy{t}} = e^{-at} \dd{w\llr t\rrr}$.
Thus,
\begin{align*}
	\dd{f\llr\yeyy{t}\rrr} &= f'\llr \yeyy{t}\rrr\dd{\yeyy{t}} + \frac{1}{2}f''\llr \yeyy{t}\rrr\llr\dd{\yeyy{t}}\rrr^2 \\
	&= f'\llr \yeyy{t}\rrr e^{-at} \dd{w\llr t\rrr} + \frac{1}{2}f''\llr \yeyy{t}\rrr e^{-2at} \dd{t},
\end{align*}
for any $f\llr\cdot\rrr$.
By choosing $f\llr y\rrr = y^4$:
\begin{align*}
	f'\llr y\rrr = 4y^3 \quad \text{and} \quad f''\llr y\rrr = 12y^2.
\end{align*}
Therefore, by integration and taking the expectation:
\begin{align*}
	\Ex{f\llr\yeyy{t}\rrr} &= \Ex{\int_0^tf'\llr\yeyy{u}\rrr e^{-au}\dd{w\llr u\rrr}} + \frac{1}{2}\Ex{\int_0^tf''\llr\yeyy{u}\rrr e^{-2au}\dd{u}} \\
	&= \underbrace{\Ex{\int_0^t4\llr\int_0^u e^{-av}\dd{w\llr v\rrr}\rrr^3 e^{-au}\dd{w\llr u\rrr}}}_{=0} +  \frac{1}{2} \Ex{\int_0^t12\llr\int_0^u e^{-av}\dd{w\llr v\rrr}\rrr^2e^{-2au}\dd{u}} \\
	&= 6\Ex{\int_0^t\llr\int_0^u e^{-av} e^{-au}\dd{w\llr v\rrr}\rrr^2\dd{u}} \\
	&= 6\int_0^t\Ex{\llr\int_0^u e^{-av} e^{-au}\dd{w\llr v\rrr}\rrr^2}\dd{u} \qquad \text{(It\^{o} isometry)} \\
	&= 6 \int_0^t \int_0^u e^{-2av} e^{-2au} \dd{v} \dd{u} \\
	&= \int_0^t e^{-2au}\frac{1}{2a}\llr1-e^{-2au} \rrr \dd{u} \\
	&= \frac{3}{4a^2}\llr1-e^{-2at} \rrr^2
\end{align*}
From \cref{eq:solutionSDE} it holds $\xexx{t} = \sigma e^{at}\yeyy{t}$
so that the second part of the lemma follows.
\paragraph{(3)} Lastly, we compute $\Ex{x(s)^2x(t)^2}$ for $s \leq t$. 
\begin{align*}
	\Ex{\xex{2}{s}\xex{2}{t}} &= \sigma^4e^{2a\llr s+t\rrr}\Ex{\llr \int_0^{s} e^{-au}\dd{w\llr u \rrr}\rrr^2 \llr\int_0^{t} e^{-au}\dd{w\llr u \rrr} \rrr^2} \\
	&= \sigma^4e^{2a\llr s+t\rrr}\Ex{\llr \int_0^{s}e^{-au}\dd{w\llr u \rrr} \rrr^2 \llr \int_0^{s} e^{-au}\dd{w\llr u \rrr} + \int_{s}^{t} e^{-au}\dd{w\llr u \rrr} \rrr^2} \\
	&= \sigma^4e^{2a\llr s+t\rrr}\bigg\{ \underbrace{\Ex{\llr \int_0^{s} e^{-au}\dd{w\llr u \rrr} \rrr^4}}_{\text{(i)}} + \underbrace{\Ex{\llr \int_0^{s}e^{-au}\dd{w\llr u \rrr} \rrr^2} \Ex{\llr \int_{s}^{t} e^{-au}\dd{w\llr u \rrr} \rrr^2}}_{\text{(ii)}} \bigg\}
\end{align*}
Note that we computed (i) before. For (ii) it holds:
\begin{align*}
	\Ex{\llr \int_0^{s} e^{-au}\dd{w\llr u \rrr} \rrr^2} &= \int_0^{s} e^{-2au}\dd{u} \\
	&= \frac{1}{2a}( 1 - e^{-2as} )
\end{align*}
and
\begin{align*}
	\Ex{\llr \int_{s}^{t} e^{-au}\dd{w\llr u \rrr} \rrr^2} &= \int_{s}^{t} e^{-2au}\dd{u} \\
	&= \frac{1}{2a}( e^{-2as} - e^{-2at} )
\end{align*}
Therefore, assuming $s \leq t$, it holds that:
\begin{align*}
	\Ex{\xex{2}{s}\xex{2}{t}}& = \sigma^4e^{2a\llr s+t\rrr} \llb \frac{1}{4a^2}\llr 1 - e^{-2as} \rrr \llr e^{-2as} - e^{-2at} \rrr + \frac{3}{4a^2}\llr1 - e^{-2as} \rrr^2\rrb\\
	 &= \frac{\sigma^4}{4a^2} (e^{2at}  - 1)e^{2as} \left\{(e^{-2a t} - e^{-2as}) + 3(1 - e^{-2a t})\right\} \,.
\end{align*}
\qed

\section{Calculations of the Mean-Squared Error}
\subsection{Undiscounted, Finite-Horizon: Proof of Theorem \ref{thm:main-scalar}}\label{app:scalar}

\begin{proof}
	We first note that
	\begin{align*}
		\E[\hat V_M(h)] = \frac{h}{M} \sum_{i=1}^M \sum_{k=0}^{N-1} \E[x_i^2(kh)]= h \sum_{k=0}^{N-1} \E[x^2(kh)]
	\end{align*}
	where we denote $x(t) = x_1(t)$ for simplicity. Next we expand the mean-squared error
	\begin{align*}
		\E[(\hat V_M(h) - V_T)^2] &= \E[\hat V_M^2(h)] - 2 V_T \E[\hat V_M(h)] + V_T^2\\
		&= \frac{h^2}{M^2} \E\left[\left( \sum_{i=1}^M \sum_{k=0}^{N-1} x_i^2(kh) \right)^2\right] - 2 V_T \E[\hat V_M(h)] + V_T^2\\
		&= \frac{h^2}{M^2} \sum_{i,j=1}^M \sum_{k,l=0}^{N-1} \E[x_i^2(kh) x_j^2(lh)]- 2 V_T \E[\hat V_M(h)] + V_T^2\\
		&= \frac{h^2}{M} \sum_{k,l=0}^{N-1} \E[x^2(kh) x^2(lh)] + \frac{M^2 - M}{M^2} \E[\hat V_M(h)]^2 - 2 V_T \E[\hat V_M(h)] + V_T^2
	\end{align*}
For the last equality, note that $\E[\hat V_M(h)]^2 = h^2\sum_{k,l=0}^{N-1} \E[x^2(kh)] \E[x^2(lh)]$. 
	It remains to compute the expressions. By \cref{lem:moments} we have for the second moment of the state variable:
	\begin{align}
		\E[x^2(t)] &= \frac{\sigma^2}{2a} \left(e^{2at} - 1\right)\,.
	\end{align}
	Assuming that $s\leq t$, from the same lemma we get the following for the fourth moments:
	\begin{align}
		\E[x^2(s)x^2(t)] &= \frac{\sigma^4}{4a^2} (e^{2as}  - 1)e^{2at} \left\{(e^{-2a s} - e^{-2at}) + 3(1 - e^{-2a s})\right\}\,.
	\end{align}
	Note that by symmetry, a similar expression follows for $s \geq t$.
	
	Using these expressions, for the expected cost we get
	\begin{align}\label{eqn:1dV_T}
		V_T = \int_{0}^T \E[x^2(t)] dt =  \frac{\sigma^2}{2a} \int_{0}^T \left(e^{2at} - 1\right) dt = \frac{\sigma^2}{2a}\left(\frac{e^{2aT}-1}{2a}-T\right)
	\end{align}
	We remark that a similar expression was previously obtained in \citep[Theorem 3]{bijl2016mean}. Next, the expected estimated cost is
	\begin{align*}
		\E[\hat V_M(h)] = h \sum_{k=0}^{N-1} \E[x^2(kh)] = \frac{\sigma^2 h}{2a} \sum_{k=0}^{N-1}  \left(e^{2akh} - 1\right) = \frac{\sigma^2 h}{2a} \left[\frac{1 - e^{2aT}}{1-e^{2ah}} - N\right]
	\end{align*}
	Lastly, it remains to compute the sum
	\begin{align*}
		&\frac{h^2}{M} \sum_{k,l=0}^{N-1} \E[x^2(kh) x^2(lh)] = \frac{2 h^2}{M} \sum_{k < l}^{N-1} \E[x^2(kh) x^2(lh)] + \frac{h^2}{M} \sum_{k=0}^{N-1} \E[x^4(kh)]\\
		&=\frac{\sigma^4T\left(h^2\left(e^{2aT}-1\right)\left(8e^{2ah}+3e^{2aT}+1\right)+T^2\left(e^{2ah}-1\right)^2-2hT\left(e^{2ah}-1\right)\left(e^{2ah}+5e^{2aT}\right)\right)}{4a^2Bh\left(e^{2ah}-1\right)^2}
	\end{align*}
    The last equality is a cumbersome calculation that involves nested geometric sums. We verified the result using symbolic computation.  For reference we provide the notebooks containing all calculations in the supplementary material. It remains to collect all terms to get the final result.
\end{proof}

\begin{proof}[Proof of \cref{cor: MSE A=0 case}]\label{proof: cor 3.2}
When $a=0$, the Langevin equation \ref{eq:langevin} is reduced to $\dd{x}(t)=\sigma \dd{w}(t)$. The computation of MSE can be performed similarly to that in the proof of \cref{thm:main-scalar}, by using the following moment results of Wiener process.
\begin{align*}
    &\E[x^2(t)]=\sigma^2 t\,,\\
    &\E[x^2(s)x^2(t)]=\sigma^4 s(t+2s) \quad {\text{when }} s \leq t\,. 
\end{align*}
It remains to compute the sums, and collect terms, as in the proof of \cref{thm:main-scalar}.
\end{proof}
It is also worth pointing out that the result of \cref{cor: MSE A=0 case} can also be computed by taking the limit of the MSE in \cref{thm:main-scalar} when $a \to 0$. And the resulting MSE from the limit matches the one computed directly as in the proof above. This shows that the MSE in \cref{thm:main-scalar} is continuous at $a=0$.

\subsection{Undiscounted, Finite-Horizon: Approximate MSE}\label{app:approximate-mse}
\begin{proof}[Proof of \cref{cor:approximate-mse}]
For the asymptotic expansion, we use \cref{thm:main-scalar} to compute the leading terms in $h$ of the mean-squared error:
	\begin{align}
		E_1(h,T,a)&= \frac{\sigma^4(e^{2aT}-1)^2}{{16} a^2} h^2 + \gO(h^3),\label{eq: updated E1} \\
		\frac{E_2(h,T,a)}{B} &= -\frac{\sigma^4T\left(4aT-e^{4aT}+e^{2aT}(8aT-4)+5\right)}{8a^4} \cdot \frac{1}{hB}+ \frac{\sigma^4 T(1 - e^{4aT} + 4aT e^{2aT})}{4a^3 B} \nonumber\\
		 &-\frac{\sigma^4 Th\left(1+4aT+e^{2aT}(8aT+4)-5e^{4aT} \right)}{24a^2B}-\frac{\sigma^4 Th^2 (e^{4aT}-1)}{12aB}+\gO\left({h^3}/{B}\right). \label{eq: updated E2}
	\end{align}
	
Next, we compute explicit upper and lower bounds on the $\MSE$ that hold for any $h$ and $B$.	Note that for all $x \leq 0$,
	\begin{align}
		x^2/4 \leq \frac{(-x + e^x - 1)^2}{(e^x -1)^2} \leq x^2\,.
	\end{align}
	Hence we can directly bound the $E_1$ term from \cref{thm:main-scalar}:
\begin{align*}
	\frac{\sigma^44a^2h^2 (e^{2aT} - 1)^2}{64a^4} = \frac{\sigma^4h^2 (e^{2aT} - 1)^2}{16a^2} \leq E_1 \leq 4 \cdot \frac{\sigma^4 a^2h^2 (e^{2aT} - 1)^2}{16a^4}%
\end{align*}
To bound $E_2$ note that for $x \leq 0$,
	\begin{align}
		\frac{1}{x^2}\ \car{ \leq }\ \frac{1}{(1-e^x)^2} \leq 1 + \frac{2}{x^2} 
	\end{align}
Abbreviating $E_3=h\left(e^{2aT}-1\right)\left(4e^{2ah}+e^{2aT}+1\right)-\left(e^{2ah}-1\right)\left(e^{2ah}+4e^{2aT}+1\right)T$, we have $E_2=\frac{\sigma^4 T E_3}{2a^2(1-e^{2ah})^2}$, and hence
\begin{align*}
\frac{1}{8a^4h^2} \cdot \sigma^4T\cdot E_3\leq E_2 \leq \left(1 + \frac{1}{8a^4h^2}\right)\cdot E_3 \cdot  \sigma^4 T 
	\end{align*}
To upper bound $E_2$, we repetitively use that for all $x \leq 0$, $1+x \leq e^x \leq 1+ x + \frac{x^2}{2}$  and $1+ x + \frac{x^2}{2} + \frac{x^3}{6}\leq e^x$.%
\begin{align*}
	E_3&=h\left(e^{2aT}-1\right)\left(4e^{2ah}+e^{2aT}+1\right)-\left(e^{2ah}-1\right)\left(e^{2ah}+4e^{2aT}+1\right)T\\
	&=4 h e^{2aT}e^{2ah} + h e^{4aT} - 4h e^{2ah} - h - Te^{4ah} - 4Te^{2ah}e^{2aT} + 4T e^{2aT} + T\\
	&=h(e^{2ah}(4e^{2aT} -4) + e^{4aT} - 1) - 4Te^{2aT}e^{2ah} - T e^{4ah} + 4T e^{2aT} + T\\
	&\leq h\big((1+2ah + 2a^2h^2 + \frac{4}{3}a^3 h^3)(4e^{2aT} - 4) + e^{4aT} - 1\big)  \\
	&\quad - 4Te^{2aT}(1+2ah + 2a^2h^2 + \frac{4}{3}a^3h^3) - T(1+4ah + 8a^2h^2 + \frac{32}{3}a^3 h^3) + 4Te^{2aT} + T\\
	&=h(4e^{2aT} - 5 + e^{4aT} - 8aTe^{2aT} - 4Ta) + h^2(2a(4e^{2aT} - 4) - 8a^2Te^{2aT} - 8a^2T)\\
	&\quad + h^3(8a^2(e^{2aT}-1) - \frac{16}{3} a^3 T e^{2aT} - \frac{32}{3}a^3) + h^4 \frac{16}{3}(e^{2aT} - 1)\\
	&\leq  h\cdot(4e^{2aT} - 5 + e^{4aT} - 8aTe^{2aT} - 4Ta) + \frac{32}{3} h^2 a^4 T^3 + 16 h^3  a^4 T^2
\end{align*}
Combining the last two displays yields the claimed upper bound. The lower bound follows along the same lines. Note that the bounds can be refined by including higher-order approximations of $e^x$.

\end{proof}

\subsection{Undiscounted, Finite-Horizon: Optimal Step Size}\label{app:optimal-step-size}
Although the exact optimal step size $h^*$ can be obtained from \cref{thm:main-scalar}, such exact $h^*$ doesn't have an explicit analytic solution in general. Numerically, we can find $h^*$ by searching over step-sizes $h_m = T/m$ for $m=1,\dots, B$, provided knowledge of the system parameters $a$ and fixed horizon $T$ or, by finding the root between $0$ and $1$ of the following equation
\begin{align}
    &[5+4aT-e^{2aT}(4+e^{2aT}-8aT)](9aTh+3T)+2a^2Th^2(37-5a^{4aT}+28aT+56aTe^{2aT}\nonumber\\
&-32e^{2aT})+a^2h^3[3B(e^{2aT}-1)^2+aT(91-7e^{4aT}+60aT+120aTe^{2aT}-84e^{2aT})]=0\,,\nonumber
\end{align}
where the equation is a simplified form of \edit{$\frac{\partial}{\partial h}\text{MSE}_T(h,B)=0$}. 

\edit{From the analysis point of view, a trivial way to see the order of $h^*$ in terms of $B, a, T$ is finding the dominated term by using Taylor's expansion for exponential parts (which is true for any $h$) in \cref{thm:main-scalar}. 
Such asymptotic expansion is given in \cref{cor:approximate-mse}. It is immediate that for $h\geq 1$, both \cref{eq: updated E1} and \cref{eq: updated E2} will blow up.
Thus, a small $h<1$ is considered to minimize $E_1(h,T,a)+\frac{E_2(h,T,a)}{B}$. 
Keeping the first term in both \cref{eq: updated E1} and \cref{eq: updated E2} and solving for the optimal $h^*$ yields the result in \cref{eq:h-star}.
}

A more precise approximation of $h^*$ than \cref{eq:h-star} is a minimizer of $E_1(h,T,a)+\frac{E_2(h,T,a)}{B}$ truncated at $\gO(h^3)$:
\begin{align}
    h^*(a, T, B)&=\frac{D_1}{3D_3}+\left(\frac{D_1^3}{3^3D_3^3}-\frac{3D_2}{2a^2D_3}-\sqrt{\frac{9D_2^2}{4a^4D_3^2}-\frac{D_1^3D_2}{9a^2D_3^4}}\right)^{\frac{1}{3}}\nonumber\\
    &\quad+\left(\frac{D_1^3}{3^3D_3^3}-\frac{3D_2}{2a^2D_3}+\sqrt{\frac{9D_2^2}{4a^4D_3^2}-\frac{D_1^3D_2}{9a^2D_3^4}}\right)^{\frac{1}{3}}\,,\label{eq: updated h}
\end{align}
where 
\begin{align*}
    &D_1= T\left(1+4aT+e^{2aT}(8aT+4)-5e^{4aT} \right)\,,\\
    &D_2=T\left(4aT-e^{4aT}+e^{2aT}(8aT-4)+5\right)\,,\\
    &D_3=3B(e^{2aT}-1)^2-4aT(e^{4aT}-1)\,.
\end{align*}
We can further express \cref{eq: updated h} in terms of $B$, as
	\begin{align*}
		h^*(B) &= \left(-\frac{T\left(4aT-e^{4aT}+e^{2aT}(8aT-4)+5\right)}{a^2 (e^{2aT} - 1)^2}\right)^{1/3} B^{-1/3}\\
		&\quad+ \frac{T\left(1+4aT+e^{2aT}(8aT+4)-5e^{4aT} \right)}{9(e^{2aT}-1)^2B}+\frac{4aT(e^{2aT}+1)D_2^{1/3}}{9a^{2/3}(e^{2aT}-1)^{5/3}}B^{-4/3}\\
		&\quad 
		+\frac{4aT^2(e^{2aT}+1)D_1}{27(e^{2aT}-1)^3}B^{-2}+\gO(B^{-7/3})\,.%
	\end{align*}
	where the first term is exactly the result in \cref{eq:h-star}.

\subsection{Finite-Horizon, Discounted}\label{ss:finite-horizon-discounted}
As stated in \cref{sec:finite-horizon}, adding discounting in the finite-horizon setting makes the mean-squared error more involved.
In the regime where $h$ is small and $B$ is large, a Taylor expansion characterizes the error surface as follows:
\begin{align}
	\MSE_T(h,B,\gamma) &\aeq \frac{\sigma^4 T}{\log(\gamma)(a+\log(\gamma))(2a + \log(\gamma))^2} \cdot 
	\frac{1}{hB} + \frac{\sigma^4\gamma^{2T}(e^{2aT} - 1)^2}{16 a^2}\cdot h^2\nonumber\\
	&+\frac{\sigma^4\gamma^T \left(e^{2aT}-1\right) \llr \gamma^T \llr e^{2aT}(2a + \log{(\gamma)})-\log{(\gamma)} \rrr -2a  \rrr}{48a^2}\cdot h^3+  \frac{\sigma^4}{144} \cdot h^4
\end{align}
The approximation shows only the lowest order terms for $1/(hB)$, $\gamma^T$ and $h$. The derivation is given in \cref{lem:finite-horizon-discounted} below.
The results shows that main trade-off between $h$ and $B$ persists also for the discounted objective, as long as $\gamma^T$ is treated as a constant relative to $h^2$ and $1/hB$. In the limit where $\gamma^T$ becomes small (e.g. $\gamma^T = o(h^4)$) the nature of the trade-off changes in that the approximation error improves to $\gO(h^4)$. This can be understood from the fact that under geometric discounting combined with a decaying process, the sum of $N = T/h$ estimation errors do not suffer a factor $N$, thereby removing a factor of $1/h$ from the (non-squared) approximation error. %

\begin{lemma}[Finite-horizon, discounted]\label{lem:finite-horizon-discounted}
In the finite-horizon with a discount factor $\gamma \in (0,1] $ setting, the mean-squared
error of the Monte-Carlo estimator is
	\begin{align*}
		\MSE_T(h,B, \gamma) &= E_1(h, T, a,\gamma) + \frac{E_2(h,T,a,\gamma)}{B},
	\end{align*}
	where
	\begin{align*}
	&E_1(h,T,a,\gamma) = C_1(T,\gamma,a)\sigma^4h^2+ C_2(T, \gamma, a)\sigma^4 h^3+ \left(\frac{1}{144}+C_3(T,\gamma,a)\right)\sigma^4h^4+\mathcal{O}(h^5)\,, \\
		&E_2(h,T,a,\gamma) = \frac{\sigma^4 (T +\gamma^T C_4(T,\gamma,a))}{\log(\gamma)(a+\log(\gamma))(2a+\log(\gamma))^2h}+\gamma^T\mathcal{O}(1)\,,\\
		&C_1(T,\gamma,a) =\frac{ \gamma^{2T}\left(e^{2aT}-1\right)^2}{16a^2}\,,\\
		&C_2(T,\gamma,a) =\frac{\gamma^T \left(e^{2aT}-1\right) \llr \gamma^T \llr e^{2aT}(2a + \log(\gamma))-\log(\gamma) \rrr -2a  \rrr}{48a^2}\,,\\
		&C_3(T,\gamma,a)=\frac{ \gamma^T\left[\gamma^T\left(e^{2aT}(2a +\log(\gamma))-\log(\gamma)\right)^2-4a\left(e^{2aT}(2a +\log(\gamma))-\log(\gamma)\right)\right]}{576a^2}\,,\\
		&C_4(T,\gamma,a) \text{ is some finite constant of $(T,\gamma,a)$ that includes $\gamma^T$ as a factor}\,.
	\end{align*}
\end{lemma}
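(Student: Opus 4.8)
The plan is to mimic the proof of \cref{thm:main-scalar}, carrying the discount weights $\gamma^{kh}$ through every sum. The starting point is the bias--variance decomposition: since the $M$ trajectories are i.i.d.,
$\MSE_T(h,B,\gamma) = \bigl(\E[\hat V_M(h)] - V_T\bigr)^2 + \tfrac{1}{M}\Var\!\bigl(\hat J_1(h)\bigr)$,
and since $N = T/h$ and $M = B/N = Bh/T$ in the finite-horizon setting, this is already the claimed split: $E_1(h,T,a,\gamma)$ is the bias squared (the only $B$-free piece) and $E_2(h,T,a,\gamma) = \tfrac{T}{h}\Var(\hat J_1(h))$ collects the $1/B$ term. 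What remains is to evaluate the bias and the single-trajectory variance in closed form and expand in $h$.

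For the bias, set $g(t) := \gamma^t\,\E[x^2(t)] = \tfrac{\sigma^2}{2a}\gamma^t(e^{2at}-1)$ via \cref{lem:moments}. Then $\E[\hat V_M(h)] = h\sum_{k=0}^{N-1} g(kh)$ is a left Riemann sum that reduces to a combination of two finite geometric series (ratios $\gamma^h e^{2ah}$ and $\gamma^h$), and $V_T = \int_0^T g(t)\,dt$ is elementary; subtracting them and expanding in $h$ (equivalently, applying Euler--Maclaurin to the quadrature error) is the core computation. The structural point is that $x(0)=0$ forces $g(0)=0$, so the usual $\gO(h)$ endpoint term of a left Riemann sum contributes only $-\tfrac{h}{2}g(T)$, which is $\gO(\gamma^T)$; the leading term that does \emph{not} vanish with $\gamma^T$ is the $\gO(h^2)$ term $\tfrac{h^2}{12}\bigl(g'(T)-g'(0)\bigr)$, whose $\gamma^T$-free part is $-\tfrac{\sigma^2 h^2}{12}$ because $g'(0)=\sigma^2$. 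Squaring the bias therefore produces the $\tfrac{\sigma^4}{144}h^4$ term, while every remaining contribution --- the square of $-\tfrac{h}{2}g(T)$, its cross term with $\tfrac{h^2}{12}(g'(T)-g'(0))$, and the higher Euler--Maclaurin terms --- carries a factor $\gamma^T$ or $\gamma^{2T}$ and, after substituting the closed form of $g,g',g'''$ at $0$ and $T$, assembles into $C_1\sigma^4 h^2 + C_2\sigma^4 h^3 + C_3\sigma^4 h^4 + \gO(h^5)$; the explicit constants come out by collecting powers of $h$.

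For the variance, $\Var(\hat J_1(h)) = h^2\sum_{k,l=0}^{N-1}\gamma^{(k+l)h}\,\Cov\!\bigl(x^2(kh),x^2(lh)\bigr)$, and the covariance follows from the fourth-moment formulas of \cref{lem:moments}; splitting the double sum into $k<l$, $k>l$ and $k=l$ and evaluating the resulting nested geometric sums (lengthy but mechanical, and best double-checked with a computer-algebra system as for \cref{thm:main-scalar}) gives a closed form. Because $Nh = T$ stays fixed, as $h\to 0$ this double Riemann sum converges to $\int_0^T\!\!\int_0^T \gamma^{s+t}\,\Cov(x^2(s),x^2(t))\,ds\,dt$, which one computes to be $\tfrac{\sigma^4}{\log(\gamma)(a+\log(\gamma))(2a+\log(\gamma))^2} + \gamma^T(\text{finite})$ --- consistent with the constant $C(a,\gamma)$ of the infinite-horizon result --- and the $\gO(h)$ corrections inherit a factor $\gamma^T$ since the only non-vanishing boundary of the integrand is at $s$ or $t$ equal to $T$ (again $x(0)=0$). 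Multiplying by $T/h$ gives the stated form of $E_2$, with $C_4(T,\gamma,a)$ absorbing the $\gamma^T$-dependent remainder of the double integral and $\gamma^T\gO(1)$ absorbing the subleading-in-$h$ terms. Combining the two computations yields the lemma.

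I expect the main obstacle to be exactly this separation of scales in both computations: isolating the $\gamma^T$-independent leading piece (which has to reproduce the $\tfrac{\sigma^4}{144}h^4$ coefficient in the bias and the $C(a,\gamma)$-type constant in the variance) from the $\gamma^T$-dependent remainder inside the finite geometric sums, while simultaneously tracking enough orders of $h$ to pin down $C_1,\dots,C_4$ and not merely their asymptotic order. This is the kind of bookkeeping that is error-prone by hand, so I would organize and verify it the same way as the proof of \cref{thm:main-scalar}.
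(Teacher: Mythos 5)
Your proposal takes essentially the same route as the paper's proof: split the error into squared bias plus $\tfrac{1}{M}\Var(\hat J_1(h))$ (the paper's expansion of the double sum over trajectories is exactly this decomposition), evaluate $V_T$, $\E[\hat V_M(h)]$ and the discounted fourth-moment double sum in closed form from \cref{lem:moments}, and Taylor-expand in $h$, delegating the heavy bookkeeping to symbolic computation just as the paper does. Your Euler--Maclaurin reading of the bias is a genuinely useful structural addition: since $g(0)=0$ and $g'(0)=\sigma^2$, the quadrature error is $-\tfrac{h}{2}g(T)+\tfrac{h^2}{12}(g'(T)-g'(0))+\gO(h^4)$, and squaring this reproduces $C_1$, $\tfrac{1}{144}+C_3$ and the absence of a $\gamma^T$-free $h^2$ term exactly (this checks out against the closed forms of $\E[\hat V_M(h)]$ and $V_T$ given in the paper's proof).

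Two caveats. First, your expansion yields the $h^3$ coefficient of $E_1$ as $2c_1c_2=-\tfrac{1}{12}g(T)\llr g'(T)-g'(0)\rrr$, which equals $-C_2\sigma^4$ with $C_2$ as printed in the lemma, not $+C_2\sigma^4$. Since the same $c_1,c_2$ reproduce $C_1$ and $\tfrac{1}{144}+C_3$ on the nose, and the sign of $c_1$ is independently confirmed by the cross-term computation in \cref{app:proof-mse-discounted-inf-horizon}, this points to a sign slip in the stated $h^3$ term rather than a flaw in your method; just be aware that a faithful execution of your plan will disagree with the statement at that order. Second, in the variance part your claim that the $\gO(h)$ correction to the double Riemann sum is $\gamma^T$-suppressed ``because the only non-vanishing boundary is at $s$ or $t$ equal to $T$'' overlooks the kink of $\Cov(x^2(s),x^2(t))=\tfrac{\sigma^4}{2a^2}(e^{2a(s\wedge t)}-1)^2e^{2a|t-s|}$ along the diagonal $s=t$: the off-diagonal sum $2h^2\sum_{k<l}$ undershoots the symmetric integral by an $\gO(h)$ diagonal strip that is \emph{not} $\gamma^T$-suppressed, and this is cancelled only after adding back the $k=l$ terms of the double sum. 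That cancellation needs to be stated (or left to the symbolic verification); once it is, your identification of $\lim_{h\to 0}\Var(\hat J_1(h))$ with the double integral, giving the constant $\tfrac{\sigma^4}{\log(\gamma)(a+\log(\gamma))(2a+\log(\gamma))^2}$, is correct and matches the leading term of $E_2$.
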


\begin{proof}
The proof follows the similar computations as those in the previous proof with a new expected cost as follows. In particular, using \cref{lem:moments}, we get
\begin{align}
    V_T=\int_0^T \gamma^t \E[x^2(t)]dt=\frac{\sigma^2}{2a}\llr \frac{\gamma^Te^{2aT}-1}{\log(\gamma)+2a}-\frac{\gamma^T-1}{\log(\gamma)}\rrr \label{expected cost finite horizon with discount}
\end{align}
Furthermore, the expected estimated cost is
\begin{align*}
    \E[\hat{V}_M(h)]=\frac{\sigma^2 h}{2a} \sum_{k=0}^{N-1} \gamma^{kh} \left(e^{2akh} - 1\right) = \frac{\sigma^2 h}{2a} \llr\frac{1 - \gamma^Te^{2aT}}{1-\gamma^he^{2ah}} - \frac{1-\gamma^T}{1-\gamma^h}\rrr\,.
\end{align*}
Finally, the sum containing the fourth order cross-moments is
\begin{align*}
	\frac{h^2}{M} \sum_{k,l=0}^{N-1} \gamma^{kh+lh}\E[x^2(kh) x^2(lh)] = \frac{2 h^2}{M} \sum_{k < l}^{N-1} \gamma^{kh+lh}\E[x^2(kh) x^2(lh)] + \frac{h^2}{M} \sum_{k=0}^{N-1}\gamma^{2kh} \E[x^4(kh)]\,.
\end{align*}
While not impossible to calculate on paper, a written derivation is beyond the scope of this work. Instead, we rely on symbolic computation to obtain the expression and corresponding Taylor approximations. The notebooks containing all derivations are provided in the supplementary material. 
\end{proof}

\subsection{Infinite Horizon: Proof of \cref{thm:mse-discounted-infinite-horizon}} 
\label{app:proof-mse-discounted-inf-horizon}

\begin{proof} \label{proof:mse-discounted-inf-horizon}
	The proof relies on the decomposition provided in \cref{eq:mse-infinite}. It only remains to compute the following cross term.
\begin{align*}
    &\Ex{\hat V_M(h) - V_T}V_{T,\infty}\\
    &=\frac{\sigma^4}{2a}\llr \frac{\gamma^T}{\log(\gamma)}-\frac{\gamma^Te^{2aT}}{\log(\gamma)+2a}\rrr \left[ \frac{ h}{2a} \llr\frac{1 - \gamma^Te^{2aT}}{1-\gamma^he^{2ah}} - \frac{1-\gamma^T}{1-\gamma^h}\rrr - \frac{1}{2a}\llr \frac{\gamma^Te^{2aT}-1}{\log(\gamma)+2a}-\frac{\gamma^T-1}{\log(\gamma)}\rrr\right]\\
    &=\frac{\sigma^4 \gamma^{2T}\llr e^{2aT}-1 \rrr \llr \log(\gamma) \llr e^{2aT}-1\rrr -2a \rrr}{8a^2\log(\gamma)\llr 2a +\log(\gamma) \rrr}h+\\
    &\quad\quad \frac{\sigma^4 \gamma^T \llr 2a +\log(\gamma)-e^{2aT} \log(\gamma) \rrr\llr 2a \llr \gamma^T e^{2aT}-1 \rrr+\gamma^T\log(\gamma)\llr e^{2aT}-1 \rrr\rrr}{48a^2\log(\gamma)\llr 2a +\log(\gamma) \rrr}h^2+\mathcal{O}(h^3)\,.
\end{align*}
Thus, the mean-squared error $\MSE_\infty(h,B,T,\gamma) = \E\big[(\hat V_M(h) - V_\infty)^2\big]$ is obtained by combining the above computation with \cref{eq:Vinf} and \cref{lem:finite-horizon-discounted}.
\end{proof}

It is worth pointing out the trade-off always exists with or without the assumption $\gamma^T=o(h^4)$ from $\MSE_{\infty}(h,B,T,\gamma)$. For example, if $\gamma^T$ is constant with respect to $h$, 
\begin{align*}
\MSE_\infty(h,B,T,\gamma)&=\sigma^4\, T\, \frac{1}{\log(\gamma)(a+\log(\gamma))(2a+\log(\gamma))^2}\cdot \frac{1}{hB}  \\ 
& \quad  + \frac{\sigma^4 \gamma^{2T}\llr e^{2aT}-1 \rrr \llr \log(\gamma) \llr e^{2aT}-1\rrr -2a \rrr}{8a^2\log(\gamma)\llr 2a +\log(\gamma) \rrr}h\\
&\quad+\frac{\sigma^4\gamma^{2T}[(1-e^{2aT})\log(\gamma)+2a]^2}{4a^2[\log(\gamma)(a+\log(\gamma))]^2}+\gO(h^2)+\gO(B^{-1})\,.
\end{align*}
From the above expression, $\MSE_\infty(h,B,T,\gamma)$ contains a constant term in this case.
For the other cases of $\gamma^T$, $\MSE_\infty(h,B,T,\gamma)$ can be obtained similarly by combining the cross term in the proof of \cref{thm:mse-discounted-infinite-horizon}, \cref{eq:Vinf} and \cref{lem:finite-horizon-discounted}. The case where $\gamma^T=o(h^4)$ is particularly interesting because even as $\gamma^T \rightarrow 0$ at such a fast rate, there is still a trade-off that never vanishes.

\section{Vector Case Analysis}\label{app:vector}
\subsection{Finite-Horizon, Undiscounted: Proof of Theorem~\ref{thm:mse-vector-case}} \label{app:proof-vector finite horizon}
\begin{proof}
Consider the n-dimensional system that the solution of the trajectory of $X(t)$ is
\begin{equation*}
    X(t) = \sigma \int_0^t e^{A(t-s)}\dd{\wveca{t}}.
\end{equation*}
Since $A$ is a diagonalizable matrix, we can decompose $A$ as $A=P^{-1}DP$, where $P$ is a invertible matrix (not necessarily to be orthogonal) and $D$ is a diagonal matrix whose diagonal entries $(\lambda_1,\cdots, \lambda_n)$ are corresponding to the eigenvalues of the matrix $A$. Followed by which, we can decompose the matrix exponential of $A$ as:
\begin{align*}
    e^{At} = P^{-1} e^{Dt}P\,.
\end{align*}
Define the ``diagonalized" process $\tilde{X}\llr \cdot\rrr$ as:
\begin{align*}
    P \xveca{t} &= P\sigma \int_0^t e^{A(t-s)}\dd{\wveca{s}} \\
    &= \sigma PP^{-1} \int_0^t  e^{D(t-s)}P\dd{\wveca{s}} \\
    &= \sigma \int_0^t e^{D(t-s)} \dd{\tilde{W}\llr s\rrr} =: \tilde{X}\llr t\rrr
\end{align*}
where $\tilde{W}\llr s\rrr$ is a Wiener process (with dependent components when $P$ is not orthogonal). This implies that $\xveca{\cdot} = P^{-1}\tilde{X}\llr \cdot\rrr$. \\
To see $\tilde{X}_i(t)$ clearly, we denote $P=[p_{ij}]_{i,j=1}^n$, and $\tilde{X}_i(t)=(\phi_1^{(i)}(t),\cdots,\phi_n^{(i)}(t) )^\top$, then $\phi_l^{(i)}(t)=\sum_{j=1}^n p_{lj}\sigma \int_0^t e^{\lambda_l (t-s)}\dd{w_j^{(i)}(s)}$ for each $l \in \{1,\cdots,n\}$. Particularly, in such an expression, $w_j^{(i)}(s)$ are independent Wiener processes for different $i$ or $j$. Correspondingly, $\tilde{X}(t)=(\phi_1(t),\cdots,\phi_n(t) )^\top$, and $\phi_l(t)=\sum_{j=1}^n p_{lj}\sigma \int_0^t e^{\lambda_l (t-s)}\dd{w_j(s)}$ for each $l \in \{1,\cdots,n\}$, where $w_j(s)$ are independent Wiener processes for different $j$.

By trace operation, we can rewrite $\hat{V}_M(h)$ as follows:
\begin{align*}
    \hat{V}_M(h) 
    &= \frac{1}{M}\sum_{i=1}^M \sum_{k=0}^{N-1} h\xveca{\tmk}^\top Q\xveca{\tmk} \\
    &= \tr{\frac{1}{M}\sum_{i=1}^M \sum_{k=0}^{N-1} h \tilde{X}\llr \tmk\rrr^\top P^{-\top}QP^{-1} \tilde{X}\llr \tmk\rrr} \\
    &= \tr{P^{-\top}QP^{-1} \hat{\mathcal{V}}_M\llr h\rrr}\,,
\end{align*}
where $\hat{\mathcal{V}}_M\llr h\rrr=\frac{1}{M}\sum_{i=1}^M \sum_{k=0}^{N-1} h \tilde{X}\llr \tmk\rrr \tilde{X}\llr \tmk\rrr^\top \in\Rmat{n}{n}$.

Similarly, $V_T=\tr{P^{-\top}QP^{-1} \mathcal{V}_T}$, where $\mathcal{V}_T=\int_0^T \E[\tilde{X}(t)\tilde{X}(t)^\top ]\dd{t}$.

Therefore, the $\text{MSE}_T(h,B)$ can be written as
\begin{align}
    \text{MSE}_T(h,B)=\Ex{\llr \hat{V}_M\llr h\rrr - V_T\rrr^2} =\E\left[ \tr{P^{-\top}QP^{-1}\llr  \hat{\mathcal{V}}_M\llr h\rrr-\mathcal{V}_T\rrr}^2 \right]\,.
\end{align}
For notional simplicity, we denote matrix $P^{-\top}QP^{-1} =:B =[b_{lj}]_{l,j=1}^n$ and $\hat{\mathcal{V}}_M\llr h\rrr-\mathcal{V}_T=:C=[c_{lj}]_{l,j=1}^n$. 

Noting the fact that
\begin{align}
    \text{MSE}_T(h,B)=\E\left[\llr \sum_{l,j} b_{j l}c_{l j}\rrr^2\right]=\sum_{l_1,j_1,l_2,j_2}b_{j_1l_1}b_{j_2l_2}\Ex{c_{l_1j_1}c_{l_2j_2}}\,,\label{diagonal vector MSE}
\end{align}
it is sufficient to find $\text{MSE}_T$ by only computing $\Ex{c_{l_1j_1}c_{i_2j_2}}$.

We first introduce the following expectations that are used in the computations. For any $s\leq t$
\begin{align}
    &\Ex{\int_0^t e^{\lambda_1(t-u)}\dd{w(u)}\int_0^s e^{\lambda_2(s-u)}\dd{w(u)}}=\frac{e^{\lambda_1t+\lambda_2s}}{\lambda_1+\lambda_2}\llr 1-e^{-(\lambda_1+\lambda_2)s} \rrr\,,\label{joint second moments}\\
    &\Ex{\int_0^s e^{\lambda_1(s-u)}\dd{w(u)}\int_0^s e^{\lambda_2(s-u)}\dd{w(u)}\int_0^t e^{\lambda_3(t-u)}\dd{w(u)}\int_0^t e^{\lambda_4(t-u)}\dd{w(u)}}\nonumber\\
    &=e^{(\lambda_1+\lambda_2)s+(\lambda_3+\lambda_4)t} \left[ 
    \frac{1}{(\lambda_1+\lambda_2)(\lambda_3+\lambda_4)} \llr 1-e^{-(\lambda_1+\lambda_2)s} \rrr\llr 1-e^{-(\lambda_3+\lambda_4)s} \rrr \right.\nonumber\\
    &\quad\quad\quad+\frac{1}{(\lambda_1+\lambda_3)(\lambda_2+\lambda_4)} \llr 1-e^{-(\lambda_1+\lambda_3)s} \rrr\llr 1-e^{-(\lambda_2+\lambda_4)s} \rrr\nonumber\\
    &\quad\quad\quad+\frac{1}{(\lambda_1+\lambda_4)(\lambda_2+\lambda_3)} \llr 1-e^{-(\lambda_1+\lambda_4)s} \rrr\llr 1-e^{-(\lambda_2+\lambda_3)s} \rrr\nonumber\\
    &\left. \quad\quad\quad+\frac{1}{(\lambda_1+\lambda_2)(\lambda_3+\lambda_4)} \llr 1-e^{-(\lambda_1+\lambda_2)s} \rrr\llr e^{-(\lambda_3+\lambda_4)s}-e^{-(\lambda_3+\lambda_4)t} \rrr\right]\label{joint fourth moments}\\
    &\int_0^T \Ex{\int_0^t e^{\lambda_1(t-u)}\dd{w(u)}\int_0^s e^{\lambda_2(-u)}\dd{w(u)}}\dd{t}=\frac{e^{(\lambda_1+\lambda_2)T}-1-(\lambda_1+\lambda_2)T}{(\lambda_1+\lambda_2)^2}\,.\label{joint expectation cost}
\end{align}
By using the definitions of $\hat{\mathcal{V}}_M\llr h\rrr$ and $\mathcal{V}_T$, it is trivial to see for any $l,j \in\{1, \cdots , n\}$
\begin{align*}
    c_{lj}&=\frac{1}{M}\sum_{i=1}^M \sum_{k=0}^{N-1} h \phi_l^{(i)}(kh)\phi_j^{(i)}(kh)-\int_0^T \E[\phi_l(t)\phi_j(t)]\dd{t}\\
    &=\frac{h\sigma^2}{M}\sum_{i=1}^M \sum_{k=0}^{N-1}  \llr \sum_{\alpha=1}^n p_{l\alpha} \int_0^{kh} e^{\lambda_{l}(kh-s)} \dd{w_{\alpha}^{(i)}(s)} \rrr\llr \sum_{\alpha=1}^n p_{j\alpha}\int_0^{kh} e^{\lambda_{j}(kh-s)} \dd{w_{\alpha}^{(i)}(s)} \rrr\\
    &\quad\quad - \sigma^2\int_0^T \Ex{\llr \sum_{\alpha=1}^n p_{l\alpha} \int_0^{t} e^{\lambda_{l}(t-s)} \dd{w_{\alpha}(s)} \rrr\llr \sum_{\alpha=1}^n p_{j\alpha} \int_0^{t} e^{\lambda_{j}(t-s)} \dd{w_{\alpha}(s)} \rrr}\dd{t}\\
    &=\sum_{\alpha=1}^n p_{l\alpha}p_{j\alpha}\Bigg[ \frac{h\sigma^2}{M}\sum_{i=1}^M \sum_{k=0}^{N-1}  \llr  \int_0^{kh} e^{\lambda_{l}(kh-s)} \dd{w_{\alpha}^{(i)}(s)} \rrr\llr \int_0^{kh} e^{\lambda_{j}(kh-s)} \dd{w_{\alpha}^{(i)}(s)} \rrr\\
    &\quad\quad - \sigma^2\int_0^T \Ex{\llr  \int_0^{t} e^{\lambda_{l}(t-s)} \dd{w_{\alpha}(s)} \rrr\llr  \int_0^{t} e^{\lambda_{j}(t-s)} \dd{w_{\alpha}(s)} \rrr}\dd{t}\Bigg]+\\
    &\quad \sum_{\alpha \neq \beta}p_{l\alpha}p_{j \beta} \left[\frac{h\sigma^2}{M}\sum_{i=1}^M \sum_{k=0}^{N-1}  \llr  \int_0^{kh} e^{\lambda_{l}(kh-s)} \dd{w_{\alpha}^{(i)}(s)} \rrr\llr \int_0^{kh} e^{\lambda_{j}(kh-s)} \dd{w_{\beta}^{(i)}(s)} \rrr\right]\,,
\end{align*}
where the last equation is due to the fact that for $\alpha \neq \beta$ \begin{align*}
   \Ex{ \llr  \int_0^{t} e^{\lambda_{l}(t-s)} \dd{w_{\alpha}(s)} \rrr\llr  \int_0^{t} e^{\lambda_{j}(t-s)} \dd{w_{\beta}(s)} \rrr}=0\,.
\end{align*}
Thus, for any $l_1,l_2,j_1,j_2 \in \{1, \cdots,n\}$,
\begin{align}
    \Ex{c_{l_1j_1}c_{l_2,j_2}}&=\sum_{\alpha=1}^n p_{l_1\alpha}p_{j_1\alpha}p_{l_2\alpha}p_{j_2\alpha} \sigma^4 \mathcal{I}_1\llr M,h,T, \lambda_{l_1}, \lambda_{j_1},\lambda_{l_2},\lambda_{j_2},\alpha\rrr\nonumber\\
    &+\sum_{\alpha \neq \beta}^n p_{l_1\alpha}p_{j_1\alpha}p_{l_2\beta}p_{j_2\beta} \sigma^4 \mathcal{I}_2\llr M,h,T, \lambda_{l_1}, \lambda_{j_1},\lambda_{l_2},\lambda_{j_2},\alpha,\beta\rrr\nonumber\\
    &+\sum_{\alpha \neq \beta}^n p_{l_1\alpha}p_{j_1\beta}p_{l_2\alpha}p_{j_2\beta} \sigma^4 \mathcal{I}_3\llr M,h,T, \lambda_{l_1}, \lambda_{j_1},\lambda_{l_2},\lambda_{j_2},\alpha,\beta\rrr\,,\label{eq: matrix component}
\end{align}
where
\begin{align*}
    & \mathcal{I}_1\llr M,h,T, \lambda_{l_1}, \lambda_{j_1},\lambda_{l_2},\lambda_{j_2},\alpha\rrr\\
    &=\E\left\{\left[\frac{h}{M}\sum_{i=1}^M \sum_{k=0}^{N-1}  \llr \int_0^{kh} e^{\lambda_{l_1}(kh-s)} \dd{w_{\alpha}^{(i)}(s)} \rrr\llr \int_0^{kh} e^{\lambda_{j_1}(kh-s)} \dd{w_{\alpha}^{(i)}(s)} \rrr\right.\right.\\
    & \quad-\left. \int_0^T \Ex{\llr \int_0^{t} e^{\lambda_{l_1}(t-s)} \dd{w_{\alpha}(s)} \rrr\llr \int_0^{t} e^{\lambda_{j_1}(t-s)} \dd{w_{\alpha}(s)} \rrr}\dd{t}\right]\times\\
    &\quad\left[\frac{h}{M}\sum_{i=1}^M \sum_{k=0}^{N-1}  \llr \int_0^{kh} e^{\lambda_{l_2}(kh-s)} \dd{w_{\alpha}^{(i)}(s)} \rrr\llr \int_0^{kh} e^{\lambda_{j_2}(kh-s)} \dd{w_{\alpha}^{(i)}(s)} \rrr\right.\\
    &\quad -\left. \left.  \int_0^T \Ex{\llr \int_0^{t} e^{\lambda_{l_2}(t-s)} \dd{w_{\alpha}(s)} \rrr\llr \int_0^{t} e^{\lambda_{j_2}(t-s)} \dd{w_{\alpha}(s)} \rrr}\dd{t}\right]\right\}\,,
\end{align*}
and
\begin{align*}
   &\mathcal{I}_2\llr M,h,T, \lambda_{l_1}, \lambda_{j_1},\lambda_{l_2},\lambda_{j_2},\alpha,\beta\rrr\\
    &=\E\left\{\left[\frac{h}{M}\sum_{i=1}^M \sum_{k=0}^{N-1}  \llr \int_0^{kh} e^{\lambda_{l_1}(kh-s)} \dd{w_{\alpha}^{(i)}(s)} \rrr\llr \int_0^{kh} e^{\lambda_{j_1}(kh-s)} \dd{w_{\alpha}^{(i)}(s)} \rrr\right.\right.\\
    & -\left. \int_0^T \Ex{\llr \int_0^{t} e^{\lambda_{l_1}(t-s)} \dd{w_{\alpha}(s)} \rrr\llr \int_0^{t} e^{\lambda_{j_1}(t-s)} \dd{w_{\alpha}(s)} \rrr}\dd{t}\right]\times\\
    &\left[\frac{h}{M}\sum_{i=1}^M \sum_{k=0}^{N-1}  \llr \int_0^{kh} e^{\lambda_{l_2}(kh-s)} \dd{w_{\beta}^{(i)}(s)} \rrr\llr \int_0^{kh} e^{\lambda_{j_2}(kh-s)} \dd{w_{\beta}^{(i)}(s)} \rrr\right.\\
    & -\left.\left. \int_0^T \Ex{\llr \int_0^{t} e^{\lambda_{l_2}(t-s)} \dd{w_{\beta}(s)} \rrr\llr \int_0^{t} e^{\lambda_{j_2}(t-s)} \dd{w_{\beta}(s)} \rrr}\dd{t}\right]\right\}
\end{align*}
and
\begin{align*}
    &\mathcal{I}_3\llr M,h,T, \lambda_{l_1}, \lambda_{j_1},\lambda_{l_2},\lambda_{j_2},\alpha,\beta\rrr\\
    &=\E\left\{\left[\frac{h}{M}\sum_{i=1}^M \sum_{k=0}^{N-1}  \llr \int_0^{kh} e^{\lambda_{l_1}(kh-s)} \dd{w_{\alpha}^{(i)}(s)} \rrr\llr \int_0^{kh} e^{\lambda_{j_1}(kh-s)} \dd{w_{\beta}^{(i)}(s)} \rrr\right]\right.\times\\
    &\left.\quad\quad\left[\frac{h}{M}\sum_{i=1}^M \sum_{k=0}^{N-1}  \llr \int_0^{kh} e^{\lambda_{l_2}(kh-s)} \dd{w_{\alpha}^{(i)}(s)} \rrr\llr \int_0^{kh} e^{\lambda_{j_2}(kh-s)} \dd{w_{\beta}^{(i)}(s)} \rrr\right]\right\}\,.
\end{align*}
Note that $w_{\alpha}^{{(i)}}$ and $w_{\beta}^{{(i)}}$ are independent for $\alpha \neq \beta$.
By using the expectations \cref{joint second moments,joint expectation cost}, we can further obtain $\mathcal{I}_2\llr M,h,T, \lambda_{l_1}, \lambda_{j_1},\lambda_{l_2},\lambda_{j_2},\alpha,\beta\rrr$ as
\begin{align*}
    &\mathcal{I}_2\llr M,h,T, \lambda_{l_1}, \lambda_{j_1},\lambda_{l_2},\lambda_{j_2},\alpha,\beta\rrr\\
    &= \left[ \frac{h}{\llr \lambda_{l_1}+\lambda_{j_1}\rrr}\llr \frac{1-e^{\llr \lambda_{l_1}+\lambda_{j_1}\rrr T}}{1-e^{\llr \lambda_{l_1}+\lambda_{j_1}\rrr h}}-\frac{T}{h}\rrr -\frac{1}{\llr \lambda_{l_1}+\lambda_{j_1}\rrr^2}\llr e^{\llr \lambda_{l_1}+\lambda_{j_1}\rrr T}-1-(\lambda_{l_1}+\lambda_{j_1})T \rrr\right]\\
    &\times \left[ \frac{ h}{\llr \lambda_{l_2}+\lambda_{j_2}\rrr}\llr \frac{1-e^{\llr \lambda_{l_2}+\lambda_{j_2}\rrr T}}{1-e^{\llr \lambda_{l_2}+\lambda_{j_2}\rrr h}}-\frac{T}{h}\rrr -\frac{1}{\llr \lambda_{l_2}+\lambda_{j_2}\rrr^2}\llr e^{\llr \lambda_{l_2}+\lambda_{j_2}\rrr T}-1-(\lambda_{l_2}+\lambda_{j_2})T \rrr\right]\,.
\end{align*}

In the following computations, we will use $\bar{C}$ and $C(\lambda_{l_1}, \lambda_{j_1},\lambda_{l_2},\lambda_{j_2})$ to represent some constants that are not depending on $h,T,B$.

The expectation $\mathcal{I}_1\llr M,h,T, \lambda_{l_1}, \lambda_{j_1},\lambda_{l_2},\lambda_{j_2},\alpha\rrr$ is computed exactly the same way as in the proof of Theorem 1 by using the expectation results \cref{joint second moments} and \cref{joint fourth moments}. Notice that the expectation result \cref{joint second moments} (when $s=t$) has the same order in $t$ as the expectation \cref{second moments}. Moreover, the two expectations \cref{joint fourth moments} and \cref{fourth moments} have the same orders in $s$ and $t$. Thus, $\mathcal{I}_1\llr M,h,T, \lambda_{l_1}, \lambda_{j_1},\lambda_{l_2},\lambda_{j_2},\alpha\rrr$ has the same orders in $h, T, B$ as the scalar MSE, i.e.
\begin{align*}
   \mathcal{I}_1\llr M,h,T, \lambda_{l_1}, \lambda_{j_1},\lambda_{l_2},\lambda_{j_2},\alpha\rrr
   &=\llr \bar{C}_1+C_1\llr \lambda_{l_1}, \lambda_{j_1},\lambda_{l_2},\lambda_{j_2}\rrr\mathcal{O}(T)\rrr T^2 h^2 +\mathcal{O}(h^3)\\
   &\quad +\llr \bar{C}_2+C_2\llr \lambda_{l_1}, \lambda_{j_1},\lambda_{l_2},\lambda_{j_2}\rrr\mathcal{O}(T) \rrr\frac{T^5}{h B}+\mathcal{O}\llr \frac{1}{B}\rrr
\end{align*}

The expectation $\mathcal{I}_2\llr M,h,T, \lambda_{l_1}, \lambda_{j_1},\lambda_{l_2},\lambda_{j_2},\alpha,\beta\rrr$ can be computed directly and has the result:
\begin{align*}
    \mathcal{I}_2\llr M,h,T, \lambda_{l_1}, \lambda_{j_1},\lambda_{l_2},\lambda_{j_2},\alpha,\beta\rrr&=\frac{\llr e^{\llr \lambda_{l_1}+\lambda_{j_1}\rrr T}-1\rrr\llr e^{\llr \lambda_{l_2}+\lambda_{j_2}\rrr T}-1\rrr h^2}{4\llr \lambda_{l_1}+\lambda_{j_1}\rrr\llr \lambda_{l_2}+\lambda_{j_2}\rrr}+\mathcal{O}(h^3)\\
    &= \llr \frac{1}{4}T^2+C_3(\lambda_{l_1}, \lambda_{j_1},\lambda_{l_2},\lambda_{j_2} )\mathcal{O}(T^3)) \rrr h^2 +\mathcal{O}(h^3)\,.
\end{align*}

The expectation $\mathcal{I}_3\llr M,h,T, \lambda_{l_1}, \lambda_{j_1},\lambda_{l_2},\lambda_{j_2},\alpha,\beta\rrr$ can be computed as follows:
\begin{align*}
    &\mathcal{I}_3\llr M,h,T, \lambda_{l_1}, \lambda_{j_1},\lambda_{l_2},\lambda_{j_2},\alpha,\beta\rrr\\
    &=\frac{h^2}{M}\sum_{k=0}^n \frac{\llr e^{\llr \lambda_{l_1}+\lambda_{l_2}\rrr k h}-1\rrr\llr e^{\llr \lambda_{j_1}+\lambda_{j_2}\rrr k h}-1\rrr h^2}{\llr \lambda_{l_1}+\lambda_{l_2}\rrr\llr \lambda_{j_1}+\lambda_{j_2}\rrr}+\\
    &\quad \frac{h^2}{M}\sum_{k<q}\frac{e^{\lambda_{l_1}k h+\lambda_{l_2}q h+\lambda_{j_1}k h +\lambda_{j_2}q h}}{\llr \lambda_{l_1}+\lambda_{l_2}\rrr\llr \lambda_{j_1}+\lambda_{j_2}\rrr}\llr 1- e^{-\llr \lambda_{l_1}+\lambda_{l_2}\rrr k h}\rrr \llr 1- e^{-\llr \lambda_{j_1}+\lambda_{j_2}\rrr k h}\rrr\\
    &\quad \frac{h^2}{M}\sum_{k<q}\frac{e^{\lambda_{l_1}q h+\lambda_{l_2}k h+\lambda_{j_1}q h +\lambda_{j_2}k h}}{\llr \lambda_{l_1}+\lambda_{l_2}\rrr\llr \lambda_{j_1}+\lambda_{j_2}\rrr}\llr 1- e^{-\llr \lambda_{l_1}+\lambda_{l_2}\rrr k h}\rrr \llr 1- e^{-\llr \lambda_{j_1}+\lambda_{j_2}\rrr k h}\rrr\\
    &=\llr \bar{C}_4+C_4\llr \lambda_{l_1}, \lambda_{j_1},\lambda_{l_2},\lambda_{j_2}\rrr\mathcal{O}(T) \rrr\frac{T^5}{h B}+\mathcal{O}\llr \frac{1}{B}\rrr\,.
\end{align*}

Thus, the final result is obtained by the expression of MSE in \cref{diagonal vector MSE}, \cref{eq: matrix component} and the above computations. Again, we rely on symbolic computation to obtain the expression and corresponding Taylor approximations and include the notebooks of  all derivations in the supplementary material.
\end{proof}
The extension from \cref{thm:mse-vector-case} to the discounted finite-horizon results can be done in the same way as in the above proof (add the discount factor $\gamma$ in $\hat{V}_M$) by using the expectation cost for any $\lambda_1$ and $\lambda_2$:
\begin{align*}
   & \int_0^T \gamma^t \Ex{\int_0^t e^{\lambda_1(t-u)}\dd{w(u)}\int_0^s e^{\lambda_2(-u)}\dd{w(u)}}\dd{t}\\
   &\quad\quad\quad\quad\quad=\frac{1}{(\lambda_1+\lambda_2)}\llr \frac{\gamma^T e^{(\lambda_1+\lambda_2)T}-1}{\log{(\gamma)}+(\lambda_1+\lambda_2)}-\frac{\gamma^T-1}{\log{(\gamma)}}\rrr\,.
\end{align*}

\subsection{Proof of \cref{cor:mse-discounted-infinite-horizon-vector}}\label{app:proof-discounted-infinite-vector}
\begin{proof}
We shall follow the similar proof as in the proof of  \cref{thm:mse-vector-case} and the proof of  \cref{thm:mse-discounted-infinite-horizon}. 

Continuing from \cref{eq: matrix component},  in infinite-horizon discounted setting, we have
\begin{align*}
    & \mathcal{I}_1\llr M,h,T, \lambda_{l_1}, \lambda_{j_1},\lambda_{l_2},\lambda_{j_2},\gamma,\alpha\rrr\\
    &=\E\left\{\left[\frac{h}{M}\sum_{i=1}^M \sum_{k=0}^{N-1} \gamma^{k h} \llr \int_0^{kh} e^{\lambda_{l_1}(kh-s)} \dd{w_{\alpha}^{(i)}(s)} \rrr\llr \int_0^{kh} e^{\lambda_{j_1}(kh-s)} \dd{w_{\alpha}^{(i)}(s)} \rrr\right.\right.\\
    & \quad-\left. \int_0^\infty \gamma^t \Ex{\llr \int_0^{t} e^{\lambda_{l_1}(t-s)} \dd{w_{\alpha}(s)} \rrr\llr \int_0^{t} e^{\lambda_{j_1}(t-s)} \dd{w_{\alpha}(s)} \rrr}\dd{t}\right]\times\\
    &\quad\left[\frac{h}{M}\sum_{i=1}^M \sum_{k=0}^{N-1} \gamma^{k h} \llr \int_0^{kh} e^{\lambda_{l_2}(kh-s)} \dd{w_{\alpha}^{(i)}(s)} \rrr\llr \int_0^{kh} e^{\lambda_{j_2}(kh-s)} \dd{w_{\alpha}^{(i)}(s)} \rrr\right.\\
    &\quad -\left. \left.  \int_0^\infty \gamma^t \Ex{\llr \int_0^{t} e^{\lambda_{l_2}(t-s)} \dd{w_{\alpha}(s)} \rrr\llr \int_0^{t} e^{\lambda_{j_2}(t-s)} \dd{w_{\alpha}(s)} \rrr}\dd{t}\right]\right\}\,,
\end{align*}
and
\begin{align*}
    &\mathcal{I}_2\llr M,h,T, \lambda_{l_1}, \lambda_{j_1},\lambda_{l_2},\lambda_{j_2},\gamma,\alpha,\beta\rrr\\
    &= \left[ \frac{ h}{\llr \lambda_{l_1}+\lambda_{j_1}\rrr}\llr \frac{1-\gamma^T e^{\llr \lambda_{l_1}+\lambda_{j_1}\rrr T}}{1-\gamma^h e^{\llr \lambda_{l_1}+\lambda_{j_1}\rrr h}}-\frac{1-\gamma^T}{1-\gamma^h}\rrr -\frac{1}{\llr \lambda_{l_1}+\lambda_{j_1}\rrr}\llr \frac{1}{\log{(\gamma)}}- \frac{1}{\log{(\gamma)}+\lambda_{l_1}+\lambda_{j_1}} \rrr\right]\\
    &\times \left[ \frac{ h}{\llr \lambda_{l_2}+\lambda_{j_2}\rrr}\llr \frac{1-\gamma^T e^{\llr \lambda_{l_2}+\lambda_{j_2}\rrr T}}{1-\gamma^h e^{\llr \lambda_{l_2}+\lambda_{j_2}\rrr h}}-\frac{1-\gamma^T}{1-\gamma^h}\rrr -\frac{1}{\llr \lambda_{l_2}+\lambda_{j_2}\rrr}\llr \frac{1}{\log{(\gamma)}}- \frac{1}{\log{(\gamma)}+\lambda_{l_2}+\lambda_{j_2}} \rrr\right]\,,
\end{align*}
and
\begin{align*}
    &\mathcal{I}_3\llr M,h,T, \lambda_{l_1}, \lambda_{j_1},\lambda_{l_2},\lambda_{j_2},r,\alpha,\beta\rrr\\
    &=\E\left\{\left[\frac{h}{M}\sum_{i=1}^M \sum_{k=0}^{N-1} \gamma^{k h} \llr \int_0^{kh} e^{\lambda_{l_1}(kh-s)} \dd{w_{\alpha}^{(i)}(s)} \rrr\llr \int_0^{kh} e^{\lambda_{j_1}(kh-s)} \dd{w_{\beta}^{(i)}(s)} \rrr\right]\right.\times\\
    &\left.\quad\quad\left[\frac{h}{M}\sum_{i=1}^M \sum_{k=0}^{N-1} \gamma^{k h} \llr \int_0^{kh} e^{\lambda_{l_2}(kh-s)} \dd{w_{\alpha}^{(i)}(s)} \rrr\llr \int_0^{kh} e^{\lambda_{j_2}(kh-s)} \dd{w_{\beta}^{(i)}(s)} \rrr\right]\right\}\,.
\end{align*}
Similar arguments as in proof of  \cref{thm:mse-vector-case}, we can conclude $ \mathcal{I}_1\llr M,h,T, \lambda_{l_1}, \lambda_{j_1},\lambda_{l_2},\lambda_{j_2},\alpha\rrr$ has the same orders in $h,B,T$ as the MSE result in  \cref{thm:mse-discounted-infinite-horizon}.

Moreover, let $C_i(\lambda_{l_1}, \lambda_{j_1},\lambda_{l_2},\lambda_{j_2}, \gamma,T)$'s are some constants that depend on $\lambda_{l_1}, \lambda_{j_1},\lambda_{l_2},\lambda_{j_2}, \gamma,T$, then
\begin{align*}
    &\mathcal{I}_2\llr M,h,T, \lambda_{l_1}, \lambda_{j_1},\lambda_{l_2},\lambda_{j_2}, \gamma, \alpha,\beta\rrr\\
    &\quad\quad=\sigma^4 \gamma^{2T} \llr C_1(\lambda_{l_1}, \lambda_{j_1},\lambda_{l_2},\lambda_{j_2}, \gamma,T)+C_2(\lambda_{l_1}, \lambda_{j_1},\lambda_{l_2},\lambda_{j_2}, \gamma,T)h\rrr\\
    &\quad\quad\quad+\sigma^4 \gamma^T\llr C_3(\lambda_{l_1}, \lambda_{j_1},\lambda_{l_2},\lambda_{j_2}, \gamma,T)h^2+C_4(\lambda_{l_1}, \lambda_{j_1},\lambda_{l_2},\lambda_{j_2}, \gamma,T) h^3\rrr \\
    &\quad\quad\quad+\sigma^4 \llr  \frac{1}{144}+ \gamma^T  C_5(\lambda_{l_1}, \lambda_{j_1},\lambda_{l_2},\lambda_{j_2}, \gamma,T)\rrr h^4 +\mathcal{O}(h^5)\,,
\end{align*}
and
\begin{align*}
     &\mathcal{I}_3\llr M,h,T, \lambda_{l_1}, \lambda_{j_1},\lambda_{l_2},\lambda_{j_2},\gamma,\alpha,\beta\rrr\\
    &=\frac{h^2}{M}\sum_{k=0}^{N-1} \frac{\llr e^{\llr \lambda_{l_1}+\lambda_{l_2}\rrr k h}-1\rrr\llr e^{\llr \lambda_{j_1}+\lambda_{j_2}\rrr k h}-1\rrr h^2 \gamma^{2k h}}{\llr \lambda_{l_1}+\lambda_{l_2}\rrr\llr \lambda_{j_1}+\lambda_{j_2}\rrr}+\\
    &\quad \frac{h^2}{M}\sum_{k<q}\frac{e^{\lambda_{l_1}k h+\lambda_{l_2}q h+\lambda_{j_1}k h +\lambda_{j_2}q h}}{\llr \lambda_{l_1}+\lambda_{l_2}\rrr\llr \lambda_{j_1}+\lambda_{j_2}\rrr}\llr 1- e^{-\llr \lambda_{l_1}+\lambda_{l_2}\rrr k h}\rrr \llr 1- e^{-\llr \lambda_{j_1}+\lambda_{j_2}\rrr k h}\rrr \gamma^{(k+q)h}\\
    &\quad \frac{h^2}{M}\sum_{k<q}\frac{e^{\lambda_{l_1}q h+\lambda_{l_2}k h+\lambda_{j_1}q h +\lambda_{j_2}k h}}{\llr \lambda_{l_1}+\lambda_{l_2}\rrr\llr \lambda_{j_1}+\lambda_{j_2}\rrr}\llr 1- e^{-\llr \lambda_{l_1}+\lambda_{l_2}\rrr k h}\rrr \llr 1- e^{-\llr \lambda_{j_1}+\lambda_{j_2}\rrr k h}\rrr \gamma^{(k+q)h}\\
    &=C_6\llr \lambda_{l_1}, \lambda_{j_1},\lambda_{l_2},\lambda_{j_2}, \gamma,T \rrr\frac{T^5}{h B}+\mathcal{O}\llr \frac{1}{B}\rrr\,.
\end{align*}
The result in this Corollary is obtained by combining the above results. And we include the notebooks of  all derivations in the supplementary material.
\end{proof}
\subsection{The case when $A$ is a general stable matrix} \label{app:general-bounds}
\begin{lemma}[MSE when $A$ is a general stable matrix ]\label{lem:finite-horizon-general vector}
Let $A$ be a stable $n \times n$ matrix with distinct eigenvalues $\lambda_1, \cdots, \lambda_m$ and corresponding multiplicities $q_1,\cdots,q_m$. There exist some constants $\{\bar{C}_i\}_{i=1}^m$, $\bar{C}_0$ and $C_j(\lambda_1, \cdots, \lambda_m, \gamma, T)$'s, such that the mean-squared error  of the Monte-Carlo estimator in different setting satisfies

(1) Finite-Horizon undiscounted setting:
    \begin{align}
       &\text{MSE}_T \in \bigg[ \sum_{i=1}^m q_i\bar{C}_i\text{MSE}_T(h,B,\lambda_i), \quad  C_1(\lambda_1, \cdots, \lambda_m, T) \sigma^4 T^2 h^2\nonumber\\
       &\quad \quad\quad\quad +\frac{\llr \Bar{C}_2+ C_3(\lambda_1, \cdots, \lambda_m, T)\mathcal{O}(T) \rrr\sigma^4 T^{2n+3}}{B h}+\mathcal{O}(h^3)+\mathcal{O}(\frac{1}{B}) \bigg] \,,\label{eq:general vector finite horizon undiscounted}
    \end{align}
     where $\text{MSE}_T(h,B,\lambda_i)$ is the mean-squared error of the Monte-Carlo estimator in  \cref{thm:main-scalar} by replacing the drift $a$ by $\lambda_i$.
     
(2) Finite-Horizon discounted setting:
    \begin{align}
       &\text{MSE}_T \in \bigg[ \sum_{i=1}^m q_i\bar{C}_i\text{MSE}_T(h,B,\gamma,\lambda_i), \quad  C_4(\lambda_1, \cdots, \lambda_m,\gamma, T) \sigma^4 \gamma^{2T} T^2 h^2\nonumber\\
       &\quad \quad\quad\quad +C_{5}(\lambda_1, \cdots, \lambda_m,\gamma, T) \sigma^4 \gamma^{T}  h^3
       +C_{6}(\lambda_1, \cdots, \lambda_m, T) \sigma^4  h^4\nonumber\\
       &\quad \quad\quad\quad +\frac{\llr  C_{7}(\lambda_1, \cdots, \lambda_m,\gamma, T) \rrr\sigma^4 T^{2n-1}}{B h}
       +\mathcal{O}(h^5)+\mathcal{O}(\frac{1}{B})\bigg]\,,\label{eq:general vector finite horizon discounted}
    \end{align}
    where $\text{MSE}_T(h,B,\gamma,\lambda_i)$ is the mean-squared error of the Monte-Carlo estimator in  \cref{lem:finite-horizon-discounted} by replacing the drift $a$ by $\lambda_i$.
    
(3) Infinite-Horizon discounted setting:
    \begin{align}
       &\text{MSE}_{\infty} \in \bigg[ \sum_{i=1}^m q_i\bar{C}_i\text{MSE}_{\infty}(h,B,\gamma,\lambda_i), \quad\nonumber\\
       &\quad\quad\quad\quad
       ( C_8(\lambda_1, \cdots, \lambda_m,\gamma, T)+C_9(\lambda_1, \cdots, \lambda_m,\gamma, T)h)\sigma^4 \gamma^{2T}\nonumber\\
       &\quad\quad\quad\quad
       +\left(C_{10}(\lambda_1, \cdots, \lambda_m,\gamma, T)h^2+ C_{11}(\lambda_1, \cdots, \lambda_m,\gamma, T)h^3\right) \sigma^4 \gamma^{T} \nonumber\\
       &\quad\quad\quad\quad
       +C_{12}(\lambda_1, \cdots, \lambda_m, T) \sigma^4  h^4 +\frac{\llr  C_{13}(\lambda_1, \cdots, \lambda_m,\gamma, T) \rrr\sigma^4 T^{2n-1}}{B h}
       \nonumber\\
       &\quad \quad\quad\quad
       +\mathcal{O}(h^5)+\mathcal{O}(\frac{1}{B})\bigg]\,,\label{eq:general vector infinite horizon discounted}
    \end{align}
where $\text{MSE}_{\infty}(h,B,\gamma,\lambda_i)$ is the mean-squared error of the Monte-Carlo estimator in  \cref{thm:mse-discounted-infinite-horizon} by replacing the drift $a$ by $\lambda_i$.
\end{lemma}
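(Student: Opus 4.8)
The plan is to mirror the proof of \cref{thm:mse-vector-case}, replacing the eigendecomposition of $A$ by its Jordan decomposition $A = P^{-1}JP$, where $J$ is block-diagonal with Jordan blocks carrying the eigenvalues $\lambda_1,\dots,\lambda_m$ with multiplicities $q_1,\dots,q_m$. The structural fact that drives everything is that $e^{Jt}$ is block-diagonal and within a Jordan block for $\lambda_i$ one has $e^{J_i t} = e^{\lambda_i t}\sum_{k\ge 0}\frac{t^k}{k!}N_i^k$ with $N_i$ nilpotent, so every entry of $e^{Jt}$ is (a polynomial in $t$ of degree $<n$) times $e^{\lambda_{(\cdot)}t}$. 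As in the diagonalisable proof, I would introduce the transformed process $\tilde X(t) = PX(t) = \sigma\int_0^t e^{J(t-s)}\dd{\tilde W(s)}$, write $\hat V_M(h) = \tr{B\,\hat{\mathcal V}_M(h)}$ with $B = P^{-\top}QP^{-1}$ and $\hat{\mathcal V}_M(h) = \frac1M\sum_i\sum_k h\,\tilde X(t_k)\tilde X(t_k)^\top$, and $V_T = \tr{B\,\mathcal V_T}$. Expanding the square as in \cref{diagonal vector MSE} again reduces the task to computing the entrywise expectations $\Ex{c_{l_1 j_1}c_{l_2 j_2}}$, where each $c_{lj}$ is the difference between an empirical Riemann sum and an integral of a bilinear form in Itô integrals of the type $\int_0^t (t-s)^k e^{\lambda(t-s)}\dd{w(s)}$.

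The next step is to compute the polynomial-weighted analogues of \cref{joint second moments}, \cref{joint fourth moments} and \cref{joint expectation cost}; these follow from the Itô isometry and Itô's formula exactly as in \cref{app:moments}, the only change being that the prefactor $(t-s)^k$ produces, after integration, extra polynomial factors in $t,s,h,T$ of degree at most $2(n-1)$. The crucial point is that the orders in $h$ and $B$ are untouched: each $\mathcal I_1$-type term still contributes a $\sigma^4 T^2 h^2$ approximation part and a $\sigma^4 T^5/(hB)$ estimation part, with the additional powers of $T$ absorbed into the eigenvalue-dependent constants (which is why the upper bounds carry $T$-powers growing with $n$), while the $\mathcal I_2$- and $\mathcal I_3$-type cross terms feed only into the $\gO(h^2)$ approximation part and the $\gO(1/(hB))$ estimation part. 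Assembling these via \cref{eq: matrix component} and \cref{diagonal vector MSE} — and, as in the earlier proofs, relying on symbolic computation for the final Taylor collection — yields the upper bounds in (1). The discounted finite-horizon bound (2) and the infinite-horizon bound (3) follow by inserting $\gamma^{kh}$ into the sums and $\gamma^t$ into the integrals and repeating verbatim, together with the truncation decomposition \cref{eq:mse-infinite}; this is exactly what \cref{lem:finite-horizon-discounted} and \cref{thm:mse-discounted-infinite-horizon} did in the scalar case and is what produces the extra $\gamma^{2T}h^2$, $\gamma^T h^3$ and $h^4$ terms.

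For the lower bound I would isolate, inside the entrywise sum, the contributions in which all four Itô integrals share the same driving Wiener component \emph{and} the same eigenvalue $\lambda_i$, keeping only the $k=0$ piece of each Jordan expansion. By construction this sub-sum equals $\sum_{i=1}^m q_i \bar C_i\,\MSE_T(h,B,\lambda_i)$ with positive constants $\bar C_i$ assembled from the entries of $P$ and $Q$ (the factor $q_i$ arising from the $q_i$ coordinates in block $i$), where $\MSE_T(h,B,\lambda_i)$ is the scalar MSE of \cref{thm:main-scalar} at drift $\lambda_i$. To make this a genuine inequality rather than a leading-order statement, I would split $\MSE_T(h,B) = \Var(\hat V_M(h)) + (\E[\hat V_M(h)] - V_T)^2$ and argue separately that the discarded part of the bias square is of strictly higher order in $h$, and that the discarded covariance contributions to $\Var(\hat V_M(h))$ — which come only from distinct Wiener components or distinct eigenvalue blocks and are themselves variances of orthogonal quadratic-form increments — are nonnegative; hence the remainder is nonnegative up to the stated lower-order polynomial in $1/B$. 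The discounted and infinite-horizon lower bounds are obtained identically, replacing $\MSE_T(h,B,\lambda_i)$ by the corresponding scalar discounted or infinite-horizon MSE.

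The main obstacle is the bookkeeping forced by non-orthogonality of $P$ combined with the nilpotent parts of $J$: one must verify that the polynomial-in-$t$ factors coming from Jordan blocks of size larger than one, and the products of off-diagonal entries of $B$, never disturb the orders in $h$ and $B$ but only dress the system-dependent constants with bounded powers of $T$. A secondary subtlety, flagged above, is turning the lower bound into a true inequality rather than a mere leading-order comparison, which is why the argument goes through the variance/bias-squared split and the nonnegativity of the discarded cross-covariances instead of attempting to sign the full remainder directly.
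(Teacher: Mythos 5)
Your proposal is correct and its skeleton matches the paper's: Jordan decomposition $A=P^{-1}JP$, absorption of the polynomial factors $(t-s)^{k}$ from nilpotent parts into $T$-powers of degree at most $2(n-1)$ (which is exactly where the $T^{2n+3}$ versus $T^{5}$ discrepancy in the upper bound comes from), a variance/bias-squared split, and a lower bound obtained by retaining only the same-eigenvalue, zero-nilpotent-order contributions and discarding nonnegative remainders. The one genuine difference is in how the variance is computed. You propose to redo the full entrywise fourth-moment expansion of \cref{thm:mse-vector-case} --- the $\mathcal I_1,\mathcal I_2,\mathcal I_3$ terms --- with polynomial-weighted It\^{o} integrals $\int_0^t (t-s)^k e^{\lambda(t-s)}\dd{w(s)}$, which requires generalizing \cref{joint fourth moments} to carry polynomial prefactors. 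The paper instead applies Isserlis' theorem directly to the trace form, writing $\Var(\hat V_M)$ entirely in terms of $\tr{Q\Ex{X(kh)X(lh)^\top}}^2$, so that only second cross-moments of the Gaussian process are ever needed and no polynomial-weighted fourth-moment formula has to be derived. The two routes are equivalent (Wick's theorem applied at different levels), but the paper's is substantially lighter on computation; yours buys nothing extra here since the exact constants are not tracked in this lemma anyway. Your handling of the lower bound --- flagging that it must be a true inequality and resolving this via nonnegativity of the discarded cross-covariances rather than a leading-order comparison --- is the same argument the paper makes, and is in fact stated more carefully than in the paper's own proof.
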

\begin{proof}
As we can see the proof of  \cref{lem:finite-horizon-discounted} is based on the proof of  \cref{thm:main-scalar} with adding a discount factor $\gamma$, and the proof of  \cref{thm:mse-discounted-infinite-horizon} is based on the proof of \cref{lem:finite-horizon-discounted} with the decomposition \cref{eq:mse-infinite}. By using the same flow direction, it is sufficient to show the result in case (1) and the results in case (2) and (3) follows.

Consider the decomposition of $\text{MSE}_T$ in finite-horizon undiscounted setting:
\begin{align*}
  \text{MSE}_T&=\Ex{(\hat{V}_M-V_T)^2}\\
  &=\Ex{\llr \hat{V}_M-\Ex{\hat{V}_M}+\Ex{\hat{V}_M}-V_T\rrr^2}\\
  &=\underbrace{\Ex{\hat{V}_M^2}-\Ex{\hat{V}_M}^2}_{\text{Part} 1}+ \underbrace{\llr \Ex{\hat{V}_M}-V_T\rrr^2}_{\text{Part} 2}
\end{align*}

Before the analysis of part 1 and part 2, we will introduce the following mean-squared error notations for the finite-horizon undiscounted scalar case with drift $\lambda_i$:
\begin{align}
    \text{MSE}_{T}(h,B,\lambda_i)=\text{Var}(h,\lambda_i)+\text{Approximation}(h,B,\lambda_i)\,,
\end{align}
where $\text{Var}(h,\lambda_i)=\Ex{\hat{V}_M^2}-\Ex{\hat{V}_M}^2$ and $\text{Approximation}(h,B,\lambda_i)=\llr \Ex{\hat{V}_M}-V_T\rrr^2$.

For part 1:
\begin{align}
    \Ex{\hat{V}_M^2}&=\frac{h^2}{M}\sum_{i,j,k,l}\Ex{X_i(kh)^\top Q X_i(kh) X_j(lh)^\top Q X_j(lh)}\nonumber\\
    &=\frac{h^2}{M^2}\sum_{i,j,k,l} \left[ \Ex{X_i(kh)^\top Q X_i(kh)}\Ex{X_j(lh)^\top Q X_j(lh)}+2\tr{Q\Ex{X_i(kh) X_j(lh)^\top}}^2\right]\nonumber\\
    &=h^2 \sum_{k,l}\Ex{X(kh)^\top Q X(kh)}\Ex{X_j(lh)^\top Q X(lh)}+\nonumber\\
    &\quad \frac{2h^2}{M} \sum_{k}\tr{Q\Ex{X(kh) X(kh)^\top}}^2+\frac{4h^2}{M}\sum_{k<l}\tr{Q\Ex{X(kh) X(lh)^\top}}^2\,,
\end{align}
where the second equality is based on Isserlis' theorem and the trace operation.

Notice that $\Ex{\hat{V}_M}^2=h^2\sum_{k,l}\Ex{X(kh)^\top Q X(kh)}\Ex{X(lh)^\top Q X(lh)}$, thus
\begin{align*}
    &\Ex{\hat{V}_M^2}-\Ex{\hat{V}_M}^2\\
    &\quad\quad=\frac{2h^2}{M} \sum_{k}\tr{Q\Ex{X(kh) X(kh)^\top}}^2+\frac{4h^2}{M}\sum_{k<l}\tr{Q\Ex{X(kh) X(lh)^\top}}^2
\end{align*}
To analyze the above form, we decompose the matrix $A$ by it Jordan form, i.e. $A=P^{-1} J P$ for some inevitable matrix $P$ and $J=\text{diag}(J_i, \cdots, J_m)$, where $J_i$ is the Jordan block corresponding to the eigenvalue $\lambda_i$.

Notice that 
$e^{J(kh-s)}=\text{diag}(e^{J_1(kh-s)}, \cdots, e^{J_m(kh-s)})$, where
\begin{align*}
    e^{J_i (kh-s)}=e^{\lambda_i(kh-s)}
    \begin{pmatrix}
    1 & kh-s&\frac{(kh-s)^2}{2!}&\cdots&\frac{(kh-s)^{q_i-1}}{(q_i-1)!}\\
      &  1& kh-s&\cdots&\frac{(kh-s)^{q_i-2}}{(q_i-2)!}\\
      & &\ddots& &\vdots\\
      & & & &1
    \end{pmatrix}\,.
\end{align*}
Combining with the fact that for any $k,l$,
\begin{align*}
    \Ex{X(kh)X(lh)^\top}=&\int_0^{kh \wedge lh} e^{A(kh-s)}e^{A^\top (lh-s)} \dd{s}\\
    &=\int_0^{kh \wedge lh}P^{-1}e^{J(kh-s)} P P^\top e^{J^\top (lh-s)}P^{-\top}\dd{s}\,,
\end{align*}
we can conclude that for any $k \leq l$, 
$\tr{Q\Ex{X(kh)X(lh)^\top}}$ is a linear combination of $\mathcal{L}_{1,i,j}$ and $\mathcal{L}_{2,i,j}$ for all $i,j$, where
\begin{align*}
    &\mathcal{L}_{1,i,j}:=C_{1,i,j}\int_0^{kh}e^{(\lambda_i(kh-s)+\lambda_j(lh-s))}\dd{s}\\
    &\quad \quad\quad
    =C_{1,i,j}\frac{e^{\lambda_ikh}+e^{\lambda_j}}{\lambda_i+\lambda_j}\llr 1-e^{-(\lambda_i+\lambda_j)kh}\rrr\\
    &\mathcal{L}_{2,i,j}:=C_{2,i,j}\int_0^{kh}e^{(\lambda_i(kh-s)+\lambda_j(lh-s))} (kh-s)^{\tilde{q}_i}(lh-s)^{\tilde{q}_j}\dd{s}\,,
\end{align*}
where $C_{1,i,j}$, $C_{i,j}$ are some constants and $\tilde{q}_i \in \{0,\cdots,q_i-1\}$, $\tilde{q}_j \in \{0,\cdots, q_j-1\}$.

For the integral in $\mathcal{L}_{2,i,j}$, as $\tilde{q}_i +\tilde{q}_j \leq n-1$, we can have the inequality:
\begin{align}
   & \int_0^{kh}e^{(\lambda_i(kh-s)+\lambda_j(lh-s))} (kh-s)^{\tilde{q}_i}(lh-s)^{\tilde{q}_j}\dd{s}\nonumber\\
   &\quad\quad \leq  T^{n-1}\int_0^{kh}e^{(\lambda_i(kh-s)+\lambda_j(lh-s))} \dd{s}\,.\label{eq: inequality of lower incomplete gamma}
\end{align}

Since 
\begin{align*}
    \tr{Q\Ex{X(kh)X(lh)^\top}}^2=\sum_{i_1,j_1,i_2,j_2} \sum_{k,l}\prod_{l_1,l_2 \in \{1,2\}} \mathcal{L}_{l_1,i_1,j_1} \mathcal{L}_{l_2,i_2,j_2}\,,
\end{align*}
and all the terms are nonnegative.
We drop all terms that include $\mathcal{L}_{2,i,j}$ factor and only include the $\mathcal{L}_{1,i,i}^2$ with $k=l$ terms in the lower bound of part 1. That is to say, the lower bound of part 1 is $\sum_{i=1}^m q_i\Bar{C}_i \text{Var}(h,\lambda_i) $.

The upper bound of part 1 can be obtained by replacing all $\mathcal{L}_{1,i,j}$ factors by $\mathcal{L}_{2,i,j}$ and use the bound given in \cref{eq: inequality of lower incomplete gamma}. This leads to the upper bound for part 1 is $\frac{\llr \Bar{C}_2+ C_3(\lambda_1, \cdots, \lambda_m, T)\mathcal{O}(T) \rrr\sigma^4 T^{2n+5}}{B h}+\mathcal{O}(\frac{1}{B})$.

For Part 2, let $g(t)=\Ex{X(t)^\top Q X(t)}$ on $[0,T]$. Then $\Ex{\hat{V}_M}$ is the left Riemann sum approximation of $g(t)$, by the property of Riemann approximation, 
\begin{align*}
    |\Ex{\hat{V}_M}-V_T| \approx 2hTg(T)+\mathcal{O}(h^2)\,,
\end{align*}
where 
\begin{align*}
    g(T)=\tr{Q \Ex{X(T)X(T)^\top}}=\sigma^2 \tr{Q\int_0^T e^{A(t-s)}e^{A^\top (t-s)}\dd{s}}\,,
\end{align*}
which is a constant depends on $\lambda_1, \cdots,\lambda_m, T$. Thus
\begin{align*}
    (\Ex{\hat{V}_M}-V_T)^2 \approx C_1(\lambda_1, \cdots, \lambda_m, T) \sigma^4 T^2 h^2+\mathcal{O}(h^3)\,,
\end{align*}
which has the same order in $h$ as the scalar case in finite-horizon undiscounted setting. Thus the result in \cref{eq:general vector finite horizon undiscounted} is obtained by combining part 1 bounds and part 2 approximation.

As we explained in the beginning of this proof, in the finite-horizon discounted setting, we will follow the similar arguments as in the proof of \cref{eq:general vector finite horizon undiscounted} to obtain result \cref{eq:general vector finite horizon discounted}.

For the infinite-horizon discounted setting, the corresponding part 1 in the $\text{MSE}_{\infty}$ is the same as the part 1 in $\text{MSE}_{T}$ of \cref{eq:general vector finite horizon discounted}. The part 2 is approximated by using the decomposition \cref{eq:mse-infinite} and the fact that 
\begin{align*}
    V_{t,\infty}=\int_T^{\infty} \gamma^t \Ex{X(t)^\top Q X(t)}dt=\gamma^T C(r,T,\lambda_1,\cdots,\lambda_m)\,.
\end{align*}
To verify $V_{T,\infty}$ is $\mathcal{O}(\gamma^{T})$, one can find the bounds of $V_{T, \infty}$ by using the similar arguments in the above proof of \cref{eq:general vector finite horizon undiscounted} and the following inequality:
\begin{align*}
    &\int_0^t e^{(\lambda_i+\lambda_j)(t-s)}(t-s)^{\tilde{q}_i+\tilde{q}_j}\dd{s}\\
    &\quad\quad
    \leq t^{n-1}\int_0^t e^{(\lambda_i+\lambda_j)(t-s)}\dd{s}=\frac{t^{n-1}}{(\lambda_i+\lambda_j)}\llr e^{(\lambda_i+\lambda_j)t}-1 \rrr.
\end{align*}
Then the components in $\int_T^{\infty} \gamma^t \Ex{X(t)X(t)^\top} \dd{t}$ is lower bounded by  $\int_T^{\infty}\frac{r^t}{(\lambda_i+\lambda_j)}\llr e^{(\lambda_i+\lambda_j)t}-1 \rrr \dd{t}$ and upper bounded by $\int_T^{\infty}\frac{r^t t^{n-1}}{(\lambda_i+\lambda_j)}\llr e^{(\lambda_i+\lambda_j)t}-1 \rrr \dd{t}$. By the \car{celebrated} approximation of incomplete gamma function when $T$ is large, we have 
\begin{align*}
    \int_T^{\infty}\frac{r^t t^{n-1}}{(\lambda_i+\lambda_j)}\llr e^{(\lambda_i+\lambda_j)t}-1 \rrr \dd{t} \approx \frac{r^T T^{n-1}}{(\lambda_i+\lambda_j)}\llr e^{(\lambda_i+\lambda_j)T}-1 \rrr\,.
\end{align*}
Followed by 
$$V_{T,\infty}=\tr{Q \int_T^{\infty} \gamma^t \Ex{X(t)X(t)^\top} \dd{t}},$$
we can obtain that $V_{T,\infty}=\gamma^T C(r,T,\lambda_1,\cdots,\lambda_m)$. 

This result leads to the fact that part 2 is
\begin{align*}
    &\left( C_8(\lambda_1, \cdots, \lambda_m,\gamma, T)+C_9(\lambda_1, \cdots, \lambda_m,\gamma, T)h\right)\sigma^4 \gamma^{2T}
       +\left(C_{10}(\lambda_1, \cdots, \lambda_m,\gamma, T)h^2\right.\\
       &\left.+ C_{11}(\lambda_1, \cdots, \lambda_m,\gamma, T)h^3\right) \sigma^4 \gamma^{T} 
       +C_{12}(\lambda_1, \cdots, \lambda_m, T) \sigma^4  h^4+\mathcal{O}(h^5)\,,
\end{align*} which coincides with the $\text{Var}(h\lambda_i)$ in the infinite-horizon discounted scalar case. The results in (3) then follows.
\end{proof}

\section{Complements to Numerical Simulations}

The LQR experiments were run on a MacBook pro with an i9 CPU and 16GB of RAM.

\subsection{Randomly-sampled Matrices} \label{app:random_matrices}

The matrices $A$ in \cref{fig:exp-n3a} and \cref{fig:exp-n3b}, corresponding to controlled multi-dimensional linear systems, are generated according to the procedure described below. It ensures the stability of the resulting system, i.e., that eigenvalues of the sampled matrix are negative.

The procedure works by random sampling an eigendecomposition. 
It starts with uniformly sampling two eigenvalues from disjoint, bounded intervals, $\lambda_1 \in [-1.5,-1.0)$ and $\lambda_3 \in (-1.0,-0.75]$. The final eigenvalue is set to be $\lambda_2 = -1.0$. Note that since all the eigenvalues sampled are negative, any matrix whose eigenvalues are $\lambda_1, \lambda_2, \lambda_3$ is said to be stable. 
Next, the eigenvectors are sampled randomly from the classical compact groups detailed in \citep{mezzadri2006generate}.
For that, we randomly sample an orthogonal matrix $L$ using a built-in \textsc{SciPy} \citep{2020SciPy-NMeth} routine, \textsc{ortho\_group.rvs}. Now let $\Lambda = \text{diag}(\lambda_1,\lambda_2,\lambda_3)$, the random, dense, stable matrix $A$ is obtained by computing the product $A = L^\top \Lambda L$.

\subsection{Trade-off in LQR with Scaled Identity Matrices}
\label{app:mse-dense-diagonalizable}

In order to better understand the transition from scalar to vector case in the trade-off of the \edit{step-size} due to the MSE in \cref{sec:MSE}, numerical experiments for the case of identity matrices scaled by a constant are provided in this section. This allows to characterise the role played by the eigenvalues, with respect to the parameter $a$ in the scalar case.

\begin{figure}[h]
\centering
 \subfigure[$n=3$, finite horizon]{
\includegraphics[width=0.4\textwidth]{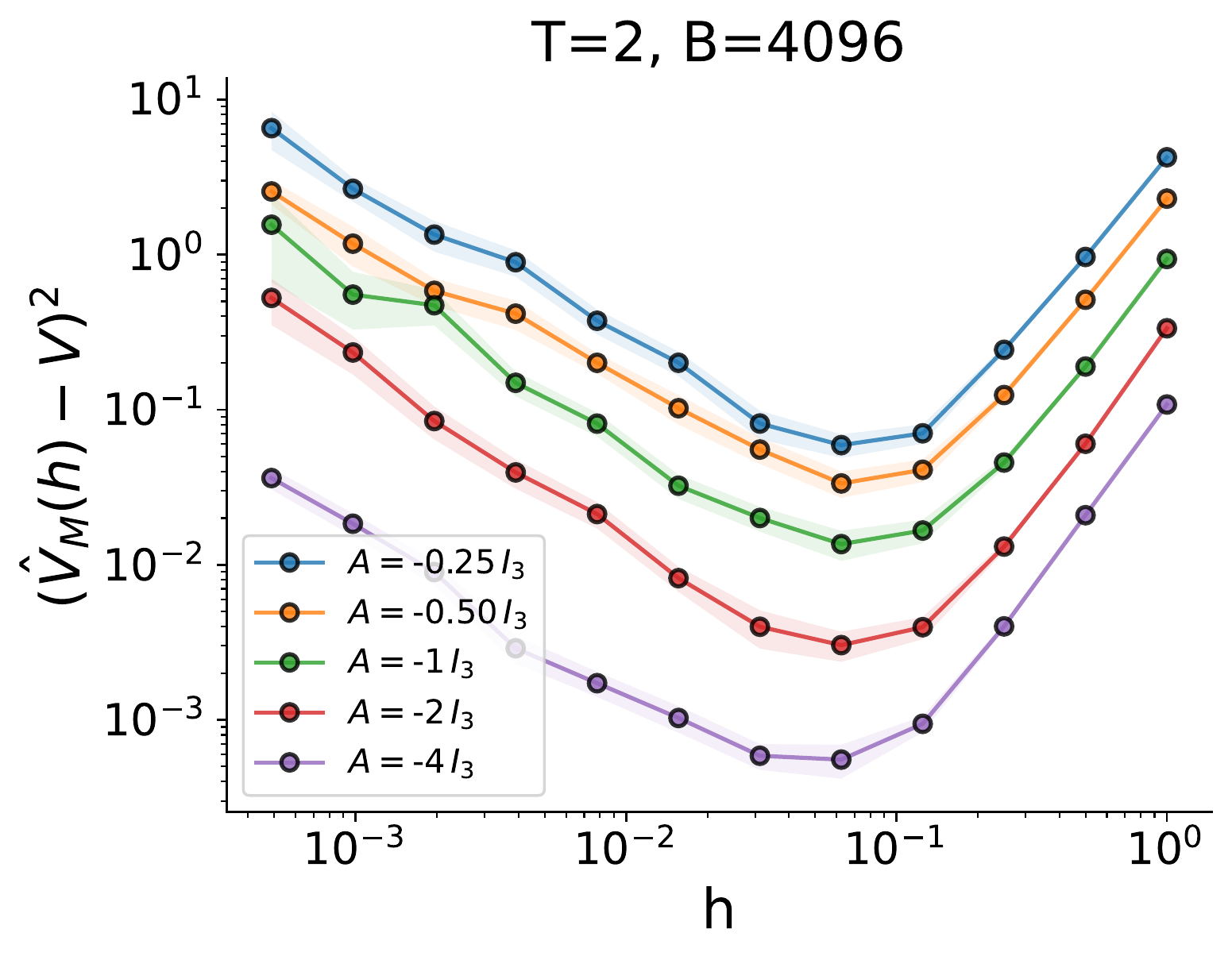}\label{fig:mse-random-finite-hor}
}
\subfigure[$n=3$, infinite horizon]{
\includegraphics[width=0.4\textwidth]{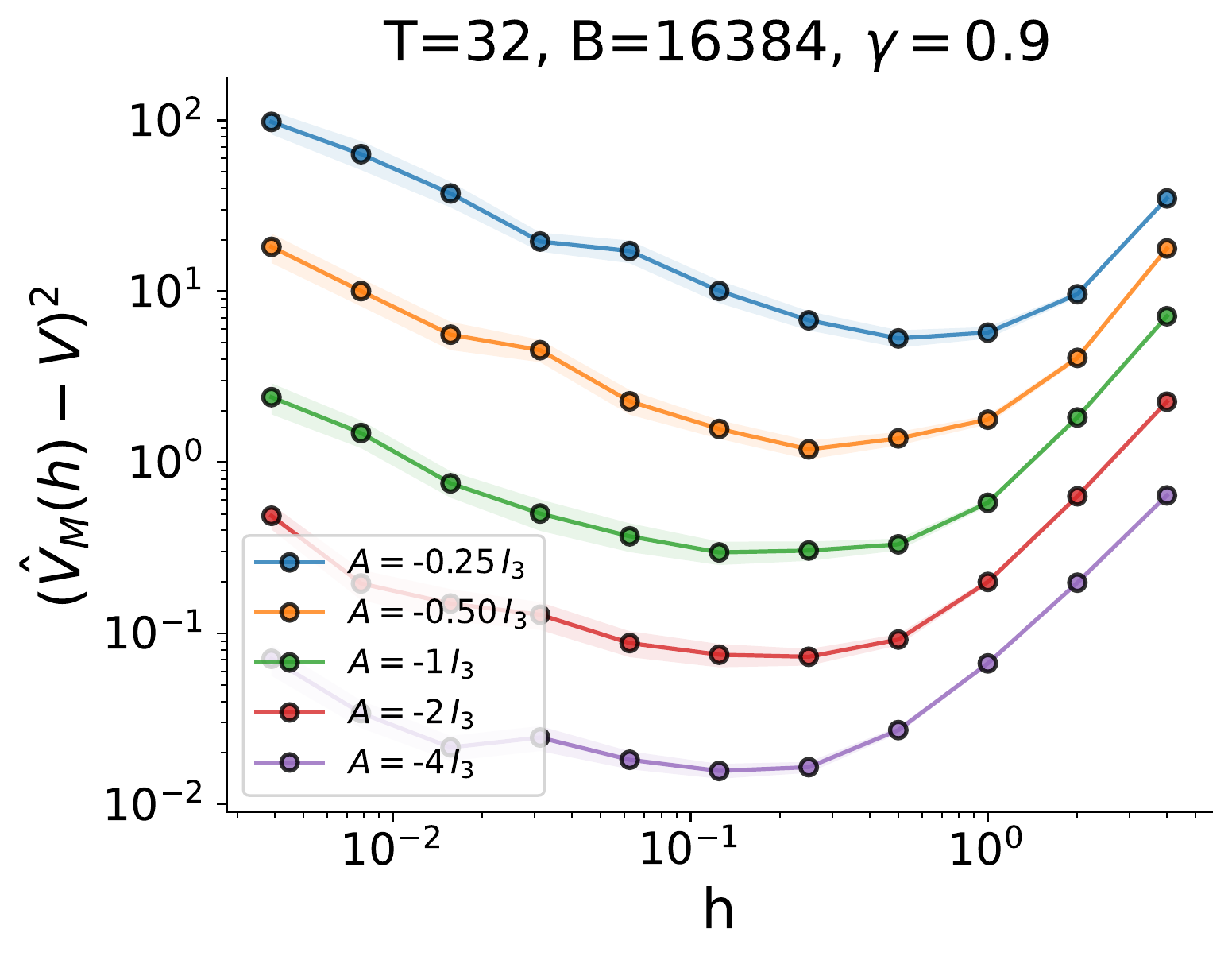}\label{fig:mse-random-infinite-hor}
}
\caption{Mean-squared error trade-off in LQR with scaled identity matrices A. The plots show the dependence
of the optimal step-size on the eigenvalues of the linear system\edit{s in both finite and infinite horizon settings. }
The same trend of the scalar case w.r.t.  the parameter $a$ can be observed here.
}
\label{fig:mse-random}
\end{figure}

As expected, results in \cref{fig:mse-random} suggest that the trade-off for scaled identity matrices is very similar to the one in the scalar case. In this simple case the \edit{eigenvalues that are identical on all dimensions} play the same role as the parameter $a$, i.e., by decreasing them, the trade-off shifts towards a smaller value for the optimal \edit{step-size}, as suggested also by \cref{fig:exp-n1-vary-a}.

\subsection{Comparison of Empirical and Analytical MSEs in One-dimensional Langevin Systems}\label{app:verify-fig1}
\cref{fig:mse-overlay} compares the empirical MSE in \cref{fig:exp-n1,fig:exp-n1-vary-a} with the analytical MSEs, including both the exact and approximate versions. We can observe that the empirical results are consistent with the analysis. The error in approximate MSE becomes noticeable for large $h$ especially when $h > 1$, as expected. Nevertheless, the optimal $h^*$ remains consistent.
\begin{figure}[h]
\centering
\subfigure[exact MSE, vary $B$]{
\includegraphics[width=0.35\textwidth]{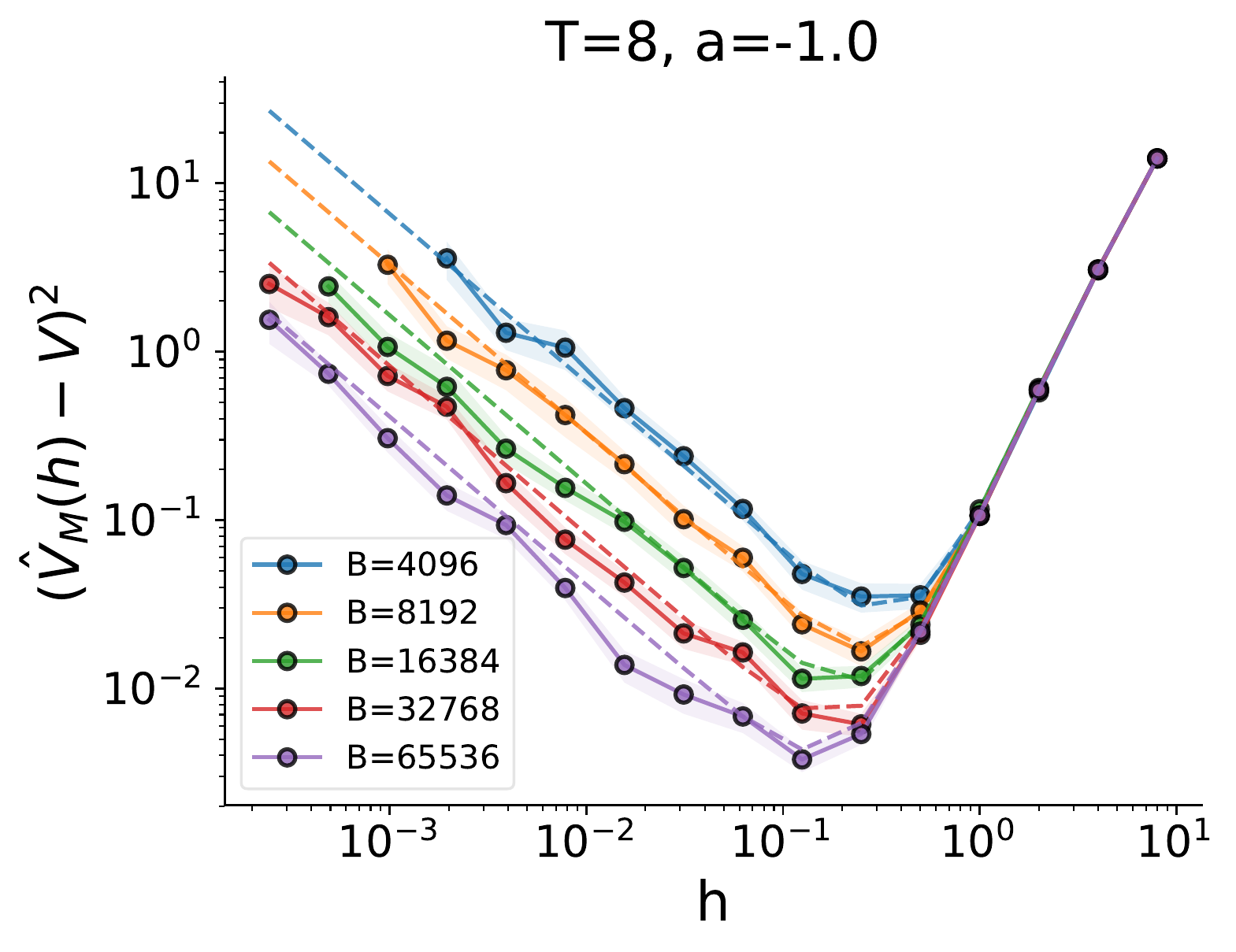}\label{fig:overlay-exact}
}
\subfigure[approximate MSE, vary $B$]{
\includegraphics[width=0.35\textwidth]{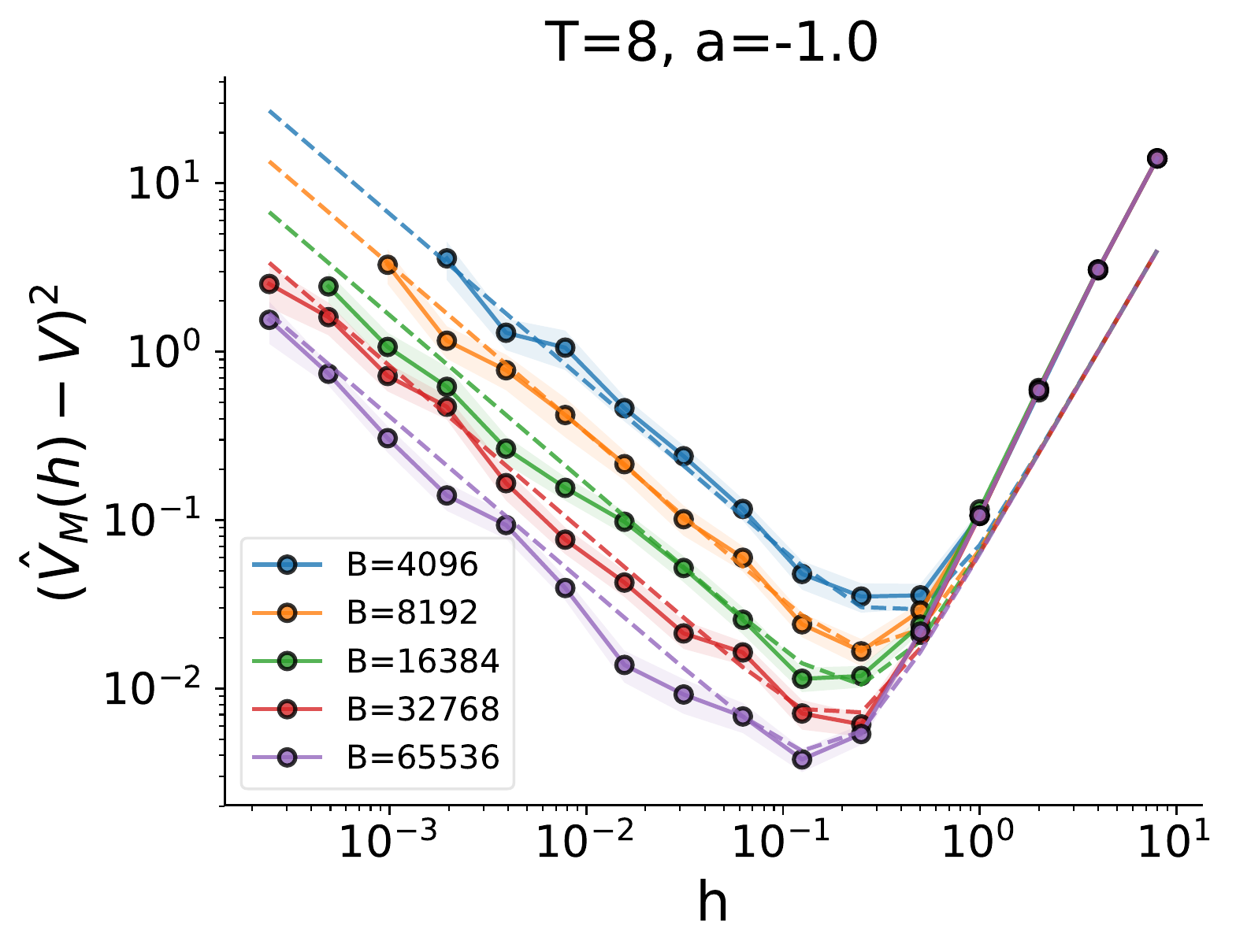}\label{fig:overlay-appr}
}\\
\subfigure[exact MSE, vary $a$]{
\includegraphics[width=0.35\textwidth]{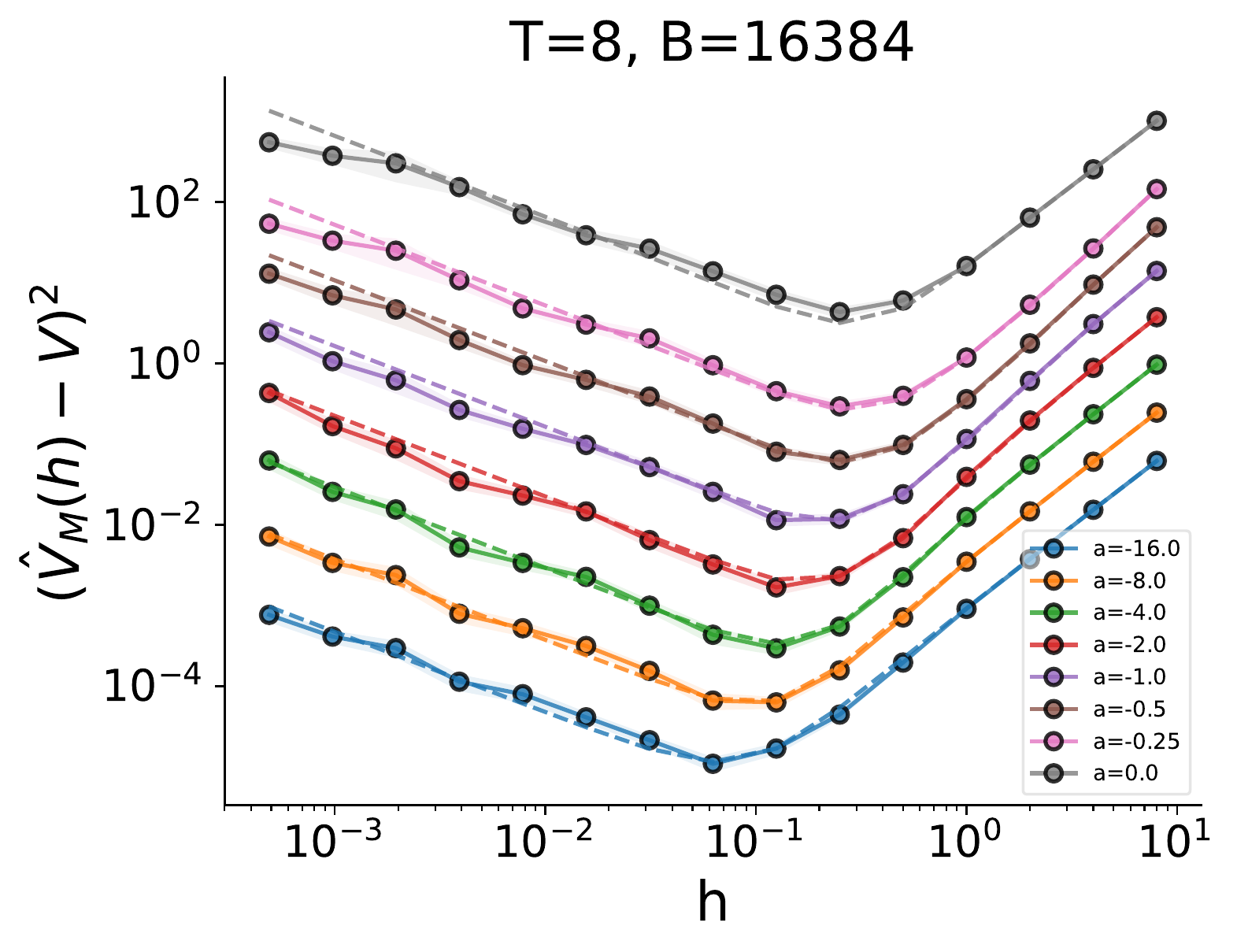}\label{fig:overlay-a-exact}
}
\subfigure[approximate MSE, vary $a$]{
\includegraphics[width=0.35\textwidth]{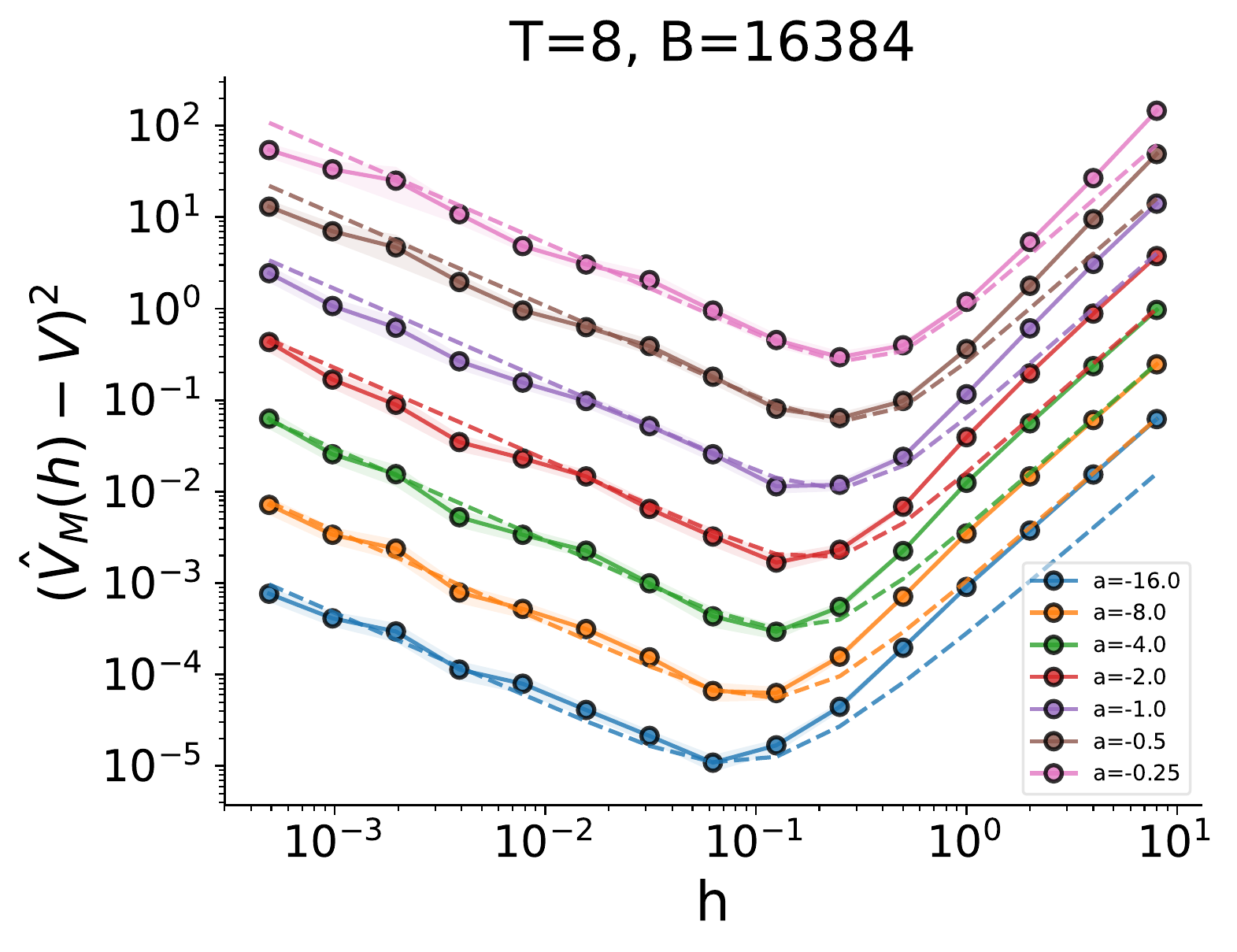}\label{fig:overlay-a-approx}
}
\caption{Comparison between the empirical (solid) and analytical MSEs (dashed) in one-dimensional Langevin systems.}
\label{fig:mse-overlay}
\end{figure}

\subsection{Implementation Details for Nonlinear Systems Experiments}\label{app:experiment}

We summarize the environment-specific parameters in \cref{tb:env} for the nonlinear-system experiments.

\begin{table}[hbt]
 \centering
  \caption{The setup of the environments. *: In MuJoCo environments in OpenAI Gym, $\delta t = \text{timestep} * \text{frame\_skip}$, where `timestep' (the step size of the MuJoCo dynamics simulation) and `frame\_skip' (the algorithmic step size) are two pre-set quantities in their implementation. In our setup, $\delta t=0.001 \text{ seconds}$ for the proxies to the continuous-time environments. }
 \label{tb:env}
 \smallskip
  \begin{tabular}{llll}
    \toprule 
    Environment & Episode Length &  Original* & Horizon $T$ \\
     & (steps) & $\delta t$ & (seconds) \\
    \midrule 
    Pendulum  & 200 & 0.05 & 10  \\
    BipedalWalker  & 500 & 0.02 & 10 \\
    InvertedDoublePendulum  & 1000 & 0.05 & 50 \\
    Pusher  & 1000 & 0.05 & 50   \\
    Swimmer  & 1000 & 0.04 & 40  \\
    Hopper  & 1000 & 0.008 & 8   \\    
    HalfCheetah  & 1000 & 0.05 & 50   \\
    Ant  & 1000 & 0.05 & 50   \\
    \bottomrule
  \end{tabular} \medskip \\
    \begin{tabular}{lll}
    \toprule 
    Environment  &  $B_0$ & $h$ \\
    \midrule 
    Pendulum  &  $10k$ &  [0.001, 0.002, 0.004, 0.01, 0.02, 0.04, 0.1] \\
    BipedalWalker  & $10k$ & [0.001, 0.002, 0.004, 0.01, 0.02, 0.04, 0.1]    \\
    InvertedDoublePendulum  & $25k$&  [0.002, 0.004, 0.01, 0.02, 0.04, 0.1, 0.2, 0.4, 1]   \\
    Pusher  & $25k$ & [0.002, 0.004, 0.01, 0.02, 0.04, 0.1, 0.2, 0.4, 1]     \\
    Swimmer  &  $20k$ & [0.002, 0.004, 0.01, 0.02, 0.04, 0.1] \\
    Hopper & $8k$  & [0.001, 0.002, 0.004, 0.01, 0.02, 0.04, 0.1]  \\   
    HalfCheetah  & $25k$ & [0.002, 0.004, 0.01, 0.02, 0.04, 0.1, 0.2, 0.4]  \\
    Ant  & $25k$  & [0.002, 0.004, 0.01, 0.02, 0.04, 0.1, 0.2, 0.4]  \\
    \bottomrule
  \end{tabular}
\end{table}

\paragraph{Compute Resources}

For training the stable policy for non-mujoco environments, we used a  server with one GTX 1080. 
The training for MuJoCo environments was conducted on a cluster, using a single V100 Volta for each environment.  
For inference, since we need to run many episodes, we scaled things up on a cluster of CPUs.

\paragraph{Training Details}

We used the CDAU algorithm described in \citep{tallec2019discreteq} to train the policies since we find this algorithm to be robust to time discretization, especially when the environment runs at $\delta t=0.001$.
We closely followed the hyper-parameters setup described in Section 2 in the Appendix of \citep{tallec2019discreteq}. For more details, please refer to their paper, and the code in the supplementary materials.

\paragraph{Inference Details}
We run the policy for 300$k$ episodes at the finest time discretization $\delta t=0.001$ (600$k$ in InvertedDoublePendulum and Pusher) and store the reward sequences.
The number of episodes is chosen to be sufficient for 30 runs and the largest  possible number of trajectories (depending on $h$ and the data budget $B$).
These data get down-sampled offline for different choices of $h$. The episodes are randomly shuffled when we vary $B$.

\end{document}